\documentclass{article}

\usepackage{microtype}
\usepackage{graphicx}
\usepackage{subcaption}
\usepackage{booktabs} 

\usepackage{hyperref}

\usepackage[accepted]{icml2026_testing}

\usepackage{amsmath}
\usepackage{amssymb}
\usepackage{mathtools}
\usepackage{amsthm}

\usepackage{multirow, bm}
\usepackage{float}
\usepackage{make cell}
\usepackage[capitalize,noabbrev]{cleveref}

\usepackage[table]{xcolor}
\definecolor{gr}{HTML}{00AA00}
\definecolor{go}{HTML}{FFD700}   
\definecolor{or}{HTML}{FF8C00} 

\colorlet{grL}{gr!20}
\colorlet{goL}{go!30}
\colorlet{orL}{or!30}

\usepackage[capitalize,noabbrev]{cleveref}

\theoremstyle{plain}
\newtheorem{theorem}{Theorem}

\DeclareMathOperator*{\argmax}{argmax}

\newtheorem{lemma}[theorem]{Lemma}

\newtheorem{corollary}[theorem]{Corollary}
\newtheorem{definition}[theorem]{Definition}

\newtheorem{remark}{Remark}
\newtheorem{factor}{Factor}
\usepackage{bbding}
\usepackage{bm}
\usepackage{epstopdf}
\usepackage[font=small,labelfont=bf]{caption}

\def \x {\bm{x}}

\def \y {\bm{y}}

\def \z {\bm{z}}

\def \H {\bm{H}}

\def \mmu {\bm{\mu}}
\def \mdelta {\bm{\delta}}
\def \mzeta {\bm{\zeta}}
\def \mpi {\bm{\pi}}
\def \mPi {\bm{\Pi}}
\def \bR {\mathbb{R}}
\def \bP {\mathbb{P}}

\def \bQ {\mathbb{Q}}
\def \bI {\mathbb{I}}

\def \cF {\mathcal{F}}

\def \cS {\mathcal{S}}
\def \cT {\mathcal{T}}

\def \cH {\mathcal{H}}

\def \cX {\mathcal{X}}
\def \cN {\mathcal{N}}
\def \cG {\mathcal{G}}

\usepackage[textsize=tiny]{todonotes}

\icmltitlerunning{Uncertainty-aware Statistical Adversarial Detection}

\begin{document}

\twocolumn[
  \icmltitle{USAD: Uncertainty-aware Statistical Adversarial Detection}


  \icmlsetsymbol{equal}{*}

  \begin{icmlauthorlist}
    \icmlauthor{Zhijian Zhou}{equal,zzz}
    \icmlauthor{Xunye Tian}{equal,zzz}
    \icmlauthor{Jiacheng Zhang}{equal,zzz}
    \icmlauthor{Zesheng Ye}{zzz}\\
    \icmlauthor{Yiyi Guo}{zzz}
    \icmlauthor{Donghao Zhang}{zzz}
    \icmlauthor{Liuhua Peng}{zzz}
    \icmlauthor{Feng Liu}{zzz}
  \end{icmlauthorlist}

  \icmlaffiliation{zzz}{The University of Melbourne}

  \icmlcorrespondingauthor{Feng Liu}{fengliu.ml@gmail.com}
  \icmlcorrespondingauthor{Zesheng Ye}{zesheng.ye@unimelb.edu.au}
  \icmlkeywords{Machine Learning, ICML}

  \vskip 0.3in
]



\printAffiliationsAndNotice{\icmlEqualContribution}

\renewcommand{\contentsname}{Appendix - Table of Contents}
\addtocontents{toc}{\protect\setcounter{tocdepth}{-1}}

\begin{abstract}
\emph{Statistical adversarial detection} (SAD) treats detection as a two-sample test. Given a reference set of clean examples (CEs) and a batch of queries, potentially containing an unknown mixture of CEs and adversarial examples (AEs), SAD decides whether the query distribution drifts away from the CE distribution while controlling the false-alarm rate.
Existing SAD-based methods mainly use \emph{maximum mean discrepancy} (MMD) to measure the distributional discrepancy.
However, MMD's distributional properties limit its ability to capture characteristic uncertainty patterns of AEs that are crucial for detection: AEs typically exhibit abnormal feature spread (i.e., global uncertainty) and instability under perturbations (i.e., local uncertainty).
To close the gap, we propose \emph{Uncertainty-aware Statistical Adversarial Detection} (USAD), which explicitly captures these uncertainty patterns with two new statistics: (1) \underline{\textbf{V}ariance \textbf{D}iscrepancy} (VD), which measures the difference in feature spread between AEs and CEs to capture global uncertainty differences, and (2) \underline{\textbf{P}erturbation-based \textbf{C}ovariance \textbf{D}iscrepancy} (PCD), which compares feature covariance under Gaussian perturbations to capture local uncertainty differences.
By aggregating VD and PCD, USAD achieves superior detection performances over baseline methods against various adversarial attacks, highlighting the importance of considering characteristic behaviors of AEs for effective SAD.
Our code is available at: \url{https://github.com/tmlr-group/USAD}. 
\end{abstract}    
\vspace{-1.5em}
\section{Introduction}\label{sec:intro}
Defending against \emph{adversarial examples} (AEs) remains a long-standing challenge in deep learning \citep{szegedy2014intriguing, goodfellow2015explaining, madry2018towards, carlini2017towards, zhang2019theoretically, croce2020reliable, Gao:Liu:Zhang:Han:Liu:Niu:Sugiyama2021, cao2021invisible, jing2021too, zhang2024improving, sun2025sample, bo2025trustworthy}.
The most lightweight defense against adversarial attacks is adversarial detection, which identifies AEs in input data before they reach target~systems~\citep{ma2018characterizing, deng2021libre, zhang2023detecting, Gao:Liu:Zhang:Han:Liu:Niu:Sugiyama2021}.
Existing methods often consider adversarial detection as a binary classification task \citep{ma2018characterizing, deng2021libre, zhang2023detecting}. They typically train a binary detector tailored to specific classifiers or attack types, which limits the effectiveness against unknown attacks \citep{tramer2022detecting, bryniarski2022evading}.
Although a study leverages diffusion models~\citep{song2019generative, song2021score, huang2021a} to remove such dependency \citep{zhang2023detecting}, it is vulnerable to diffusion-based adaptive attacks~\citep{xue2023diffusion, kang2023diffattack}.
These limitations motivate us to pursue a model- and attack-agnostic detection framework.

\emph{Statistical adversarial detection} (SAD) offers a complementary direction by formulating adversarial detection as a two-sample hypothesis test: given a reference pool of CEs and a set of queries, SAD asks whether the query {\em distribution} has drifted away from the clean one (see Section~\ref{sec:sadd_revisit} for formal setup).
Concretely, the defender needs to maintain a reference set $X$ of CEs and, for a window $Y$ of incoming queries, compute a test statistic $\cT(X, Y)$ that measures their distributional discrepancy~\citep{Gao:Liu:Zhang:Han:Liu:Niu:Sugiyama2021}.
A permutation test then determines a threshold $t_\alpha$ such that $\Pr (T(X, Y) > t_\alpha | \text{Y is clean} ) \leq \alpha$, providing guarantees on statistically-controlled Type-I error, which is often desired in high-stakes contexts.
Operationally, this corresponds to maintaining a queue of recent queries and running tests whenever a window has accumulated, or in a sliding-window over the stream; each such window may contain an {\em unknown mixture} of CEs and AEs~\citep{zhang2025one}.
This way SAD acts as a statistical monitoring layer: once a query set is flagged as suspicious, the defending system can trigger other defenses or route the traffic for human inspection, all while keeping the rate of spurious alarms under control~\citep{Gao:Liu:Zhang:Han:Liu:Niu:Sugiyama2021}.
See Appendix~\ref{app:discussion} for further discussions on the practicality of SAD.

\begin{figure*}[t]
    \vspace{-0.5em}
    \centering
    \begin{subfigure}{0.495\linewidth}
        \centering
        \includegraphics[width=0.9\textwidth]{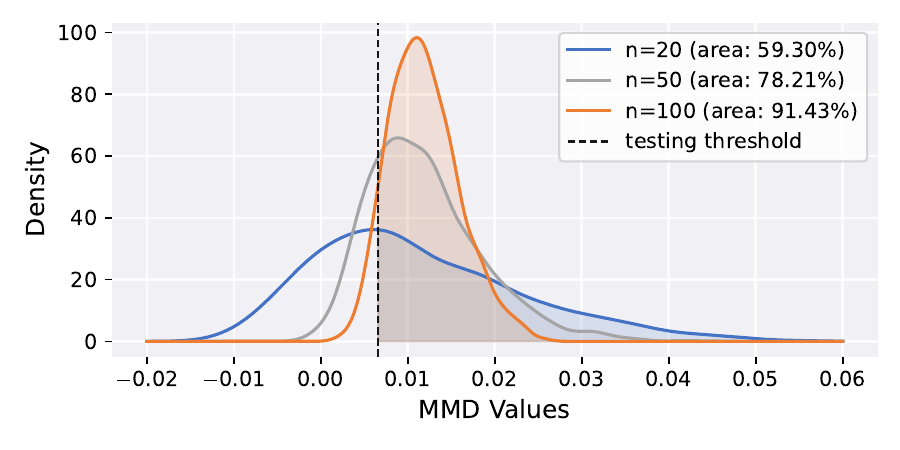}
        \caption{Distribution of MMD values for AE batch sizes.}
        \label{fig:mmd_test_power}
    \end{subfigure}
    \hfill
    \begin{subfigure}{0.495\linewidth}
        \centering
        \includegraphics[width=0.9\textwidth]{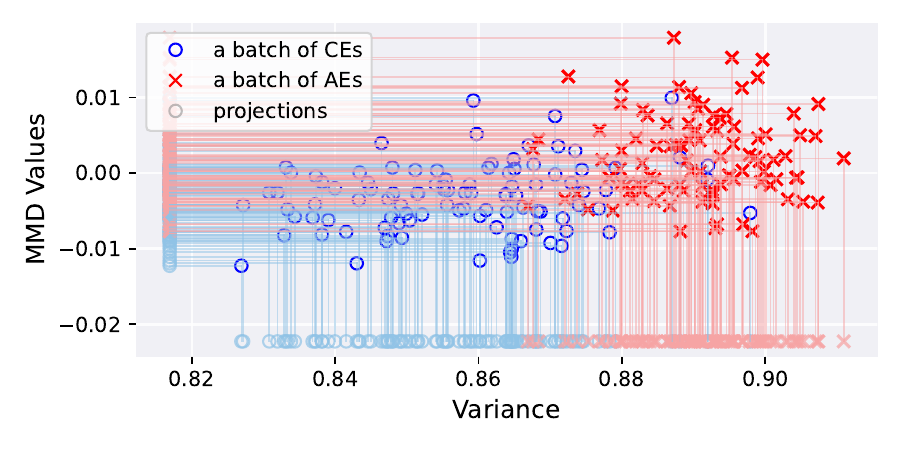}
        \caption{MMD–variance relationship for CE and AE batches.}
        \label{fig:mmd_variance}
    \end{subfigure}
    \caption{Statistical characteristics of MMD-based SAD using semantic features \citep{Gao:Liu:Zhang:Han:Liu:Niu:Sugiyama2021}.
    \textbf{(a):} Kernel density estimates of MMD values for three AE batch sizes ($n=\{20,50,100\}$). The vertical dashed line denotes the test threshold. The shaded regions to the right of the threshold indicate detection power, quantified by the area percentages shown in the legend (91.43\% for $n=100$, 78.21\% for $n=50$, and 59.30\% for $n=20$). It clearly illustrates that MMD-based SAD exhibits diminishing detection power as the batch size decreases. 
    \textbf{(b):} Sensitivity of MMD to uncertainty differences between AEs (red crosses) and CEs (blue circles), with projections onto the x- and y-axes for comparison. Despite the markedly higher variance of AEs and their clear separation from CEs, the MMD values exhibit substantial overlap along the y-axis, indicating the insensitivity of MMD-based SAD to variance differences.}
    \vspace{-1em}
    \label{fig: motivation}
\end{figure*}

Current SAD mainly use \emph{maximum mean discrepancy} (MMD) \citep{Gretton:Borgwardt:Rasch:Scholkopf:Smola2012} as the test statistic $\cT(X, Y)$ \citep{Gao:Liu:Zhang:Han:Liu:Niu:Sugiyama2021, zhang2025one}.
They have shown strong performance against {\em unknown} and even {\em adaptive} attacks, while inheriting rigorous false-alarm control.
However, existing MMD-based SAD methods are {\em sample-inefficient}: 
they often require query windows with a sufficiently large sample size (and non-trivial fraction AE) for the {\em test statistic} to reliably identify the distributional discrepancy with high test power.
As Figure \ref{fig:mmd_test_power} shows, we confirm that the test power of MMD-based SAD degrades rapidly as the window size of queries shrinks.
This limits practical deployment, as waiting for large query sets is often unrealistic and attack volume is inherently uncontrollable.

To understand this limitation, we examine which distributional properties the {\em test statistic} captures.
Since MMD captures mean-embedding shifts (see Eq.~\eqref{eq: mmd}), we hypothesize that it is insensitive to geometric properties, such as feature spread~\citep{ma2018characterizing} and local instability~\citep{Li:Chen:Wang:Carin2019}, that distinguish AEs from CEs.
These geometric cues are closely related to \emph{uncertainty}: AEs often reside near decision boundaries and exhibit higher feature variance and predictive uncertainty than CEs \citep{Feinman:Curtin:Shintre:Gardner2017}.
To validate this hypothesis, we test whether MMD captures AE--CE uncertainty differences, using \emph{variance} as the uncertainty measure.
As shown in Figure~\ref{fig:mmd_variance}, CE and AE MMD values largely overlap despite substantial variance differences, indicating that MMD is insensitive to this uncertainty measurement.
This finding reveals a fundamental mismatch: \emph{MMD is not tailored for SAD, and therefore, MMD-based SAD methods are insensitive to distinguish AEs from CEs when AEs display their characteristic uncertainty patterns rather than mean embedding shifts.}
This motivates new \emph{test statistics} directly sensitive to AE uncertainty patterns.

To close the gap, this paper proposes \emph{\textbf{U}ncertainty-aware \textbf{S}tatistical \textbf{A}dversarial \textbf{D}etection} (USAD), which explicitly accounts for two well-established uncertainty-induced patterns, namely
(1) AEs are displaced off the clean data manifold, causing abnormal spread in their semantic feature distributions \citep{Feinman:Curtin:Shintre:Gardner2017, ma2018characterizing} (i.e., \emph{global uncertainty}), and
(2) norm-bounded AEs are generated by small perturbations that cross local decision boundaries~\cite{Moosavi:Fawzi:Frossard2016,fawzi2016robustness} and often exploit brittle non-robust features~\cite{Ilyas:Santurkar:Tsipras:Engstrom:Tran:Madry2019}, leading to atypical local response patterns under added perturbations~\cite{huang2019model,roth2019odds}  (i.e., \emph{local uncertainty}), with two new {\em test statistics}:
(1) \underline{\emph{\textbf{V}ariance \textbf{D}iscrepancy}} (VD), which measures the difference in feature spread between AEs and CEs (see Section \ref{sec:vd}), and (2) \underline{\emph{\textbf{P}erturbation-based \textbf{C}ovariance \textbf{D}iscrepancy}} (PCD), which compares feature covariance under Gaussian perturbations to capture local stability differences (see Section \ref{sec:pcd}).
VD is designed to capture \emph{global uncertainty differences} arising from manifold displacement, whereas PCD seeks to capture \emph{local uncertainty differences} induced by perturbation sensitivity.
To capture both global and local uncertainty differences, we follow~\citet{zhou2025dual} and aggregate VD and PCD while accounting for their mutual dependencies (see Section \ref{sec: dual}).
The false alarm rate of USAD is theoretically controlled (see Section~\ref{sec: theo}).

Section \ref{sec: experiment} evaluates the effectiveness of USAD on benchmark image datasets such as CIFAR-10 \citep{cifar} and ImageNet-1K \citep{deng2009imagenet}. 
Specifically, USAD \emph{consistently} outperforms baseline SAD methods by a notable margin against various unseen adversarial attacks under different threat models, including different norms (e.g., $\ell_\infty$ and $\ell_2$), perturbation budgets (e.g., $\epsilon$ from $8/255$ to $1/255$), batch sizes (e.g., from 80 to 10), and ratio of AEs in the query batch (e.g., from $100 \%$ to $20\%$).
Notably, even with a query batch size as small as $|Y|=10$, our method achieves test power essentially equal to 1 at $\epsilon = 4/255$, outperforming baselines by at least 24.8\%. This marks a substantial step toward overcoming the sample-size limitations that constrain detection power in hypothesis-testing-based methods. 
More importantly, our method remains robust against a well-designed adaptive attack.

Our contributions are: (1) We identify that MMD-based SADs fail to capture the distributional uncertainty differences crucial for distinguishing AEs from CEs; (2) To address this, we propose \emph{\textbf{U}ncertainty-aware \textbf{S}tatistical \textbf{A}dversarial \textbf{D}etection} (USAD) with new test statistics, i.e., \emph{\textbf{V}ariance \textbf{D}iscrepancy} (VD) and \emph{\textbf{P}erturbation-based \textbf{C}ovariance \textbf{D}iscrepancy} (PCD), to explicitly capture \emph{global} and \emph{local} uncertainty differences between AEs and CEs;
(3) Theoretically, we prove that USAD guarantees valid false-alarm rate control. Empirically, we show that USAD consistently outperforms strong baseline methods across diverse unseen, transfer, and adaptive attacks, under small query window sizes and when AEs are sparsely {\em mixed} with CEs within each query batch.
\vspace{-0.5em}
\section{Statistical Adversarial Detection}\label{sec:pre}
We begin by formalizing {\em statistical adversarial detection}~(SAD) as a hypothesis testing task for distinguishing {\em adversarial examples}~(AEs) from {\em clean examples}~(CEs) by measuring their distributional discrepancy (\cref{sec:sadd_revisit}).
We then identify limitations of existing SAD practices and motivate the need for new measures (\cref{sec:motivation}).
\vspace{-0.5em}
\subsection{Detecting AEs from a Statistical Perspective}\label{sec:sadd_revisit}
\noindent\textbf{Adversarial Attacks.} The algorithms for generating AEs are commonly referred to as adversarial attacks \citep{szegedy2014intriguing, goodfellow2015explaining}. 
Given a well-trained classifier $f: \mathcal{X} \to \mathcal{C}$ on a dataset $D = \{(\x_i, c_i)\}^N_{i=1}$ with $\x_i$ being a sample from the input space $\cX\subseteq\bR^d$ and $c_i$ being its ground-truth label defined in a label space $\mathcal{C}$, an AE $\y_i$ regarding $\x_i$ with perturbation $\mzeta_i$ can be generated by solving the optimization:
\vskip -0.2in
\begin{equation}
    \y_i = \x_i +\argmax_{\mzeta_i}\mathcal{L}(f(\x_i + \mzeta_i), c_i), \text{~s.t.~} ||\mzeta_i||_p \leq \epsilon,
\label{eq: ae}
\end{equation}
\vskip -0.08in
where $\epsilon$ is the perturbation budget, $||\cdot||_p$ is the $\ell_p$-norm (e.g., $\ell_\infty$ or $\ell_2$) and $\mathcal{L}$ is an objective function.

Eq.~\eqref{eq: ae} can be solved by many methods to generate AEs. For example, the \emph{fast gradient sign method} (FGSM) is a one-step attack that perturbs clean data in the direction of the loss gradient \citep{goodfellow2015explaining}.
Building on FGSM, the \emph{basic iterative method} (BIM) applies iterative perturbations along the gradient direction, updating the input at each step to generate stronger AEs \citep{kurakin2017adversarial}.
\emph{Projected gradient descent} (PGD) further enhances BIM by introducing random initialization prior to iterative gradient-based updates \citep{madry2018towards}.
Beyond non-targeted attacks, the \emph{Carlini \& Wagner} (C\&W) attack generates targeted AEs by optimizing a carefully designed objective function that explicitly encourages misclassification toward a chosen label \citep{carlini2017towards}.
\emph{AutoAttack} (AA) \citep{croce2020reliable} aggregates multiple attacks, which include three non-targeted white-box attacks \citep{croce2020reliable, croce2020minimally} and one targeted black-box attack \citep{andriushchenko2020square}, which makes AA widely adopted for evaluating adversarial robustness.

\smallskip
\noindent
\textbf{SAD Problem Setup.}
Given a reference set of CEs $X = \{\x_j \}_{i=1}^{n} \sim \bP$ and a query set $Y = \{\y_j \}_{j=1}^{m}$ drawn from some distribution $\bQ$, we aim to decide whether the queries $Y$ follow the same distribution as CEs.
Under the null hypothesis $\H_0$, {\em all} queries are clean and $\bP = \bQ$.
Under the alternative $\H_1$, at least a {\em non-zero fraction} of queries in $Y$ are AEs generated by {\em unseen} adversarial attacks against $f$.
SAD therefore tests
\vskip -0.22in
\begin{equation}\label{eq:sadd_as_ht}
    \H_0: \bP = \bQ\ \ \ \text{and}\ \ \ \H_1: \bP \neq \bQ,
\end{equation}
\vskip -0.12in
and reject $\H_0$ at significance level $\alpha$ when the chosen \underline{{\em test statistic}} exceeds a threshold, providing Type-I control. Here, Type-I error means falsely declaring a clean query set as adversarial.

\smallskip
\noindent
\textbf{SAD as Hypothesis Test.}
The SAD starts from designing a \underline{{\em test statistic}} $\cT(X, Y)$ to measure the discrepancy between $\bP$ and $\bQ$ (by observing $X$ and $Y$).
At a chosen level $\alpha$, $\cT(X,Y)$ is then compared against a \underline{{\em testing threshold}} $t_\alpha$ defined as the $(1 - \alpha)$-quantile of the {\em null distribution}, which characterizes the sampling behavior of $\cT(X, Y)$ under $\H_0$, satisfying $\Pr(\cT(X, Y) > t_\alpha | \H_0) = \alpha$.
It means that $\H_0$ is rejected (i.e., declare $Y$ adversarial) if $\cT(X, Y) > t_\alpha$.

\smallskip
As the null distribution is rarely known analytically, we typically estimate $t_\alpha$ using a \underline{{\em permutation test}}, which builds an empirical null distribution by randomly re-partitioning the pooled examples $X \cup Y$ into new sets $(X_{(r)}, Y_{(r)})$, and computing $\cT(X_{(r)}, Y_{(r)})$ for each permutation $r = 1, \dots, R$.
The empirical threshold $t_\alpha$ is then set to the adjusted $(1-\alpha)$-quantile of these permuted statistic values, ensuring the probability of falsely rejecting $\H_0$ is controlled at the chosen level $\alpha$ (see Appendix~\ref{app:perm} for details).

\smallskip
\noindent
\textbf{MMD as Test Statistic.}
Critically, the effectiveness of SAD relies on whether the chosen test statistic $\cT(X, Y)$ can capture the distributional discrepancy if $X$ indeed comes from a distribution different from $Y$ (i.e., $\bP \neq \bQ$).
Existing SAD practices~\citep{Grosse:Manoharan:Papernot:Backes:McDaniel2017,Carlini:Wagner2017,Gao:Liu:Zhang:Han:Liu:Niu:Sugiyama2021,zhang2025one} have predominantly employed the MMD~\citep{Gretton:Borgwardt:Rasch:Scholkopf:Smola2012}, which measures the distance between $\bP$ and $\bQ$ by mapping their elements to a \emph{reproducing kernel Hilbert space} (RKHS) and comparing the {\em kernel mean embeddings}~\citep{Muandet:Fukumizu:Sriperumbudur:Scholkopf2017} therein.
Formally, for a characteristic kernel $\kappa: \mathcal{X} \times \mathcal{X} \to \mathbb{R}$ with RKHS $\cH_\kappa$ and its corresponding map $\kappa(\cdot,\x)\in\cH_\kappa$, we have the kernel mean embeddings of $\bP$ and $\bQ$ given by $\mmu_\bP=E_{\x\sim\bP}[\kappa(\cdot,\x)]$ and $\mmu_\bQ=E_{\y\sim\bQ}[\kappa(\cdot,\y)]$, respectively.
The MMD-based SAD~\citep{Gao:Liu:Zhang:Han:Liu:Niu:Sugiyama2021, zhang2025one} then instantiates Eqn.\eqref{eq:sadd_as_ht} on testing the equality of such embeddings, namely $\H_0^{\text M}: \mmu_\bP = \mmu_\bQ$ versus $\H_1^{\text M}: \mmu_\bP \neq \mmu_\bQ$ via squared distance between $\mmu_\bP$ and $\mmu_\bQ$:
\vskip -0.25in
\begin{eqnarray}\label{eq: mmd}
\textnormal{MMD}^2(\bP,\bQ,\kappa)
&\triangleq& \|\mmu_\bP-\mmu_\bQ\|_{\cH_\kappa}^2\\
&=&E[\kappa(\x,\x')+\kappa(\y,\y')-2\kappa(\x,\y)],\nonumber
\end{eqnarray}
\vskip -0.12in
where $\x, \x' \stackrel{\text{i.i.d.}}{\sim} \bP$, $\y, \y' \stackrel{\text{i.i.d.}}{\sim} \bQ$, and $\|\cdot\|_{\cH_\kappa}^2 = \langle \cdot, \cdot \rangle_{\cH_\kappa}$ defines the inner product in RKHS $\cH_\kappa$.
In practice, we empirically estimate $\textnormal{MMD}^2(\bP,\bQ,\kappa)$ with samples $X$ and $Y$, using unbiased U-statistics~\cite{Gretton:Borgwardt:Rasch:Scholkopf:Smola2012} (Appendix~\ref{app:mmd}).
Recently, SAMMD~\cite{Gao:Liu:Zhang:Han:Liu:Niu:Sugiyama2021} showed that applying MMD to semantic features from the target classifier $f$, combined with proper {\em testing threshold}, can elevate the detection power of MMD-based SAD. 

\vspace{-0.5em}
\subsection{SAD Needs Uncertainty-Aware Statistics}\label{sec:motivation}
\textbf{When MMD-based SAD Fails.}
Despite maintaining Type-I error control, MMD-based SAD methods exhibit diminishing power as query size $|Y|$ decreases (Fig.~\ref{fig:mmd_test_power}).
Such small-sample weakness is problematic for SAD: practical defense requires early detection when only a few query examples have been observed.
It is often unrealistic to wait for large query sets to accumulate~\cite{zhang2025one}.
Moreover, this degradation persists even with carefully tuned kernel embeddings and thresholds~\cite{Gao:Liu:Zhang:Han:Liu:Niu:Sugiyama2021}, implying that the bottleneck is not merely kernel choice or threshold but in which distributional properties the {\em test statistic} itself targets.

MMD (and its variant SAMMD), by construction, is most sensitive to mean shifts $\|\mmu_\bP-\mmu_\bQ\|_{\cH_\kappa}^2$, capturing where distributions are centered (i.e., first-order moments) in the RKHS $\cH_\kappa$, yet providing less information about their internal structures, such as spread and local geometry~\citep{Harchaoui:Bach:Moulines2008,Zhou:Ni:Yao:Gao2023}.
These second-order properties, however, are precisely where adversarial attacks leave strong artifacts that can be valuable in distinguishing AEs.

\smallskip
\noindent
\textbf{Adversarial Artifacts as Distributional Uncertainty.}
AEs result from solving \eqref{eq: ae}, where adversarial perturbations are crafted to enforce AEs to cross target classifier $f$'s decision boundaries.
This exploits two geometric properties, leading to two well-established characteristic behaviors of AEs:
\begin{factor}[Global Uncertainty from Manifold Displacement]\label{factor:global_uncertainty}
    AEs are systematically pushed off clean data manifold~\cite{Feinman:Curtin:Shintre:Gardner2017}.
    This displacement alters the spread of AEs' semantic feature distribution~\cite{ma2018characterizing}. 
    Consequently, AEs may become more or less dispersed than CEs, without necessarily shifting the mean embedding, reflecting a shift in the \underline{global uncertainty} over the distribution.
\end{factor}
\begin{factor}[Local Uncertainty from Perturbation Sensitivity]\label{factor:local_uncertainty}
    AEs tend to reside near classifier $f$'s decision boundaries and exploit non-robust features~\cite{Moosavi:Fawzi:Frossard2016, Ilyas:Santurkar:Tsipras:Engstrom:Tran:Madry2019}, making them sensitive to local isotropic perturbations.
    Under random Gaussian noises, AEs show larger and more directionally varied feature changes than CEs~\cite{Li:Chen:Wang:Carin2019}.
    This instability marks elevated \underline{local uncertainty} at individual examples.
\end{factor}

\smallskip
\noindent
\textbf{Bridging the Gap.}
These findings suggest a practical sensitivity gap in existing MMD-based SAD methods.
MMD compares distributions through differences between their kernel mean embeddings, which can be viewed as first-order moment differences in the chosen RKHS.
Although a characteristic kernel enables MMD, in principle, to distinguish any two distributions, its finite-sample power still depends on the kernel choice and on how strongly a particular distributional change is reflected in the empirical mean-embedding difference.
Consequently, existing MMD-based detectors may be less effective when adversarial artifacts appear as second-order structural changes (i.e., uncertainty pattern shifts) without substantially moving mean embeddings of AEs.
This calls for new discrepancy measures that are directly sensitive to these changes in distributional uncertainty caused by adversarial attacks.
We introduce two such {\em test statistics} to close the gap.
\vspace{-0.5em}
\section{Uncertainty-Aware Test Statistics}
We now develop {\em test statistics} that target how adversarial perturbations affect uncertainty in the classifier $f$'s semantic feature space identified in \cref{sec:pre}.

For clarity, we denote $f(\cdot) \in \mathcal{F} \subseteq \mathbb{R}^q$ as semantic features of an input $\cdot \in \mathcal{X}$ extracted by $f$, where $\mathcal{F}$ is a $q$-dimensional semantic feature space (often from the penultimate layer of $f$).
\cref{sec:vd} introduces {\em variance discrepancy}~(VD) that quantifies {\em global uncertainty} by comparing within-distribution variances, detecting altered concentration in query distribution $\bQ$ relative to clean distribution $\bP$.
\cref{sec:pcd} introduces {\em perturbation-based covariance discrepancy}~(PCD) that measures whether query examples (suspected to be AEs) exhibit higher {\em local uncertainty} to small, random isotropic noise compared to CEs, encoded by Gaussian-noise-induced covariance at each example.

\begin{figure*}[t]
    \begin{center}
    \centerline{\includegraphics[width=0.9\linewidth]{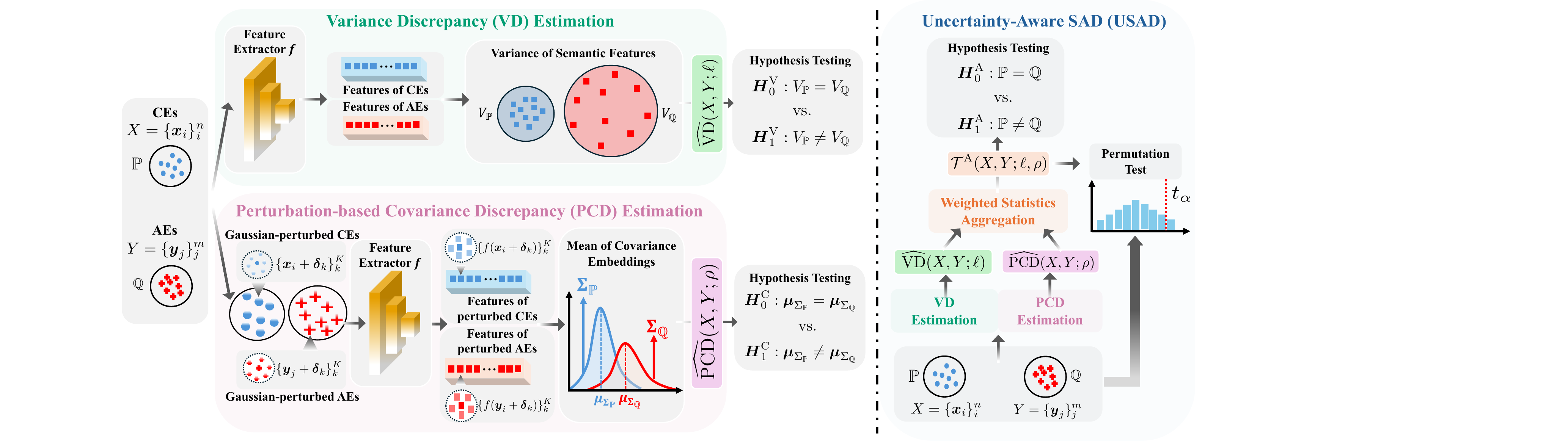}}
    \vspace{-0.8em}
    \caption{\textit{\textbf{U}}ncertainty-aware \textit{\textbf{S}tatistical \textbf{A}dversarial \textbf{D}etection}~(USAD). The semantic features of CEs $X \sim \bP$ and queries $Y \sim \bQ$ (suspected AEs) are extracted by using the penultimate layer of classifier $f$. From these features, USAD estimates (i) {\em variance discrepancy}~(VD) measuring shifts in feature-spread between $X$ and $Y$ and (ii) {\em perturbation-based covariance discrepancy}~(PCD) comparing their covariance mean-embeddings under Gaussian perturbations.
    These VD and PCD statistics are adaptively aggregated via {\em correlation-aware weighting} into a single {\em test statistic}, and a {\em permutation test} determines, at level $\alpha$, whether $Y$ is adversarial.
    }
    \vspace{-2em}
    \end{center}
    \label{fig: pipeline}
\end{figure*}
\vspace{-0.5em}
\subsection{Variance Discrepancy}\label{sec:vd}
\textbf{Global Uncertainty as Variance.}
We quantify the variance of semantic features within-distribution to detect whether $Y$'s features spread differently from those of $X$ in the RKHS.
Let $\ell: \cF\times\cF\rightarrow\bR$ be a kernel with RKHS $\cH_\ell$ (which also defines map $\ell(\cdot,f(\x))\in\cH_\ell$).
For distributions $\bP$ and $\bQ$, the semantic feature variances are
\begin{equation*}
    \begin{aligned}
        V_\bP & \triangleq E_{\x\sim\bP}\left[\ell(f(\x),f(\x))\right]-E_{\x,\x'\sim\bP^2}[\ell(f(\x),f(\x'))], \\    
        V_\bQ & \triangleq E_{\y\sim\bQ}\left[\ell(f(\y),f(\y))\right]-E_{\y,\y'\sim\bQ^2}[\ell(f(\y),f(\y'))].
    \end{aligned}
\end{equation*}
The first term of $V_\bP$ establishes the self-similarity of examples within $\bP$, while the second captures their pairwise similarity.
Thus, small $V_\bP$ indicates tight clustering, while large $V_\bP$ reflects greater dispersion (likewise for $V_\bQ$).

\smallskip
\noindent
\textbf{Test.}
We formalize the comparison of variances $V_\bP$ and $V_\bQ$ by instantiating Eq.\eqref{eq:sadd_as_ht} as the following hypothesis test:
\begin{equation}\label{eq:hype_vd}
\H^{\textnormal V}_0: V_\bP=V_\bQ \quad \text{vs.} \quad \H^{\textnormal V}_1:V_\bP\neq V_\bQ\ ,
\end{equation}
with {\em test statistic} defined as the {\em variance discrepancy} $\textnormal{VD}(\bP,\bQ;\ell) = \left(V_{\bP}-V_{\bQ}\right)^2$ between $\bP$ and $\bQ$.

\smallskip
\noindent
\textbf{Estimator.}
Since $\bP$ and $\bQ$ are analytically unknown, we empirically estimate $\textnormal{VD}(\bP,\bQ;\ell)$ using $X$ and $Y$, such that
\vskip -0.18in
\begin{equation}\label{eq:vd_hat}
    \widehat{\textnormal{VD}}(X,Y;\ell) = \left(\widehat{V}(X;\ell)-\widehat{V}(Y;\ell)\right)^2\ ,
\end{equation}
\vskip -0.12in
where for $Z\in\{X,Y\}$, we define
\vskip -0.3in
\begin{equation*}
\widehat{V}(Z;\ell) = \sum_{\bm{z}\in Z}\frac{\ell(f(\bm{z}),f(\bm{z}))}{|Z|}-\sum_{\bm{z}_i\neq\bm{z}_j}^{Z}\frac{\ell(f(\bm{z}_i),f(\bm{z}_j))}{|Z|(|Z|-1)}\ .\\
\end{equation*}
\vskip -0.2in
If $\widehat{\textnormal{VD}}(X,Y;\ell)$ exceeds the {\em threshold} estimated via {\em permutation test}, we consider $\bP$ and $\bQ$ to be significantly different.

This way VD completes our response to~\cref{factor:global_uncertainty}: it detects uncertainty shifts at the {\em global} level, comparing how tightly $\bP$ and $\bQ$ concentrate across their support in the RKHS.
Next, we will examine {\em local} uncertainty shifts-how features fluctuate around individual examples under small, isotropic perturbations, which differ between CEs and AEs as noted in~\cref{factor:local_uncertainty}).
This motivates a perturbation-based covariance comparison that complements VD.

\vspace{-0.5em}
\subsection{Perturbation-based Covariance Discrepancy}\label{sec:pcd}
Recall that CEs often show consistent responses to isotropic perturbations~\cite{Cohen:Rosenfeld:Kolter2019}, while AEs are constructed to be near the decision boundaries of $f$ (to cause mis-classification)~\cite{Moosavi:Fawzi:Frossard2016,Moosavi:Fawzi:Fawzi:Frossard:Soatto2018, Ilyas:Santurkar:Tsipras:Engstrom:Tran:Madry2019}. 
Accordingly, AEs are expected to exhibit less stable feature responses to even small perturbations, especially in directions that matter for $f$'s decisions, indicating higher local sensitivity than CEs~\citep{Li:Chen:Wang:Carin2019}.

\smallskip
\noindent
\textbf{Local Uncertainty as Perturbation-Induced-Covariance.}
To probe local sensitivity, we specifically compare the distributions of per‑example feature covariances under isotropic Gaussian noises.
For each example $\x \in \cX$, we draw $K$ random perturbations $\{\mdelta_i\}_{k=1}^K\stackrel{\text{i.i.d.}}{\sim}\cN(\bm{0},\sigma^2I)$, extract features $\{f(\x+\mdelta_i)\}_{k=1}^K$, and compute the covariance matrix of $\x$:
\vskip -0.20in
\begin{equation}\label{eq:per_sample_cov}
    \Sigma_{\x} = \frac{1}{K-1} \sum_{k=1}^{K} (f(\x + \mdelta_k) - \overline{f_{\x}})(f(\x + \mdelta_k) - \overline{f_{\x}})^{\top},
\end{equation}
\vskip -0.10in
where $\overline{f_{\x}} \triangleq k^{-1} \sum_{i=1}^{k} f(\x + \mdelta_i)$.
The covariance matrix $\Sigma_{\x}$ encodes {\em local uncertainty} via its eigenstructure: large eigenvalues indicate directions where features vary substantially under Gaussian perturbation, small eigenvalues indicate stability~\cite{zhang2023detecting,Yang:Li:Xu:Kailkhura:Xie:Li2022}.

\smallskip
\noindent
\textbf{Test.}
To compare differences in {\em local uncertainty} (as distributional discrepancy) between $\bP$ and $\bQ$, we instantiate Eq.\eqref{eq:sadd_as_ht} regarding the perturbation–induced covariance embeddings by testing the equality of the corresponding kernel mean embeddings with null and alternative hypotheses as:
\begin{equation}\label{eq:hype_cd}
\H^{\textnormal C}_0: \mmu_{\Sigma_\bP}=\mmu_{\Sigma_\bQ}\ \ \ \text{vs.}\ \ \ \H^{\textnormal C}_1: \mmu_{\Sigma_\bP}\neq\mmu_{\Sigma_\bQ}\ ,
\end{equation}
where $\mmu_{\Sigma_\bP} \triangleq E_{\x\sim\bP}\left[\rho(\Sigma_{\x},\cdot)\right]$ and $\mmu_{\Sigma_\bQ} \triangleq E_{\y\sim\bQ}\left[\rho(\Sigma_{\y},\cdot)\right]$ are mean embeddings of the covariance distributions for $\bP$ and $\bQ$ in the RKHS $\cH_\rho$\footnote{Mathematically, the covariance matrix $\Sigma_{\x}\in \cS_{+}(\cF)$ defines a positive semi-definite (PSD) covariance operator on the semantic feature space $\cF$, with $\cS_{+}(\cF)$ being the cone of PSD operators on $\cF$. Thus, the RKHS $\cH_{\rho}$ where we embed $\Sigma_{\x}$ into and define $\textnormal{PCD}(\bP,\bQ,\rho)$ is characterized by the kernel $\rho:\cS_{+}(\cF) \times \cS_{+}(\cF)\rightarrow\bR$ and corresponding map $\rho(\cdot, \Sigma_{\x}) \in \cH_{\rho}$.}.
We define {\em perturbation-based covariance discrepancy}~(PCD) as $\textnormal{PCD}(\bP,\bQ,\rho) \triangleq \|\mmu_{\Sigma_\bP}-\mmu_{\Sigma_\bQ}\|_{\cH_\rho}^2$, which serves as {\em test statistic}.
Equivalently, $\textnormal{PCD}(\bP,\bQ,\rho)$ mirrors Eq.\eqref{eq: mmd} but measures the difference between distributions of covariance matrices under kernel $\rho$.

\smallskip
\noindent
\textbf{Estimator.}
In line with Eq.\eqref{eq: mmd}, we empirically estimate $\textnormal{PCD}(\bP,\bQ,\rho)$ with finite examples $X$ and $Y$, such that
\begin{align}\label{eq:PCD_hat}
&\widehat{\textnormal{PCD}}(X,Y;\rho)=\\
\sum_{i\neq j}^n&\frac{\rho(\Sigma_{\x_i},\Sigma_{\x_j})}{n(n-1)}-\sum_{i=1}^n\sum_{j=1}^m\frac{2\rho(\Sigma_{\x_i},\Sigma_{\y_j})}{nm}+\sum_{i\neq j}^m\frac{\rho(\Sigma_{\y_i},\Sigma_{\y_j})}{m(m-1)}.\nonumber
\end{align}
This completes our response to~\cref{factor:local_uncertainty}: PCD detects uncertainty shifts between $\bP$ and $\bQ$ at the {\em local} level by comparing how semantic features fluctuate under isotropic noise around individual examples. 
Together with VD's {\em global} spread comparison, we now possess two complementary lenses through which adversarial artifacts manifest as measurable distributional uncertainty signals.

\section{Making SAD Aware of Local and Global Uncertainty, Adaptively}
\label{sec: dual}
While VD and PCD quantify complementary uncertainty patterns in the semantic feature space, attacks vary in how strongly they activate each effect: some primarily displace AEs from the clean manifold (i.e., strong VD signal)~\citep{Feinman:Curtin:Shintre:Gardner2017}; others tend to exploit local gradients near boundaries (i.e., strong PCD signal)~\cite{Ilyas:Santurkar:Tsipras:Engstrom:Tran:Madry2019}.
For SAD detection to be effective across diverse attacks, it should adapt to the uncertainty present in the data based on each statistic's informativeness for the query data rather than treat each statistic uniformly.

In light of this, we aggregate VD and PCD into {\em a unified test} that accounts for their joint correlation structure~\cite{zhou2025dual}, yielding an adaptive detector sensitive to both global and local adversarial artifacts, termed \emph{\textbf{U}ncertainty-aware \textbf{S}tatistical \textbf{A}dversarial \textbf{D}etection}~(USAD).
\vskip -0.08in
\smallskip
\noindent
\textbf{USAD as Hypothesis Test.}
Still, USAD instantiates Eq.\eqref{eq:sadd_as_ht} by testing distributional equality:
\vskip -0.2in
\begin{equation*}
\H_0^{\mathrm{A}}:\ \bP=\bQ \quad\text{vs.}\quad \H_1^{\mathrm{A}}:\ \bP\neq\bQ,
\end{equation*}
\vskip -0.12in
and defining the corresponding aggregated {\em test statistic} as
\vskip -0.30in
\begin{equation}\label{eq:usad_statistic}
\cT^{\rm A}(X, Y; \ell, \rho) = \cT(X, Y; \ell, \rho)^\top \widehat{\Sigma}_{\rm A}^{-1} \cT(X, Y; \ell, \rho),
\end{equation}
\vskip -0.15in
where $\cT(X,Y;\ell,\rho) \triangleq \left(\widehat{\textnormal{VD}}(X,Y;\ell),\widehat{\textnormal{PCD}}(X,Y;\rho)\right)^{\top}$ and $\widehat{\Sigma}_{\rm A}$ denotes the correlation between the VD and PCD.

\vskip -0.08in

\smallskip
\noindent
\textbf{Weighted Statistics Aggregation.}
$\cT^{\rm A}(X, Y; \ell, \rho)$ combines both global $\widehat{\textnormal{VD}}(X,Y;\ell)$ and local uncertainty shifts measure $\widehat{\textnormal{PCD}}(X,Y;\rho)$, following {\em correlation-aware} statistics ensembling~\cite{zhou2025dual}, with the key idea to weight $\widehat{\textnormal{VD}}(X,Y;\ell)$ and $\widehat{\textnormal{PCD}}(X,Y;\rho)$ according to their empirical dependence, so that redundant components are down-weighted while complementary ones are emphasized.

\smallskip
\noindent
\textbf{Statistics Correlation Estimation.}
In practice, we estimate their correlation structure under the null hypothesis $\H_0^{\rm A}$ using an auxiliary clean calibration set $X_{\rm cal} \sim \bP$ that is independent of the reference and query sets $(X,Y)$, which characterizes how the component statistics in $\cT^{\rm A}(X,Y;\ell,\rho)$ behave jointly under $\H_0^{\rm A}$~\cite{chatterjee2025boosting,zhou2025dual}.
Specifically, we perform $B$ repeated resampling iterations using $X_{\rm cal}$.
In each iteration $b$, we draw two disjoint subsets $X_b$ and $Y_b$ from the clean calibration pool (with $|X_b|=n$ and $|Y_b|=m$), and compute the statistics vector $\cT_b \equiv \cT(X_b,Y_b;\ell,\rho)$.
From $\{\cT_b\}_{b=1}^{B}$, we estimate their covariance structure as
\vskip -0.15in
\begin{equation}\label{eq:sigma_aggregate}
    \widehat{\Sigma}_{\mathrm{A}} = \frac{1}{B-1}\sum_{b=1}^{B} \bigl(\cT_b- \widehat{\cT_{\mmu}} \bigr)\bigl(\cT_b-\widehat{\cT_{\mmu}}\bigr)^{\!\top},
\end{equation}
\vskip -0.12in
where $\widehat{\cT_{\mmu}} = B^{-1} \sum_{b=1}^{B} \cT_b$ is the mean estimated with all $B$ iterations.
The diagonal entries of $\widehat{\Sigma}_{\mathrm{A}}$ capture individual variabilities of VD and PCD, while off-diagonal entries reflect their correlation under the null.
Additionally, by estimating this from CEs only, we ensure the aggregation weights reflect behavior when $\H_0^{\mathrm{A}}$ holds~\cite{zhou2025dual}.

\smallskip
\noindent
\textbf{Permutation Test.}
Following the SAD pipeline (\cref{sec:pre}), we apply a {\em permutation test} with $R$ random permutations to $\cT^{\rm A}(X, Y; \ell, \rho)$ (Eq.\eqref{eq:usad_statistic}).
For each $r=1,\dots,R$, we compute $\cT^{\rm A}_{(r)}$ while keeping $\widehat{\Sigma}_{\mathrm{A}}$ fixed at Eq.\eqref{eq:sigma_aggregate}, so the aggregation weights reflect the null covariance structure across permutations.
At significance level $\alpha$, we reject $\H_0^{\rm A}$ when
$\cT^{\rm A}>t_\alpha$, where $t_\alpha$ is the adjusted
$(1-\alpha)$-quantile of the permuted statistic values, as defined in
Appendix~\ref{app:perm}.
\smallskip
\noindent
\textbf{Detection.}
We reject $\H_0^{\rm A}$ (i.e., classifying $Y$ as adversarial) if and only if $\cT^{\rm A}(X, Y, \ell, \rho) > t_\alpha$.

\begin{figure*}[t]
\vspace{-0.4em}
\begin{center}
\centerline{\includegraphics[width=0.85\linewidth]{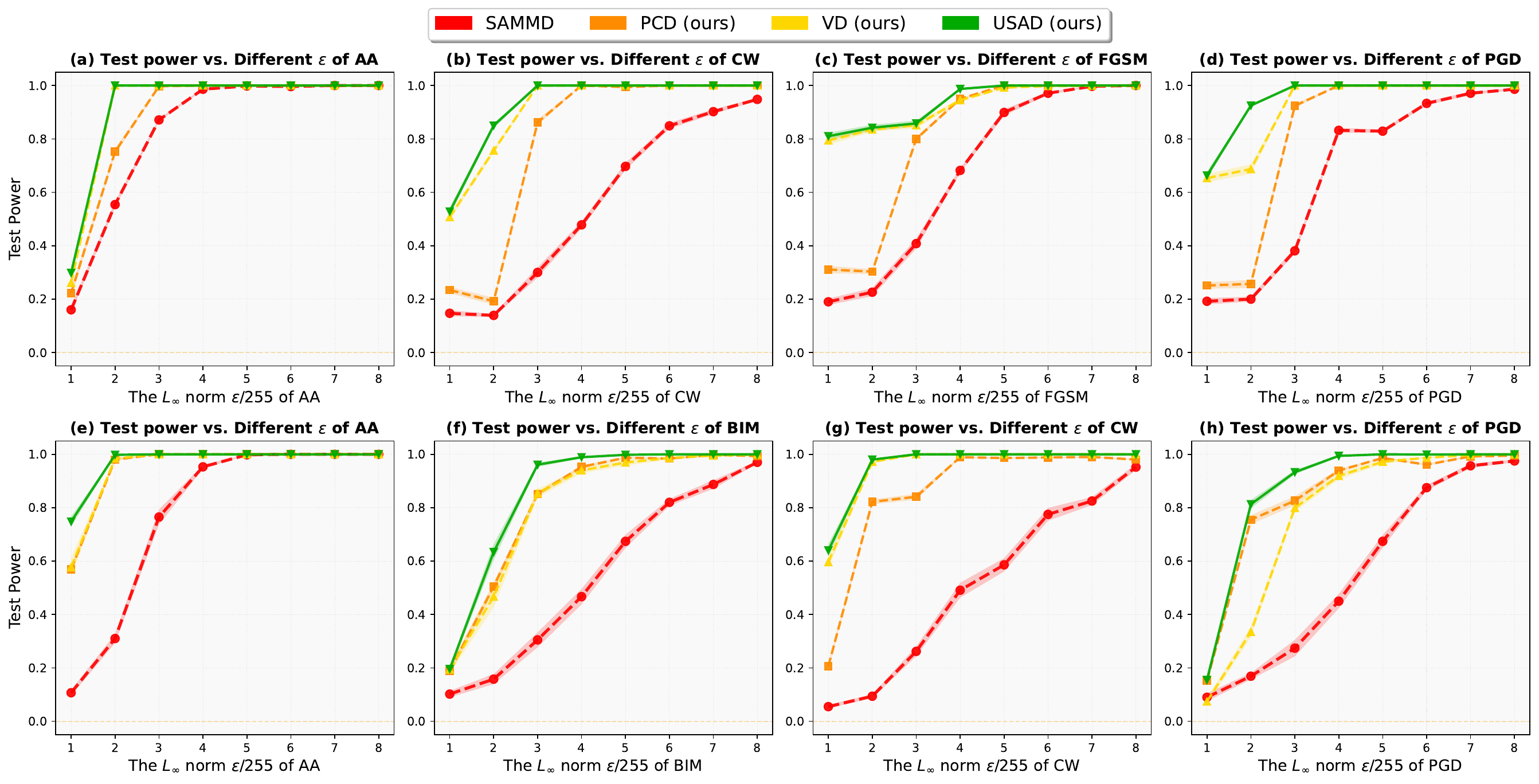}}
\vspace{-1.0em}
\caption{Results $(a\!-\!h)$ are test power (detection rate) under different adversarial attacks with different $\epsilon$, the given adversarial samples all share the same sample size $|Y| = 50$. The results are averaged over $1,000$ repetitions and the ideal test power is $1$. The target model is ResNet-50 trained on \textit{ImageNet} dataset in $(a\!-\!d)$, and ResNet-18 trained on \textit{CIFAR-10} dataset in $(e\!-\!h)$.}
\label{exp:main}
\end{center}
\vspace{-1.5em}
\end{figure*}

\begin{figure*}[t]
\vspace{-0.8em}
\begin{center}
\centerline{\includegraphics[width=0.85\linewidth]{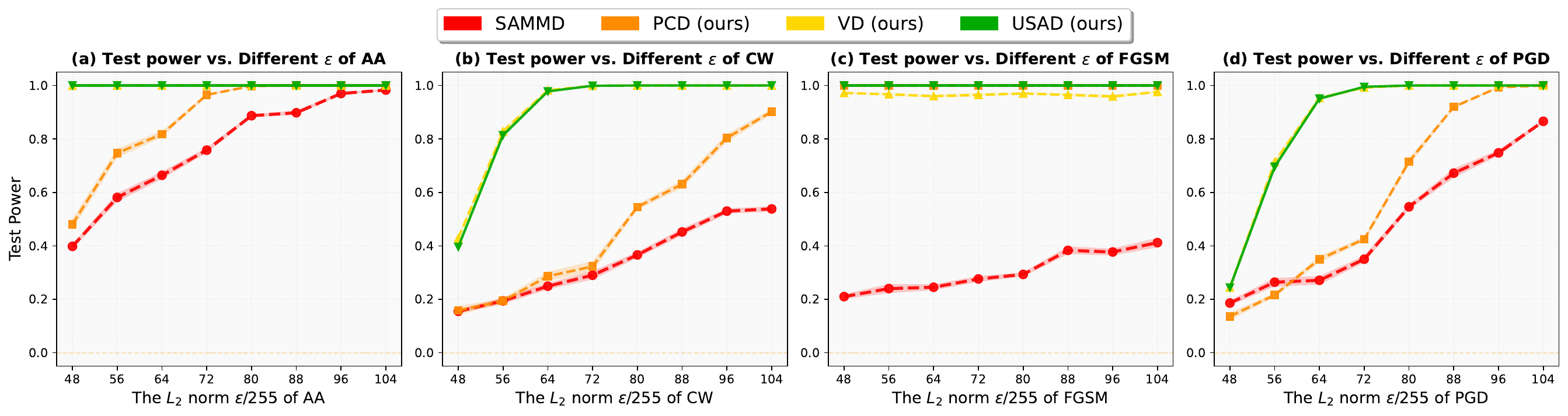}}
\vspace{-1em}
\caption{Results $(a\!-\!d)$ are test power (detection rate) under different adversarial attacks with different $\epsilon$ under $L_2$ norm, given adversarial samples all share the same sample size $|Y| = 50$. The results are averaged over $1,000$ repetitions and the ideal test power is $1$ (the same as $100\%$ detection rate). The target model is ResNet-18 trained on \textit{CIFAR-10} dataset.} 
\label{exp:l2}
\end{center}
\end{figure*}

\begin{table*}[t]
\vspace{-0.6em}
\centering
\caption{Test power for different input example sizes $m = |Y|$ decreasing from 50 to 10 under PGD attack with $\epsilon=4/255$. The test power results are averaged over $1,000$ repetitions. The target model is ResNet-18 trained on \textit{CIFAR-10} dataset.}
\vspace{-0.5em}
\resizebox{0.85\linewidth}{!}{%
\begin{tabular}{c|ccccc}
\toprule
Method & $m = 10$ & $m = 20$ & $m = 30$ & $m = 40$ & $m = 50$ \\
\midrule
SAMMD \citep{Gao:Liu:Zhang:Han:Liu:Niu:Sugiyama2021} & $0.292 \pm 0.029$ & $0.486 \pm 0.022$ & $0.650 \pm 0.008$ & $0.756 \pm 0.019$ & $0.832 \pm 0.009$ \\
\cellcolor{orL}{PCD (ours)} & \cellcolor{orL}{$0.486 \pm 0.024$} & \cellcolor{orL}{$0.828 \pm 0.013$} & \cellcolor{orL}{$0.958 \pm 0.007$} & \cellcolor{orL}{$0.998 \pm 0.002$} & \cellcolor{orL}{$\mathbf{1.000 \pm 0.000}$} \\
\cellcolor{goL}{VD (ours)} & \cellcolor{goL}{$\mathbf{1.000 \pm 0.000}$} & \cellcolor{goL}{$\mathbf{1.000 \pm 0.000}$} & \cellcolor{goL}{$\mathbf{1.000 \pm 0.000}$} & \cellcolor{goL}{$\mathbf{1.000 \pm 0.000}$} & \cellcolor{goL}{$\mathbf{1.000 \pm 0.000}$} \\
\midrule
MMDAgg \citep{Schrab:Kim:Albert:Laurent:Guedj:Gretton2023} & $0.342 \pm 0.016$ & $0.476 \pm 0.022$ & $0.632 \pm 0.020$ & $0.816 \pm 0.017$ & $0.912 \pm 0.009$ \\
MMD-FUSE \citep{Biggs:Schrab:Gretton2023} & $0.524 \pm 0.017$ & $0.892 \pm 0.014$ & $0.994 \pm 0.004$ & $\mathbf{1.000 \pm 0.000}$ & $\mathbf{1.000 \pm 0.000}$ \\
MMD-DUAL \citep{zhou2025dual} & $0.752 \pm 0.022$ & $0.978 \pm 0.003$ & $\mathbf{1.000 \pm 0.000}$ & $\mathbf{1.000 \pm 0.000}$ & $\mathbf{1.000 \pm 0.000}$ \\
\cellcolor{grL}{USAD (ours)} & \cellcolor{grL}{$\mathbf{1.000 \pm 0.000}$} & \cellcolor{grL}{$\mathbf{1.000 \pm 0.000}$} & \cellcolor{grL}{$\mathbf{1.000 \pm 0.000}$} & \cellcolor{grL}{$\mathbf{1.000 \pm 0.000} $} & \cellcolor{grL}{$\mathbf{1.000 \pm 0.000}$} \\
\bottomrule
\end{tabular}
}
\label{tab:sample_size}
\end{table*}

\begin{table*}[t]
\centering
\caption{Test power under adaptive attack with $\epsilon=4/255$. We increase the input example sizes $m = |Y|$ from 10 to 50 to show that all the SAD methods can gain power from example size, even though the adaptive attack can effectively avoid the detection with insufficient examples. The test power results are averaged over $1,000$ repetitions. The target model is ResNet-18 trained on \textit{CIFAR-10} dataset.}
\resizebox{0.85\linewidth}{!}{%
\begin{tabular}{c|c|ccccc}
\toprule
Method & Attack & $m = 10$ & $m = 20$ & $m = 30$ & $m = 40$ & $m = 50$ \\
\midrule
\multirow{2}{*}{PCD} & PGD & $0.486 \pm 0.024$ & $0.828 \pm 0.013$ & $0.958 \pm 0.007$ & $0.998 \pm 0.002$ & $1.000 \pm 0.000$ \\
 & Adaptive & $0.388 \pm 0.027$ & $0.714 \pm 0.020$ & $0.768 \pm 0.033$ & $0.882 \pm 0.010$ & $0.914 \pm 0.014$ \\
\midrule
\multirow{2}{*}{VD} & PGD & $1.000 \pm 0.000$ & $1.000 \pm 0.000$ & $1.000 \pm 0.000$ & $1.000 \pm 0.000$ & $1.000 \pm 0.000$ \\
 & Adaptive & $0.186 \pm 0.023$ & $0.326 \pm 0.007$ & $0.416 \pm 0.022$ & $0.546 \pm 0.021$ & $0.614 \pm 0.011$ \\
\midrule
\multirow{2}{*}{USAD} & PGD & $1.000 \pm 0.000$ & $1.000 \pm 0.000$ & $1.000 \pm 0.000$ & $1.000 \pm 0.000$ & $1.000 \pm 0.000$ \\
 & Adaptive & $0.818 \pm 0.013$ & $0.998 \pm 0.002$ & $1.000 \pm 0.000$ & $1.000 \pm 0.000$ & $1.000 \pm 0.000$ \\
\midrule
\multicolumn{2}{c|}{Avg. Decrease}  & $-0.365 \pm 0.207$ & $-0.263 \pm 0.193$ & $-0.258 \pm 0.168$ & $-0.190 \pm 0.131$ & $-0.157 \pm 0.111$ \\
\bottomrule
\end{tabular}
}
\label{tab:adaptive}
\end{table*}

\begin{table*}[t]
\centering
\caption{Rejection rates for different \emph{clean examples} (CE) mixing levels from $0\%$ to $100\%$ with $|Y|=50$, under PGD attack with $\epsilon=4/255$. The results are averaged over $1,000$ repetitions. The target model is ResNet-18 trained on \textit{CIFAR-10} dataset.}
\vspace{-0.5em}
\resizebox{0.85\linewidth}{!}{%
\begin{tabular}{c|cccccc}
\toprule
Method & $0$ & $20$ & $40$ & $60$ & $80$ & $100$ \\
\midrule
SAMMD & $0.828 \pm 0.016$ & $0.620 \pm 0.013$ & $0.344 \pm 0.010$ & $0.196 \pm 0.012$ & $0.052 \pm 0.010$ & $0.040 \pm 0.009$ \\
\cellcolor{orL}{PCD (ours)} &
\cellcolor{orL}{$0.996 \pm 0.002$} & \cellcolor{orL}{$0.952 \pm 0.005$} & \cellcolor{orL}{$0.356 \pm 0.020$} &
\cellcolor{orL}{$0.142 \pm 0.010$} & \cellcolor{orL}{$0.066 \pm 0.011$} & \cellcolor{orL}{$0.056 \pm 0.004$} \\
\cellcolor{goL}{VD (ours)} &
\cellcolor{goL}{$\mathbf{1.000 \pm 0.000}$} & \cellcolor{goL}{$\mathbf{1.000 \pm 0.000}$} & \cellcolor{goL}{$\mathbf{1.000 \pm 0.000}$} &
\cellcolor{goL}{$\mathbf{0.998 \pm 0.002}$} & \cellcolor{goL}{$\mathbf{0.778 \pm 0.016}$} & \cellcolor{goL}{$0.060 \pm 0.006$} \\
\midrule
MMDAgg &
$\mathbf{1.000 \pm 0.000}$ & $\mathbf{1.000 \pm 0.000}$ & $0.992 \pm 0.003$ & $0.548 \pm 0.025$ & $0.096 \pm 0.005$ & $0.050 \pm 0.009$ \\
MMD-FUSE &
$0.908 \pm 0.010$ & $0.682 \pm 0.013$ & $0.330 \pm 0.022$ & $0.168 \pm 0.015$ & $0.050 \pm 0.006$ & $0.030 \pm 0.009$ \\
MMD-DUAL &
$\mathbf{1.000 \pm 0.000}$ & $0.990 \pm 0.003$ & $0.806 \pm 0.012$ & $0.274 \pm 0.012$ & $0.068 \pm 0.009$ & $0.040 \pm 0.006$ \\
\cellcolor{grL}{USAD (ours)} &
\cellcolor{grL}{$\mathbf{1.000 \pm 0.000}$} & \cellcolor{grL}{$\mathbf{1.000 \pm 0.000}$} & \cellcolor{grL}{$\mathbf{1.000 \pm 0.000}$} &
\cellcolor{grL}{$0.900 \pm 0.014$} & \cellcolor{grL}{$0.350 \pm 0.018$} & \cellcolor{grL}{$0.038 \pm 0.007$} \\
\bottomrule
\end{tabular}
}
\label{tab:mixing_level}
\end{table*}

\section{Theoretical Analysis}
\label{sec: theo}
In this section, we present theoretical guarantees for our test statistics, with full proofs in Appendix~\ref{app:proof}. We first establish Type-I error control as follows.
\begin{theorem}\label{thm:perm}
\vspace{-0.2em}
Given the test batch $Y$ consists of clean data, the VD, PCD, or USAD, when combined with a permutation test at significance level $\alpha \in (0,1)$, controls the Type~I error at level $\alpha$; that is, the probability of falsely identifying the test batch $Y$ as adversarial is at most $\alpha$.
\end{theorem}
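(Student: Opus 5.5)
The plan is the standard exchangeability argument for permutation tests, applied uniformly to all three statistics. Under $\H_0$ the reference set $X$ and query set $Y$ are i.i.d. from $\bP$, so the pooled sample $Z = X \cup Y = (\bm{z}_1,\dots,\bm{z}_{n+m})$ is exchangeable: for every permutation $\pi$ of $\{1,\dots,n+m\}$, $(\bm{z}_{\pi(1)},\dots,\bm{z}_{\pi(n+m)}) \stackrel{d}{=} Z$. Each statistic $\widehat{\textnormal{VD}}(X,Y;\ell)$, $\widehat{\textnormal{PCD}}(X,Y;\rho)$ and $\cT^{\rm A}(X,Y;\ell,\rho)$ can be written as one fixed measurable function $\cT(\bm{z}_{\pi(1)},\dots,\bm{z}_{\pi(n+m)})$, where the first $n$ coordinates play the role of $X$ and the rest the role of $Y$. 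The identity permutation gives the observed statistic.

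The first step is to check that this really is a fixed function, and this is where I expect the main obstacle.

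For VD this is immediate, since $f$ and $\ell$ are fixed.

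For PCD, the per-example covariance $\Sigma_{\x}$ depends on auxiliary Gaussian noise $\{\mdelta_k\}$. I would attach the noise draw to each example, so that the item is really the pair $(\bm{z}_i,\{\mdelta_{i,k}\}_k)$. These pairs are i.i.d. under $\H_0$, hence still exchangeable. Equivalently, each $\Sigma_{\bm{z}_i}$ is computed once, and the permutation test permutes the resulting matrices rather than redrawing noise.

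For USAD, the weight matrix $\widehat{\Sigma}_{\rm A}$ is estimated from $X_{\rm cal}$, which is independent of $(X,Y)$, and it is held fixed across permutations. So I would condition on $X_{\rm cal}$ (and on the PCD noise). Conditionally, $\cT^{\rm A}$ is a fixed function of the exchangeable pooled sample, and it is well defined as long as $\widehat{\Sigma}_{\rm A}$ is invertible (one could use a pseudo-inverse otherwise). Everything else is argued conditionally and then integrated out.

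The second step is the rank argument. Let $\pi_0 = \mathrm{id}$ and let $\pi_1,\dots,\pi_R$ be drawn i.i.d. uniformly, independently of the data. Set $T_r = \cT(Z_{\pi_r})$. By exchangeability of $Z$ and the uniformity of the random permutations, the vector $(T_0,T_1,\dots,T_R)$ is exchangeable. To see this, compose every $\pi_r$ with an independent uniform $\tau$ and use $Z_\tau \stackrel{d}{=} Z$.

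Assume the adjusted quantile in Appendix~\ref{app:perm} is the usual one: reject iff $1 + \sum_{r=1}^R \bI[T_r \ge T_0] \le \alpha(R+1)$. By exchangeability, the rank of $T_0$ among the $R+1$ values is uniformly distributed when there are no ties. Ties only make this rank stochastically larger, so they can be handled by taking the conservative $\ge$ or by uniform random tie-breaking. This gives $\Pr(\text{reject}\mid X_{\rm cal},\text{noise}) \le \lfloor \alpha(R+1)\rfloor/(R+1) \le \alpha$.

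Finally, taking expectations over $X_{\rm cal}$ and the noise gives the unconditional bound. That bound is exactly the claim of \cref{thm:perm} for VD, PCD and USAD. If the full permutation group is used instead of random draws, the same argument applies via the classical group-invariance lemma.
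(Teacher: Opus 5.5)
Your argument is essentially the paper's proof: the pooled sample is exchangeable, with each example's perturbation-induced covariance permuted together with it as one augmented observation; you condition on the independent calibration data so that $\widehat{\Sigma}_{\rm A}$ is fixed; the Monte Carlo $p$-value $\widehat p_R$ (with the $+1$ and ``$\geq$'' convention) is super-uniform; and you then average over the calibration randomness $\mathcal{A}$. One point needs tightening: the parenthetical ``and on the PCD noise'' is valid only for the calibration-side noise, not for the perturbation draws of the test pool $Z$, because given those draws the matrices $\Sigma_{\z_i}$ are independent but not identically distributed, so exchangeability of the permuted pairs fails and the bound holds only after integrating that noise out, exactly as your own augmented-pair paragraph (and the paper) sets it up.
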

\vskip -0.08in
We next study the test power of VD as follows.
\begin{theorem}\label{thm:VD_test}
\vspace{-0.2em}
Let $\ell$ be a bounded kernel. Under the alternative hypothesis $\H^{\textnormal V}_1$, the power of the VD test, that is, the probability of correctly rejecting the null hypothesis $\H^{\textnormal V}_0$, converges to $1$ as $n,m\rightarrow\infty$ with $n/m\rightarrow c\in(0,\infty)$.
\end{theorem}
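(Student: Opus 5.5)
The plan is to show two facts. First, the empirical statistic $\widehat{\textnormal{VD}}(X,Y;\ell)$ converges in probability to the population quantity $(V_\bP-V_\bQ)^2>0$. Second, the permutation threshold $t_\alpha$ converges in probability to $0$. Once both hold, $\Pr(\widehat{\textnormal{VD}}>t_\alpha)\to 1$.

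For consistency of the statistic, write $\widehat V(Z;\ell)$ as the difference of two U-statistics: the first-order mean $|Z|^{-1}\sum\ell(f(\z),f(\z))$ and the second-order U-statistic with kernel $\ell(f(\z_i),f(\z_j))$. Since $|\ell|\le K$, Hoeffding's inequality for U-statistics gives deviations of order $\sqrt{\log(1/\delta)/|Z|}$ with probability $1-\delta$. Hence $\widehat V(X;\ell)\to V_\bP$ and $\widehat V(Y;\ell)\to V_\bQ$ in probability as $n,m\to\infty$. The map $(a,b)\mapsto(a-b)^2$ is continuous, so the continuous mapping theorem gives $\widehat{\textnormal{VD}}\to(V_\bP-V_\bQ)^2=:\Delta>0$ under $\H^{\textnormal V}_1$.

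The main obstacle is controlling the permutation threshold, because under $\H_1$ the pooled sample is not exchangeable between the two distributions. The approach is to condition on the pooled sample $Z=X\cup Y$ and view a random permutation as drawing $X_{(r)}$ of size $n$ without replacement from $Z$, with $Y_{(r)}$ the complement.
\begin{itemize}
\item Both permuted groups are then simple random samples from the same finite population, namely the empirical mixture $\widehat{\bP}_Z$.
\item Hoeffding's bound for U-statistics sampled without replacement applies conditionally, again using only boundedness of $\ell$. It shows $\widehat V(X_{(r)};\ell)$ and $\widehat V(Y_{(r)};\ell)$ are both within $O(\sqrt{\log(1/\delta)/\min(n,m)})$ of the common pooled value $\widehat V(Z;\ell)$. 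The pooled value is the U-statistic of the whole set, up to a negligible diagonal correction.
\item Therefore each permuted statistic is $O_P(1/\min(n,m))$ uniformly in the conditioning. With $n/m\to c$, every conditional $(1-\alpha)$-quantile, including the adjusted one over $R$ permutations, is bounded by $C\log(1/\alpha)/\min(n,m)\to0$.
\end{itemize}
If a cleaner route is needed, the fallback is to use only a Markov/Chebyshev bound on the conditional second moment of $\widehat V(X_{(r)})-\widehat V(Y_{(r)})$. This moment is $O(1/n+1/m)$ by bounded kernels and standard variance formulas for sampling without replacement.

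Combining the two facts:
\[
\Pr\bigl(\widehat{\textnormal{VD}}\le t_\alpha\bigr)\le\Pr\bigl(\widehat{\textnormal{VD}}\le\Delta/2\bigr)+\Pr\bigl(t_\alpha\ge\Delta/2\bigr).
\]
Both terms tend to $0$, which gives power converging to $1$. The ratio condition $n/m\to c\in(0,\infty)$ is used only to ensure both sample sizes diverge together, so that the permuted groups have proportional sizes in the threshold bound.
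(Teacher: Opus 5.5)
Your proof is correct and follows the paper's two-part skeleton: the observed statistic tends to $\Delta=(V_\bP-V_\bQ)^2>0$, and the permutation threshold tends to $0$. Where you differ is in how you control the permutation step.

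The paper writes $\widehat V(X;\ell)-\widehat V(Y;\ell)$ as a two-sample $U$-statistic with kernel $h_{\textnormal{VD}}$. It then invokes the permuted-$U$-statistic tail bound of Kim et al.\ (its Theorem~6.1) to get concentration at zero, and concludes via Slutsky's theorem. You instead condition on the pooled sample and note that $E[\widehat V(X_{(r)};\ell)\mid Z]=E[\widehat V(Y_{(r)};\ell)\mid Z]=\widehat V(Z;\ell)$ exactly, so no diagonal correction is needed. You then bound each one-sample statistic's fluctuation under sampling without replacement, and finish with the explicit split $\Pr(\widehat{\textnormal{VD}}\le t_\alpha)\le\Pr(\widehat{\textnormal{VD}}\le\Delta/2)+\Pr(t_\alpha\ge\Delta/2)$.

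Your route fits VD better. The cited bound is designed for MMD-type kernels $g(\x_1,\x_2)+g(\y_1,\y_2)-g(\x_1,\y_2)-g(\x_2,\y_1)$. These are degenerate under permutation and fluctuate at scale $1/n$. By contrast, $h_{\textnormal{VD}}$ has no cross terms, and its permuted version is generally non-degenerate, fluctuating at scale $n^{-1/2}$. That is exactly the scale your argument produces. What the paper's route buys is a single citation shared with the PCD proof, where the MMD form does hold.

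Two points need tightening.

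First, Hoeffding's comparison of sampling with and without replacement is a statement about sums. For the order-two $U$-statistic you would need a martingale or bounded-differences inequality over random permutations. Your Chebyshev fallback avoids this and is enough, since the conditional second moment of $\widehat V(X_{(r)};\ell)-\widehat V(Y_{(r)};\ell)$ is $O(K^2(1/n+1/m))$ uniformly in $Z$.

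Second, the Monte-Carlo threshold is random, so a deterministic quantile bound does not directly apply. Argue instead as follows. Let $p$ bound the conditional probability that a single permuted statistic is at least $\Delta/2$. Markov's inequality on the number of exceedances then gives $\Pr(t_\alpha\ge\Delta/2\mid Z)\le Rp/(R-k_{\alpha,R}+1)\le 2p/\alpha$. This requires $k_{\alpha,R}\le R$, i.e.\ $R\ge(1-\alpha)/\alpha$. Otherwise $t_\alpha=+\infty$ and the power is zero; the paper also leaves this assumption implicit.
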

\vskip -0.08in
This guarantees that VD is a consistent statistical test; a similar result hold for PCD, as below.
\begin{theorem}\label{thm:PCD_test}
\vspace{-0.2em}
Let $\rho$ be a bounded kernel. Under the alternative hypothesis $\H^{\textnormal C}_1$, the power of the PCD test converges to $1$ as $n,m\rightarrow\infty$ with $n/m\rightarrow c\in(0,\infty)$.
\end{theorem}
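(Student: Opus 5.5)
The plan mirrors the standard consistency argument for MMD permutation tests, with each $\x$ replaced by the random covariance $\Sigma_{\x}$. First, I will treat $\Sigma_{\x}$ (built from $K$ fresh Gaussian perturbations per example) as a random element of $\cS_{+}(\cF)$. Since the $\x_i$ are i.i.d.\ from $\bP$ and the noise is drawn independently per example, the $\Sigma_{\x_i}$ are i.i.d.\ from an induced distribution $\bP_\Sigma$, and likewise the $\Sigma_{\y_j}$ are i.i.d.\ from $\bQ_\Sigma$. With this reduction, $\widehat{\textnormal{PCD}}(X,Y;\rho)$ in Eq.~\eqref{eq:PCD_hat} is exactly the unbiased two-sample U-statistic estimator of $\textnormal{MMD}^2(\bP_\Sigma,\bQ_\Sigma,\rho)=\textnormal{PCD}(\bP,\bQ,\rho)$. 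It remains to show two things: the statistic converges to a positive constant under $\H^{\textnormal C}_1$, and the permutation threshold $t_\alpha$ tends to $0$.

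\emph{Step 1 (concentration of the statistic).} Let $|\rho|\le M$. Changing one $\Sigma_{\x_i}$ alters $\widehat{\textnormal{PCD}}$ by at most $O(M/n)$, and changing one $\Sigma_{\y_j}$ alters it by at most $O(M/m)$. McDiarmid's inequality, or Hoeffding's bound for U-statistics, then gives
\[
\Pr\bigl(|\widehat{\textnormal{PCD}}-\textnormal{PCD}|>\epsilon\bigr)\le 2\exp\bigl(-c'\epsilon^2 nm/(M^2(n+m))\bigr)\to 0 .
\]
This holds because $n/m\to c\in(0,\infty)$, so $nm/(n+m)\to\infty$. Under $\H^{\textnormal C}_1$ we have $\textnormal{PCD}=\|\mmu_{\Sigma_\bP}-\mmu_{\Sigma_\bQ}\|^2_{\cH_\rho}=:\gamma>0$, so $\widehat{\textnormal{PCD}}\ge\gamma/2$ with probability tending to $1$.

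\emph{Step 2 (threshold vanishes).} This is the main obstacle. The permutation distribution is computed on the pooled sample $Z=X\cup Y$, whose points are not i.i.d.\ from a single distribution under $\H^{\textnormal C}_1$. I will condition on $Z$ and view the permuted split as sampling without replacement from the empirical mixture $\widehat{\bP}_Z$. Conditionally, each permuted statistic is a two-sample U-statistic with both samples from the same finite population, so its conditional mean is $O(1/n+1/m)$. Each coordinate still has bounded differences $O(M/n)$ and $O(M/m)$, so a McDiarmid-type inequality for random permutations, valid for sampling without replacement, gives $\Pr(\widehat{\textnormal{PCD}}_{(r)}>\epsilon\mid Z)\le \exp(-c''\epsilon^2 nm/(n+m))$ uniformly in $Z$. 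It follows that the true conditional $(1-\alpha)$-quantile $q_\alpha(Z)$ satisfies $q_\alpha(Z)\le C\sqrt{\log(1/\alpha)(n+m)/(nm)}+O(1/n+1/m)\to 0$ deterministically. If this permutation concentration is awkward to state, my fallback is the alternative route of bounding the conditional variance of $\widehat{\textnormal{PCD}}_{(r)}$ by $O(M^2(1/n+1/m)^2)$ and applying Chebyshev, which also forces the quantile to $0$.

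\emph{Step 3 (Monte Carlo threshold and conclusion).} The empirical threshold $t_\alpha$ uses $R$ permutations and the adjusted quantile from Appendix~\ref{app:perm}. Because the conditional tail bound holds uniformly, the probability that more than $\lceil(1-\alpha)(R+1)\rceil$ of the permuted values exceed $q_{\alpha/2}(Z)$ is controlled by a binomial tail. Hence $t_\alpha\le \epsilon_{n,m}\to 0$ with high probability; for fixed $R$ with $\alpha(R+1)\ge 1$, I will simply bound via the $\alpha/2$ quantile. Combining this with Step 1, the event $\widehat{\textnormal{PCD}}>t_\alpha$ has probability at least $1-o(1)$, which proves that the power tends to $1$. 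The argument is identical to that for Theorem~\ref{thm:VD_test}, with the kernel $\ell$ on features replaced by $\rho$ on covariances.
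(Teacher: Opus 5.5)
Your argument follows the paper's proof. Both write $\widehat{\textnormal{PCD}}$ as an unbiased two-sample U-statistic, show it concentrates at $\gamma:=\textnormal{PCD}(\bP,\bQ,\rho)>0$ under $\H^{\textnormal C}_1$, show the permuted statistic concentrates at $0$, and conclude.

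The tools differ but are interchangeable for consistency:
\begin{itemize}
\item For the observed statistic, the paper uses Hoeffding's U-statistic bound (Theorem~\ref{thm:larg_U}); you use McDiarmid.
\item For the permuted statistic, the paper uses the inequality of Kim et al.\ (Theorem~\ref{thm:6.1}), which gives an $O(1/n)$ threshold. You use the sampling-without-replacement analogue of McDiarmid, which gives $O(\sqrt{(n+m)/(nm)})$; that is still enough.
\end{itemize}
You are more careful than the paper in three places: the noise-induced i.i.d.\ structure of the $\Sigma_{\x_i}$, conditioning on the non-i.i.d.\ pooled sample, and the requirement $\alpha(R+1)\ge 1$. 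That requirement matters: without it $t_\alpha=+\infty$ and the power is identically $0$, which the paper never mentions.

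One step, however, does not go through as written. In Step~3 you compare $t_\alpha$ with the fixed-level quantile $q_{\alpha/2}(Z)$ and invoke a binomial tail. There are two problems:
\begin{itemize}
\item The count is off. The event $t_\alpha=\mathrm{STAT}_{[k_{\alpha,R}]}>q$ means at least $R-k_{\alpha,R}+1$ permuted values exceed $q$, not more than $k_{\alpha,R}$ of them.
\item More importantly, for fixed $R$ the bound $\Pr\bigl(\mathrm{Bin}(R,\alpha/2)\ge R-k_{\alpha,R}+1\bigr)$ is a positive constant independent of $n,m$. For the paper's $R=100$, $\alpha=0.05$ it is about $0.1$. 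So it cannot give power $1-o(1)$.
\end{itemize}

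The fix uses only what you already proved. Since $k_{\alpha,R}\le R$, we have $t_\alpha\le\max_{r\le R}\mathrm{STAT}^{(r)}$. A union bound over the $R$ draws, with your uniform conditional tail bound at the fixed level $\gamma/2$, gives
\[
\Pr\bigl(t_\alpha>\gamma/2\bigr)\le R\exp\bigl(-c''\gamma^2 nm/(4(n+m))\bigr)\to 0 .
\]
Equivalently, replace $\alpha/2$ by a level $\beta_{n,m}\to 0$ with $\log(1/\beta_{n,m})=o(nm/(n+m))$. Combined with Step~1, this gives $\Pr\bigl(\widehat{\textnormal{PCD}}>\gamma/2\ge t_\alpha\bigr)\to 1$. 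This union-bound step is also what is needed to make the paper's one-line claim, that the empirical adjusted quantile tends to $0$, rigorous.
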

\vskip -0.08in
We now extend these results to the USAD.
\begin{corollary}\label{cor:USAD_test}
\vspace{-0.2em}
If \textbf{either or both} the alternative hypotheses $\H^{\textnormal V}_1$ and $\H^{\textnormal C}_1$ hold, the power of the USAD test converges to $1$ as $n,m\rightarrow\infty$ with $n/m\rightarrow c\in(0,\infty)$.
\end{corollary}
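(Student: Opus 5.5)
We plan to reduce the corollary to Theorems~\ref{thm:VD_test} and~\ref{thm:PCD_test} by showing that the aggregated statistic $\cT^{\rm A}$ diverges under the alternative while the permutation threshold $t_\alpha$ stays bounded in probability.

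\textbf{Step 1: the weighting matrix is fixed.} Condition on the calibration set $X_{\rm cal}$, which is independent of $(X,Y)$. Then $\widehat{\Sigma}_{\rm A}$ is a fixed matrix, positive definite almost surely. In the scaling regime of interest, $B$ and the calibration sample sizes grow with $n,m$. After rescaling $\widehat\Sigma_{\rm A}$ by its natural null order, its smallest eigenvalue $\lambda_{\min}$ is bounded away from $0$. So
\[
\cT^{\rm A}\ \ge\ \lambda_{\max}(\widehat\Sigma_{\rm A})^{-1}\bigl(\widehat{\textnormal{VD}}^2+\widehat{\textnormal{PCD}}^2\bigr).
\]
This lower bound means it suffices that one component be large.

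\textbf{Step 2: behaviour under the alternative.} Suppose $\H^{\textnormal V}_1$ holds. Since $\ell$ is bounded, $\widehat V(X;\ell)$ and $\widehat V(Y;\ell)$ are U-statistics, and they converge almost surely to $V_\bP$ and $V_\bQ$. Hence $\widehat{\textnormal{VD}}\to (V_\bP-V_\bQ)^2>0$. If instead $\H^{\textnormal C}_1$ holds, the same U-statistic consistency for the bounded kernel $\rho$ gives $\widehat{\textnormal{PCD}}\to\|\mmu_{\Sigma_\bP}-\mmu_{\Sigma_\bQ}\|^2>0$. Either way, $\|\cT(X,Y)\|$ converges to a strictly positive constant.

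\textbf{Step 3: the permutation threshold.} For a random permutation of the pooled sample, $X_{(r)}$ and $Y_{(r)}$ are both draws from the mixture $\tfrac{c}{1+c}\bP+\tfrac1{1+c}\bQ$, sampled without replacement. By a Hoeffding-type bound for permuted U-statistics, which relies on the bounded kernels, the permuted $\widehat{\textnormal{VD}}_{(r)}$ and $\widehat{\textnormal{PCD}}_{(r)}$ are $O_p(1/n)$. More precisely, the estimated variances of the two halves share a common limit, so the gap between them is $O_p(n^{-1/2})$ and its square is $O_p(1/n)$. These are the same facts that drive the proofs of Theorems~\ref{thm:VD_test} and~\ref{thm:PCD_test}. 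Consequently the $(1-\alpha)$ quantile $t_\alpha$ of the permutation distribution of $\cT^{\rm A}$ is of order $O_p(n^{-2})$ relative to $\widehat\Sigma_{\rm A}$.

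\textbf{Step 4: the null scaling and power.} Under the null, the calibration statistics $\cT_b$ are themselves $O_p(1/n)$, so $\widehat\Sigma_{\rm A}$ is of order $n^{-2}$. Rescale both sides consistently by setting $\widetilde\Sigma=n^2\widehat\Sigma_{\rm A}$, which converges to a fixed positive definite limit. Then:
\begin{itemize}
\item the threshold satisfies $t_\alpha=O_p(1)$;
\item the observed statistic satisfies $\cT^{\rm A}\ge n^2\lambda_{\max}(\widetilde\Sigma)^{-1}\|\cT\|^2\to\infty$.
\end{itemize}
Hence $\Pr(\cT^{\rm A}>t_\alpha)\to1$.

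\textbf{Main obstacle.} The hardest step is controlling $\widehat\Sigma_{\rm A}$: it must neither degenerate (singular limit) nor have a scale mismatched with $t_\alpha$. Our first approach is the argument above: use the fact that VD and PCD are not perfectly collinear under the null, so the limit $\widetilde\Sigma$ is invertible. If that cannot be verified, we fall back on a weaker argument. Any fixed positive definite $\widehat\Sigma_{\rm A}$ yields a quadratic form whose null quantile $t_\alpha\to0$ at rate $O_p(n^{-2})$, while the observed value stays bounded away from $0$. That alone already gives power tending to $1$, without needing the rescaling.
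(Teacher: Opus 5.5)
Your argument is correct and has the same skeleton as the paper's proof. At least one coordinate of $\cT(X,Y;\ell,\rho)$ converges to a positive constant (Theorems~\ref{thm:VD_test}--\ref{thm:PCD_test}). The permuted VD and PCD collapse by concentration of permuted $U$-statistics. The quadratic form then carries both facts over to $\cT^{\rm A}$.

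The paper simply declares $\widehat{\Sigma}_{\rm A}$ fixed. The permuted $\cT^{\rm A}$, and hence $t_\alpha$, then tends to $0$, while the observed $\cT^{\rm A}$ tends to a positive constant. Your fallback is exactly this argument. Your Rayleigh-quotient bounds make explicit the positive definiteness of $\widehat{\Sigma}_{\rm A}$, which the paper uses without stating it.

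Your main route adds one real observation. The calibration subsets have sizes $n$ and $m$, so $\widehat{\Sigma}_{\rm A}=O_p(n^{-2})$, and the ``fixed'' matrix is really a sequence whose inverse grows like $n^2$. After rescaling you get $t_\alpha=O_p(1)$ against an observed statistic of order $n^2$. This covers the procedure as actually specified, at the price of a nondegeneracy assumption on the limit of $n^2\widehat{\Sigma}_{\rm A}$.

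You can weaken that assumption and merge your two routes. The decision is invariant to multiplying $\widehat{\Sigma}_{\rm A}$ by a positive scalar. Also, $t_\alpha\le\max_r\cT^{\rm A}(X_{(r)},Y_{(r)};\ell,\rho)$ whenever $k_{\alpha,R}\le R$. Write $\mathrm{cond}(\widehat{\Sigma}_{\rm A})=\lambda_{\max}(\widehat{\Sigma}_{\rm A})/\lambda_{\min}(\widehat{\Sigma}_{\rm A})$. Your two eigenvalue bounds then give rejection as soon as $\|\cT(X,Y;\ell,\rho)\|^2>\mathrm{cond}(\widehat{\Sigma}_{\rm A})\max_{r}\|\cT(X_{(r)},Y_{(r)};\ell,\rho)\|^2$. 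So it suffices that this condition number be $o_p(n^2)$; boundedness in probability is more than enough. You do not need $n^2\widehat{\Sigma}_{\rm A}$ to converge to a fixed limit.

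Finally, both your proof and the paper's tacitly require $k_{\alpha,R}\le R$, i.e.\ $R\ge 1/\alpha-1$. Otherwise $t_\alpha=+\infty$ and the power is identically $0$.
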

\vskip -0.08in
In Theorem~\ref{thm:perm} and Corollary~\ref{cor:USAD_test}, although our analysis focuses on USAD, the aggregation framework is general and can incorporate additional statistics such as MMD.

\begin{remark}\label{rem:mixing_interp}
Our analysis concerns \textbf{distribution-level detection} and does \emph{NOT} require the query batch to be purely adversarial or purely clean: we test whether the query distribution $\bQ$ \emph{significantly differs} from a trusted clean reference $\bP$. If $Y$ contains a \textbf{non-negligible fraction} of adversarial examples, the induced mixture yields a detectable distribution shift, and the test power approaches $1$ for sufficiently large batches (Theorems~\ref{thm:VD_test}--\ref{thm:PCD_test} and Corollary~\ref{cor:USAD_test}). Conversely, if adversarial examples are diluted to a \textbf{vanishing fraction} (CE mixing level $\to 100\%$), the mixture can become effectively indistinguishable from clean data; in this regime, the rejection probability is controlled by the significance level $\alpha$ with finite-sample Type-I error guarantees (Theorem~\ref{thm:perm}).
\end{remark}

\vspace{-1.2em}
\section{Experiments}
\label{sec: experiment}
We evaluate our approaches on ResNet-18/50 \citep{resnet} and ViT-B-16 \citep{vit} trained on \textit{CIFAR-10} \citep{cifar} and \textit{ImageNet-1K} \citep{deng2009imagenet}. 
We report the main detection results under $\ell_\infty$ and $\ell_2$ attacks, adaptive attacks, CE-mixing settings, and empirical Type-I error control in the main text. Full experimental details are deferred to Appendix~\ref{app:more_exp} and~\ref{appendix:exp}.

\noindent
\textbf{Baselines.}
We compare our methods against the state-of-the-art SAD method: \textbf{SAMMD}~\cite{Gao:Liu:Zhang:Han:Liu:Niu:Sugiyama2021}, a MMD-based detector that uses semantic features, which significantly outperforms all the single MMD-based two-sample testing methods (e.g., MMD-D~\cite{Liu:Xu:Lu:Zhang:Gretton:Sutherland2020}, C2ST~\cite{Cheng:Cloninger2019}), we refer readers interested in these experimental results to SAMMD.

\begin{figure*}[t]
\begin{center}
\centerline{\includegraphics[width=0.85\linewidth]{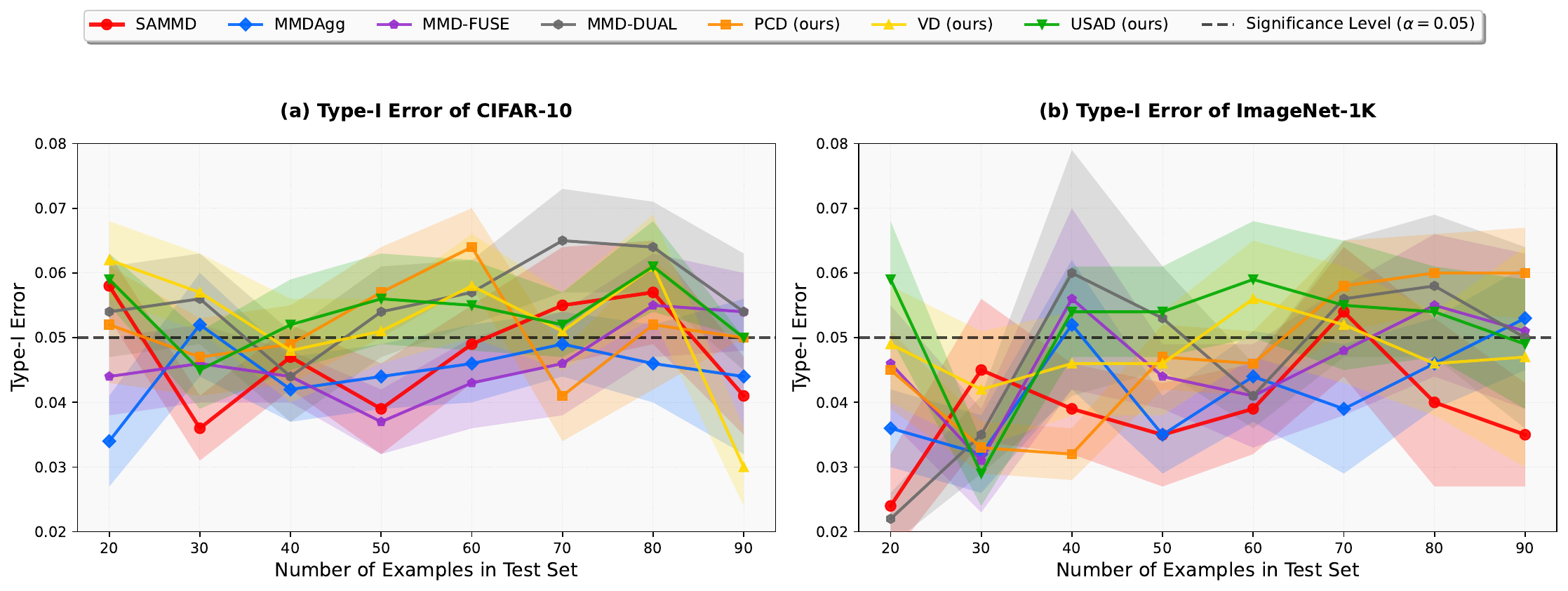}}
\caption{Results $(a\!-\!b)$ are type I error (false alarm rate) control check where given examples are actually drawn from clean examples under different clean example sizes. The results are averaged over $1,000$ repetitions and the ideal type I error is around the significance level $\alpha = 0.05$ (only $\alpha$\% of the chance to reject clean examples). The target model is ResNet-18 trained on \textit{CIFAR-10} dataset in $(a)$, and the target model is ResNet-50 trained on \textit{ImageNet-1K} dataset in $(b)$.} 
\label{exp:typeI}
\end{center}
\vspace{-2em}
\end{figure*}

\noindent
\textbf{Evaluation Metric.}
We evaluate detection performance using test power, which measures the ability to correctly identify adversarial examples while maintaining a significance level of $\alpha=0.05$. Figure~\ref{exp:main}(a--d) and Figure~\ref{exp:main}(e--h) report the test power under five attacks, where the query batches consist entirely of adversarial examples: \emph{AutoAttack} (AA) \cite{croce2020reliable}, \emph{basic iterative method} (BIM) \cite{kurakin2017adversarial}, \emph{Carlini \& Wagner} (CW) attack \cite{carlini2017towards}, \emph{fast gradient sign method} (FGSM) \cite{goodfellow2015explaining} and \emph{projected gradient descent} (PGD) \cite{madry2018towards} under different target models that trained on either \textit{CIFAR-10} or \textit{ImageNet-1K}.

\noindent
\textbf{Adaptive Attack Formulation.}
To evaluate robustness under adaptive attacks, we follow the strategy of \citet{Gao:Liu:Zhang:Han:Liu:Niu:Sugiyama2021} and generate adversarial examples on CIFAR-10 and ImageNet using a PGD-based white-box attack, where both the classifier and our detector are fully accessible to the attacker. 
The attack is designed to optimize two competing objectives: it increases the classification loss to ensure the generated examples remain adversarial, while simultaneously suppressing the detector's test statistic so that the adversarial batch appears statistically closer to the clean reference batch.
Implementation details of the proposed adaptive attack can be found in Appendix~\ref{A: implementation details}.

\noindent
\textbf{Overall Result.} Our results demonstrate that USAD consistently achieves superior performance across all attacks. Notably, for all attacks, USAD reaches perfect detection (test power $\approx 1.0$) whenever the perturbation budget $\epsilon \geq 4/255$ under a relatively small example size $|Y| = 50$, and even our proposed single uncertainty-aware methods PCD and VD can reach perfect detection whenever $\epsilon \geq 6/255$, all three significantly outperforming the baseline SAMMD.
Additional results in Figures~\ref{exp:l2} further show that USAD remains effective under $\ell_2$ attacks.

\noindent
\textbf{Impact of Sample Size.}
To study robustness to varying example sizes, we compare PCD and VD to the single-kernel MMD baseline (SAMMD), and compare USAD to representative multi-kernel MMD tests (e.g., \textbf{MMDAgg}~\cite{Schrab:Kim:Albert:Laurent:Guedj:Gretton2023}, \textbf{MMD-FUSE}~\cite{Biggs:Schrab:Gretton2023}, and \textbf{MMD-DUAL}~\cite{zhou2025dual}) that boost power in small-sample regimes via kernel aggregation. Table~\ref{tab:sample_size} reports results as the query size ranges from 10 to 50. PCD and VD consistently outperform SAMMD, and USAD consistently outperforms MMD-based kernel aggregation methods, maintaining power $1.0$ even with \emph{only 10 query examples}.

\noindent
\textbf{Performance against Adaptive Attack.}
Inspired by \citet{Gao:Liu:Zhang:Han:Liu:Niu:Sugiyama2021,kaya2022generating}, we evaluate adaptive attacks designed to evade SAD methods. Specifically, we use adaptive PGD variants that maximize the adversarial cross-entropy loss while minimizing the discrepancy measured by the target statistic (VD, PCD, or USAD; see Eqs.~\eqref{eq:vd_hat}, \eqref{eq:PCD_hat}, and \eqref{eq:usad_statistic}). Table~\ref{tab:adaptive} shows that although power drops compared to non-adaptive attacks, USAD still achieves $0.818$ power with only $10$ examples, indicating strong robustness. The results further suggest that PCD better retains power under adaptive attacks: such attacks can suppress the global spread signal used by VD, whereas reducing the local instability captured by PCD is harder while preserving adversarial effectiveness. More importantly, these results highlight a key advantage of SAD over single-example detectors (e.g., LID \citep{ma2018characterizing} and EPS-AD \citep{zhang2023detecting}): adaptive attacks can reduce power in small batches, but as batch size grows, SAD consistently separates AEs from CEs.

\textbf{Impact of CE mixing levels.}
We evaluate all methods as the fraction of \emph{clean examples (CEs)} in the query batch increases from $0\%$ to $100\%$. As reported in Table~\ref{tab:mixing_level}, detection becomes harder as the CE proportion grows (the induced shift weakens), yet our method attains the highest power across mixing levels. In particular, a non-negligible adversarial fraction induces a detectable shift, whereas a vanishing fraction (i.e., $100\%$ CE) can become effectively indistinguishable from clean data, in which case the rejection rate (Type-I error) stays around the significance level $\alpha=0.05$. Notably, even at a high CE mixing level of $80\%$, our VD statistic retains strong detection power (0.778), while USAD achieves 0.350 and remains the strongest aggregated method; in contrast, the MMD-based baselines achieve at most 0.096 power. We provide a more detailed discussion of this distribution-level perspective and the mixture regimes in Remark~\ref{rem:mixing_interp}.
Figure~\ref{exp:typeI} further verifies that USAD maintains Type-I error close to the prescribed significance level $\alpha=0.05$ on clean query batches.

\noindent
\textbf{Additional Results.}
Beyond the main experiments, Appendix~\ref{app:more_exp} provides further evaluations under transfer attacks, additional architectures, sparse $\ell_0/\ell_1$ attacks, adversarial video detection, and computational efficiency. These experiments test whether the observed gains persist beyond the standard $\ell_\infty$ image-attack setting, including cases where AEs are generated from surrogate models or induce different geometric perturbation patterns. Overall, the results show that USAD remains effective across diverse threat models, architectures, and data modalities, while maintaining practical inference-time efficiency.
\vspace{-0.9em}
\section{Conclusion}
This work reveals that the general-purpose metric, MMD, is insensitive to how adversarial perturbations actually manifest as distributional uncertainty changes.
We addressed this by developing two statistics to measure global and local uncertainties and combining them into an uncertainty-aware SAD detector, achieving high power in the small-sample regime with theoretically false alarm control. Our formulation offers a practical template: task‑aligned features, statistics that encode the relevant geometry, and principled aggregation to discount redundancy. 
We expect this recipe to extend beyond SAD to general distribution shift testing.

\section*{Acknowledgements}
This research was supported by the University of Melbourne’s Research Computing Services and the Petascale Campus Initiative. ZJZ and XYT are supported by the Melbourne Research Scholarship and the ARC with grant number DE240101089. JCZ and YYG are supported by the Melbourne Research Scholarship. LHP is supported by the ARC with grant number LP240100101. FL is supported by the ARC with grant number DE240101089, LP240100101, DP230101540 and the NSF\&CSIRO Responsible AI program with grant number 2303037.

\section*{Impact Statement}
This work on uncertainty-aware statistical adversarial detection raises important considerations regarding the reliability and robustness of machine learning systems deployed in security-critical settings.
We carefully evaluate the proposed method on widely adopted benchmark datasets, multiple model architectures, and diverse adversarial attacks, following established hypothesis-testing protocols that ensure controlled false-alarm rates and fair comparison with prior statistical detection approaches.
Our experimental results demonstrate that explicitly modeling distributional uncertainty can substantially improve detection power in small-sample regimes, addressing a key limitation of existing MMD-based SAD methods.

From a broader perspective, our method operates entirely at inference time and does not rely on adversarial training, attack-specific assumptions, or external supervision, reducing the risk of overfitting to known threats or amplifying unintended biases.
We hope this work contributes to the development of more reliable and trustworthy defenses against adversarial attacks, and encourages further research into uncertainty-aware and statistically principled approaches for safe and robust machine learning deployment.

\bibliography{main}
\bibliographystyle{icml2026}

\newpage
\appendix
\onecolumn

\newpage
\section{More Discussions on the Scope, Position, and Limitations of SAD}\label{app:discussion}

This paper focuses on {\em statistical adversarial detection}~(SAD) as an active research area and a promising paradigm complementary to existing adversarial defense strategies.
We have identified the fundamental limitation of existing SAD practices and proposed a concrete variant termed {\em Uncertainty-aware Statistical Adversarial Detection}~(USAD) accordingly.

\textbf{Scope of SAD.}
By framing adversarial detection as a two-sample hypothesis test, SAD offers two practical advantages: it treats the classifier as a black box without requiring attack-specific training, and it provides a \emph{lightweight monitoring primitive}.
Once a query window is flagged as distributionally suspicious, the system can throttle the source, trigger more expensive defenses, or route the traffic for human inspection, while maintaining controlled false-alarm rates.
Beyond test-time monitoring, the same batch-level tests can also support \emph{data cleaning} in large-scale training pipelines, where coordinated manipulation may contaminate entire mini-batches from distributed or third-party data sources.
In this scope, USAD strengthens SAD by substantially improving detection reliability in small and early-stage windows, where MMD-based methods often lack sufficient power.
This makes USAD particularly well-suited for timely monitoring and data vetting in realistic deployment settings~\citep{Gao:Liu:Zhang:Han:Liu:Niu:Sugiyama2021}.

\textbf{Position of USAD.}
We emphasize that USAD (and other SAD variants) is \underline{not intended to replace per-example detectors}.
Rather, it serves as a statistical monitor operating on short query windows.
Detecting individual AEs in isolation is known to be nearly as hard as robust classification and typically relies on strong attack-specific assumptions~\citep{tramer2022detecting}.
In contrast, SAD deliberately trades per-example granularity for rigorous false-alarm control and reliable detection power at the window level, making it a natural building block for monitoring and throttling in deployed systems.
In Section~\ref{sec: experiment}, we further show that USAD maintains high power even when AEs are sparsely mixed with CEs within each window.

\textbf{Limitations of SAD.}
That said, we acknowledge the limitations of SAD in its current forms.
Prior to USAD, SAD is instantiated almost exclusively with \emph{maximum mean discrepancy} (MMD)~\citep{Gretton:Borgwardt:Rasch:Scholkopf:Smola2012} as the test statistic $\cT(X, Y)$~\citep{Gao:Liu:Zhang:Han:Liu:Niu:Sugiyama2021, zhang2025one}.
These MMD-based SAD methods have shown strong performance against {\em unknown} and even {\em adaptive} attacks, while enjoying rigorous false-alarm control through hypothesis testing.
However, they are inherently {\em sample-inefficient}: reliable detection typically requires query windows with sufficiently large effective sample sizes and non-trivial adversarial fractions to achieve high test power.
This limitation stems from the fact that MMD primarily captures discrepancies in kernel mean embeddings, whose estimation variance can be substantial in small-sample regimes.
As shown in Figure~\ref{fig:mmd_test_power}, the test power of MMD-based SAD degrades rapidly as the query window size decreases, even under strong adversarial perturbations, where the induced distributional shift is often overwhelmed by statistical noise.
This sample inefficiency critically limits practical deployment, since defenders cannot realistically wait for large query windows to accumulate in streaming or online settings, and attackers inherently control the number and timing of adversarial queries.
As a result, MMD-based SAD methods can fail precisely in early-stage or low-volume attack regimes where timely detection is most important.

\section{Permutation Test}\label{app:perm}

We detail the permutation test used throughout {\em statistical adversarial detection}~(SAD) to calibrate the decision threshold $t_\alpha$ for our proposed statistics (VD, PCD, and their aggregate USAD).

\paragraph{From SAD to Calibration.}
For completeness, we briefly review the problem setup of SAD and the reason for conducting the permutation test.
\cref{sec:sadd_revisit} formulates SAD as a two-sample hypothesis testing problem: given a reference set of {\em clean examples}~(CEs) $X = \{ \x_i \}_{i=1}^{n} \stackrel{\text{i.i.d.}}{\sim} \bP$ and a query set $Y = \{ \y_j \}_{j=1}^{m} \stackrel{\text{i.i.d.}}{\sim} \bQ$, we test
\begin{equation*}
    \H_0 : \bP = \bQ
    \quad \text{vs.} \quad
    \H_1 : \bP \neq \bQ.
\end{equation*}
At significance level $\alpha$, we want a {\em test statistic} $\cT(X,Y)$ and a {\em testing threshold} $t_\alpha$ such that
\[
\Pr\!\left(
    \cT(X,Y) > t_\alpha
    \mid \H_0
\right)
\leq \alpha,
\]
so that rejecting $\H_0$ (declaring $Y$ adversarial) has a controlled false-alarm rate.
That is, the probability of falsely identifying the query data $Y$ as adversarial is at most $\alpha$.
In this paper, $\cT(X,Y)$ can be instantiated by VD (\cref{sec:vd} and Eq.~\eqref{eq:vd_hat}), PCD (\cref{sec:pcd} and Eq.~\eqref{eq:PCD_hat}), or their aggregate USAD (\cref{sec: dual} and Eq.~\eqref{eq:usad_statistic}).

Here, the missing piece is $t_\alpha$, which may be defined as an upper $(1-\alpha)$-quantile of the null distribution of $\cT(X,Y)$ under $\H_0$.
For most SAD methods and modern statistics (including MMD, VD, PCD, and USAD), this distribution is not available in closed form.
We therefore exploit the {\em exchangeability} property of the SAD setup: {\em under $\H_0:\bP=\bQ$, samples from $X$ and $Y$ are i.i.d., so the labels ``reference'' and ``query'' are exchangeable}, and construct an empirical null distribution using a {\em permutation test} with $X$ and $Y$.
We finally determine the {\em testing threshold} $t_\alpha$ calibrated using this empirical null distribution.

\paragraph{Statistics Under Test.}
For notational convenience, we write
\[
\mathrm{STAT} \in
\Big\{
    \widehat{\mathrm{VD}}(\cdot,\cdot;\ell),\;
    \widehat{\mathrm{PCD}}(\cdot,\cdot;\rho),\;
    \cT^{\rm A}(\cdot,\cdot;\ell,\rho)
\Big\}
\]
and use $\phi$ to denote the corresponding kernel choice
($\phi=\ell$ for VD, $\phi=\rho$ for PCD, and
$\phi=\{\ell,\rho\}$ for USAD).

For USAD, $\widehat{\Sigma}_{\mathrm A}$ is estimated as in
Eq.~\eqref{eq:sigma_aggregate} using an auxiliary clean calibration set
$X_{\rm cal}\sim\bP$ that is independent of $(X,Y)$.
Let $\mathcal A$ denote $X_{\rm cal}$ and the randomness used in estimating
$\widehat{\Sigma}_{\mathrm A}$.
Conditional on $\mathcal A$, $\widehat{\Sigma}_{\mathrm A}$ is fixed, and the same matrix is used to compute both the observed and all permuted USAD statistics.
For VD and PCD, $\mathcal A$ may be taken to be empty.
For PCD and USAD, each sample-specific covariance matrix $\Sigma_{\z}$ is constructed using perturbation randomness independent of the reference/query labels, and the pair $(\z,\Sigma_{\z})$ is permuted together.

\paragraph{Permutation Null Construction.}
We have previously mentioned that $X$ and $Y$ are exchangeable under $\H_0$ and construct the empirical null distribution using the pooled samples
\begin{equation*}
    Z
    \triangleq
    (\x_1,\dots,\x_n,\y_1,\dots,\y_m)
    =
    (\z_1,\dots,\z_{n+m}).
\end{equation*}
Let $\mPi_{n+m}$ be the set of permutations of
$\{1,\dots,n+m\}$.
For each permutation
$\mpi=(\pi_1,\dots,\pi_{n+m})\in\mPi_{n+m}$, we form
\begin{equation*}
    X_{\mpi}
    =
    \{\z_{\pi_i}\}_{i=1}^{n}
    \quad \text{and} \quad
    Y_{\mpi}
    =
    \{\z_{\pi_i}\}_{i=n+1}^{n+m},
\end{equation*}
and define
\begin{equation*}
    \mathrm{STAT}(\mpi Z;\phi)
    \triangleq
    \mathrm{STAT}(X_{\mpi},Y_{\mpi};\phi),
\end{equation*}
where
$\mpi Z=(\z_{\pi_1},\dots,\z_{\pi_{n+m}})$
denotes the shuffled pooled samples.

Under $\H_0$, the entries of $Z$ are i.i.d., so $Z$ is exchangeable: permuting its indices does not change its distribution.
Let
\[
\mathcal O(Z)
\triangleq
\{\mpi Z:\mpi\in\mPi_{n+m}\}
\]
denote the permutation orbit of $Z$.
Conditional on $\mathcal A$ and $\mathcal O(Z)$, the observed reference/query assignment is uniformly distributed over the assignments induced by $\mPi_{n+m}$.
Because the same statistic, including the same fixed
$\widehat{\Sigma}_{\mathrm A}$ for USAD, is applied to every assignment, the resulting permutation distribution is a valid conditional null distribution.

\paragraph{Monte-Carlo Approximation.}
Enumerating all $\binom{n+m}{n}$ distinct reference/query partitions is usually impossible, so we approximate the permutation distribution by sampling $R$ permutations.
Let
\[
\mathrm{STAT}^{(0)}
\triangleq
\mathrm{STAT}(X,Y;\phi)
\]
denote the observed statistic.
For each $r=1,\dots,R$, we
\begin{enumerate}
  \item draw
      $\mpi_{(r)}\overset{\mathrm{i.i.d.}}{\sim}
      \mathrm{Unif}(\mPi_{n+m})$,
      independently of $Z$ and $\mathcal A$;
  \item construct
      $X_{(r)}=X_{\boldsymbol{\pi}_{(r)}}$
      and
      $Y_{(r)}=Y_{\boldsymbol{\pi}_{(r)}}$;
  \item compute
    \[
      \mathrm{STAT}^{(r)}
      =
      \mathrm{STAT}(\mpi_{(r)}Z;\phi)
      =
      \mathrm{STAT}(X_{(r)},Y_{(r)};\phi).
    \]
\end{enumerate}
This gives $R$ permutation statistics
$\{\mathrm{STAT}^{(r)}\}_{r=1}^{R}$.
Let
\[
\mathrm{STAT}_{[1]}
\leq
\cdots
\leq
\mathrm{STAT}_{[R]}
\]
denote their ordered values, and define
\[
k_{\alpha,R}
=
\left\lceil
    (1-\alpha)(R+1)
\right\rceil.
\]
The finite-$R$ rank-corrected {\em testing threshold} is
\begin{equation}\label{eq:perm_threshold}
t_\alpha
\bigl(
Z,\{\mpi_{(r)}\}_{r=1}^{R}
\bigr)
=
\begin{cases}
\mathrm{STAT}_{[k_{\alpha,R}]},
& k_{\alpha,R}\leq R,\\[2pt]
+\infty,
& k_{\alpha,R}=R+1.
\end{cases}
\end{equation}
Equivalently, the corresponding Monte-Carlo permutation $p$-value is
\begin{equation}\label{eq:perm_pvalue}
\widehat p_R
=
\frac{
    1+
    \sum_{r=1}^{R}
    \bI\!\left[
        \mathrm{STAT}^{(r)}
        \geq
        \mathrm{STAT}^{(0)}
    \right]
}{
    R+1
}.
\end{equation}
The additive one accounts for the observed assignment, while the use of
``$\geq$'' makes the test conservative in the presence of ties.
By construction,
\[
\widehat p_R\leq\alpha
\quad\Longleftrightarrow\quad
\mathrm{STAT}^{(0)}
>
t_\alpha
\bigl(
Z,\{\mpi_{(r)}\}_{r=1}^{R}
\bigr).
\]

\paragraph{Decision Rule.}
Finally, the test with observed statistic
$\mathrm{STAT}(X,Y;\phi)$ is defined as
\begin{equation}\label{eq:test_pro}
\begin{split}
\delta_{\alpha}
&=
\bI\left[
    \mathrm{STAT}(X,Y;\phi)
    >
    t_\alpha
    \bigl(
        Z,\{\mpi_{(r)}\}_{r=1}^{R}
    \bigr)
\right]
\\
&=
\bI\left[
    \widehat p_R\leq\alpha
\right].
\end{split}
\end{equation}
Conditional on $\mathcal A$ and $\mathcal O(Z)$, the observed labeling and the $R$ randomly permuted labelings are exchangeable.
Therefore, $\widehat p_R$ is super-uniform under $\H_0$, yielding
\[
\Pr\left(
    \delta_\alpha=1
    \mid \mathcal A
\right)
\leq\alpha.
\]
Averaging over the independent calibration set and its calibration randomness gives
\[
\Pr_{\H_0}(\delta_\alpha=1)
\leq\alpha.
\]
Thus, the false-alarm probability is controlled at the prescribed significance level.
\section{Maximum Mean Discrepancy}\label{app:mmd}

\cref{sec:sadd_revisit} frames SAD as a two-sample testing~(TST) between the reference clean distribution $\bP$ and a query distribution $\bQ$ of potentially {\em adversarial examples}~(AEs).
The {\em maximum mean discrepancy} (MMD) is a popular kernel two-sample statistic used by existing SAD methods and by the SAMMD baseline evaluated in this paper. 
This section provides a self-contained account of kernel-based TST and of the population and empirical forms of MMD that underlie Eq.\eqref{eq: mmd}.

\smallskip
\noindent
\textbf{Kernel-based Two-sample Testing~(TST)} asks whether two independent samples are generated from the same distribution \cite{Gretton:Borgwardt:Rasch:Scholkopf:Smola2012}.
Formally, let $\cX$ be the input space and $X = \{ \x_i \}_{i=1}^{n} \stackrel{\text{i.i.d.}}{\sim} \bP$ and a query set $Y = \{ \y_j \}_{j=1}^{m} \stackrel{\text{i.i.d.}}{\sim} \bQ$.
The goal of TST is to decide whether $\bP = \bQ$ ({\it cf.} Eq.\eqref{eq:sadd_as_ht}).
When inputs are in high-dimensional spaces (e.g., images), directly estimating the densities of $\bP$ and $\bQ$ is fragile.
Kernel methods emerge as a widely used class of non-parametric approaches to sidestep this fragility.
These methods map each distribution into a {\em reproducing kernel Hilbert space}~(RKHS) where distributions are represented by their mean embeddings, and differences between distributions become distances between points in that RKHS~\cite{Gretton:Borgwardt:Rasch:Scholkopf:Smola2012,Muandet:Fukumizu:Sriperumbudur:Scholkopf2017}.
By definition, let $\kappa:\cX\times\cX\rightarrow\bR$ be a positive definite kernel with RKHS $\cH_\kappa$ and associated map $\kappa(\cdot, \x) \in \cH_\kappa$.
For any probability measure $\bR$ on $\cX$ with $E_{r \sim \bR} \left[ \sqrt{\kappa(r, r)} \right] < \infty$, its kernel mean embedding is 
\begin{equation*}
    \mmu_{\bR} = E_{r \sim \bR} \left[ \kappa(\cdot, r) \right] \in \cH_{\kappa}.
\end{equation*}
Specializing to $\bP$ and $\bQ$ gives
\begin{equation*}
    \mmu_\bP = E_{\x \sim \bP} \left[ \kappa(\cdot, \x) \right] \quad \text{and} \quad \mmu_\bQ = E_{\y \sim \bQ} \left[ \kappa(\cdot, \y) \right].
\end{equation*}
If $\kappa$ is {\em characteristic}, this embedding is injective~\cite{Biggs:Schrab:Gretton2023}:
\begin{equation*}
    \mmu_{\bP} = \mmu_{\bQ} \;\; \textit{ if and only if } \;\; \bP = \bQ.
\end{equation*}
Kernel-based TST, therefore, rephrases the hypotheses as 
\begin{equation}\label{eq:hype_tst}
\H^{\textnormal M}_0: \mmu_\bP=\mmu_\bQ\ \ \ \text{and}\ \ \ \H^{\textnormal M}_1:\mmu_\bP\neq\mmu_\bQ\ ,
\end{equation}
and bases the test on a norm of $\mmu_{\bP} - \mmu_{\bQ}$.
Specifically, the {\em mean maximum discrepancy}~(MMD) statistic~\cite{Gretton:Borgwardt:Rasch:Scholkopf:Smola2012} uses the squared RKHS distance (i.e., $\ell_2$ norm) between these mean embeddings, which admits a simple expression in terms of kernel evaluations.

\smallskip
\noindent\textbf{Maximum Mean Discrepancy (MMD)}~\cite{Gretton:Borgwardt:Rasch:Scholkopf:Smola2012} is 
a popular kernel-based metric that assesses the kernel mean embedding equality with kernel $\kappa$:
\begin{equation*}
    \mathrm{MMD}^2(\bP,\bQ,\kappa) = \|\mmu_\bP-\mmu_\bQ\|_{\cH_\kappa}^2.
\end{equation*}
Using the reproducing property $\langle \kappa(\cdot, \x), \kappa(\cdot, \x^\prime) \rangle_{\cH_\kappa} = \kappa(\x, \x^{\prime})$ and expanding the square gives us the following:
\begin{equation*}
    \mathrm{MMD}^2(\bP, \bQ; \kappa) = E_{\x, \x^\prime \stackrel{\text{i.i.d.}}{\sim} \bP} \left[ \kappa(\x, \x^\prime) \right] + E_{\y, \y^\prime \stackrel{\text{i.i.d.}}{\sim} \bQ} \left[ \kappa(\y, \y^\prime) \right] - 2 E_{\x \sim \bP, \y \sim \bQ} \left[ \kappa(\x, \y) \right].
\end{equation*}
For a characteristic kernel, $\mathrm{MMD}^2(\bP, \bQ; \kappa) = 0$ if and only if $\bP = \bQ$~\cite{Gretton:Borgwardt:Rasch:Scholkopf:Smola2012, Biggs:Schrab:Gretton2023}.

\smallskip
\noindent
\textbf{Empirical Estimator.}
In practice, the distributions $\bP$ and $\bQ$ are rarely available in closed form, as well as the population embeddings $\mmu_{\bP}$ and $\mmu_{\bQ}$.
For the samples $X$ and $Y$, we can construct an unbiased estimator of $\mathrm{MMD}^2(\bP, \bQ; \kappa)$ by the U-statistic:
\begin{equation*}
\widehat{\textnormal{MMD}}^2(X,Y;\kappa)= \sum_{i\neq j}^n\frac{\kappa(\x_i,\x_j)}{n(n-1)}-\sum_{i=1}^{n}\sum_{j=1}^{m}\frac{2\kappa(\x_i,\y_j)}{nm}+\sum_{i\neq j}^m\frac{\kappa(\y_i,\y_j)}{m(m-1)}\ .
\end{equation*}
Under the sampling $\x_i \stackrel{\text{i.i.d.}}{\sim} \bP$ and $\y_i \stackrel{\text{i.i.d.}}{\sim} \bQ$, this estimator satisfies $E\left[ \widehat{\textnormal{MMD}}^2(X,Y;\kappa)\right] = \mathrm{MMD}^2(\bP, \bQ; \kappa)$~\cite{Gretton:Borgwardt:Rasch:Scholkopf:Smola2012}.

In the main text, we denote the population quantity by $\mathrm{MMD}^2(\bP, \bQ; \kappa)$ (Eq.\eqref{eq: mmd}).
When MMD is used as a SAD {\em test statistic}, its empirical estimate $\widehat{\textnormal{MMD}}^2(X,Y;\kappa)$ plays the role of $\cT(X, Y)$ in~\cref{sec:sadd_revisit}.
Then, a permutation test~(detailed in \cref{app:perm}) is applied to $\widehat{\textnormal{MMD}}^2(X,Y;\kappa)$ to obtain a threshold $t_\alpha$ such that $\Pr(\cT(X, Y) > t_\alpha \mid \H_0) \approx \alpha$.

\smallskip
\noindent
\textbf{MMD on Semantic Features.}
In existing SAD practices~\cite{Gao:Liu:Zhang:Han:Liu:Niu:Sugiyama2021, zhang2025one}, MMD-based SAD methods do not work directly on raw pixels, but on semantic features $f(\x) \in \cF$ typically extracted from the penultimate layer of the threat classifier (i.e., the classifier under attack).
This is equivalent to composing the kernel with the feature map $\kappa (f(\x), f(\x^\prime))$
and computing $\widehat{\textnormal{MMD}}^2(X,Y;\kappa)$.
The SAMMD baseline used in our experiments applies this feature-space MMD, together with calibrated thresholds, to detect adversarial deviations; the mathematical definitions above remain unchanged under this composition.
\setlength{\parindent}{0pt}
\section{Proofs of Theoretical Analysis}\label{app:proof}
We provide here the proofs of the theoretical results presented in the main text.  
We begin by presenting several foundational concepts and concentration results for $U$-statistics, which serve as key tools in our analysis.
\begin{definition}\label{def:U_sta}\cite{Kim:Balakrishnan:Wasserman2022}
Let $g(\x,\y)$ be a bivariate function, which is symmetric in its arguments, i.e., $g(\x,\y)=g(\y,\x)$. We define a function for a two-sample $U$-statistic as $h(\x_1,\x_2;\y_1,\y_2)$, which is composed of evaluations of $g$ over the variables $\x_1,\x_2,\y_1$ and $\y_2$, and is symmetric, i.e., $h(\x_1,\x_2;\y_1,\y_2)=h(\x_2,\x_1;\y_2,\y_1)$.
Suppose we have two samples $X=\{\x_i\}_{i=1}^n\sim\bP^n$ and $Y=\{\y_i\}_{i=1}^m\sim\bQ^m$. The two-sample U-statistic is given by:
\[
U_{n,m}(X,Y;\phi) = \binom{n}{2}^{-1}\binom{m}{2}^{-1} \sum_{i<j}^n\sum_{k<l}^m h(\x_i,\x_j;\y_k,\y_l)\ .
\]
\end{definition}

Next, we present a standard large-deviation bound for bounded $U$-statistics.
\begin{theorem}\label{thm:larg_U}\cite{Hoeffding1994}
    If the function $h$ is bounded, that is, $a \leq h(\x_1,\x_2;\y_1,\y_2) \leq b$, and assume $n \leq m$, then for any $t > 0$,
    \[
    \Pr(|U_{n,m}(X,Y;\phi)-\theta|\geq t)\leq2\exp{(-2\lfloor n/r\rfloor t^2/(b-a)^2)}\ ,
    \]
    where $\theta=E[h(\x_1,\x_2;\y_1,\y_2)]$.
\end{theorem}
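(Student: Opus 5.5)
The plan is Hoeffding's classical reduction of a $U$-statistic to an average of sums of independent blocks, followed by a Chernoff bound. Here $r=2$ is the number of arguments $h$ takes from each sample, so $\lfloor n/r\rfloor=\lfloor n/2\rfloor$. Set $k\triangleq\lfloor n/2\rfloor$. For permutations $\sigma$ of $\{1,\dots,n\}$ and $\tau$ of $\{1,\dots,m\}$, define the block average
\[
W(\sigma,\tau)=\frac{1}{k}\sum_{i=1}^{k} h\bigl(\x_{\sigma(2i-1)},\x_{\sigma(2i)};\y_{\tau(2i-1)},\y_{\tau(2i)}\bigr).
\]
This is well defined because $n\le m$ guarantees $2k\le m$, so there are enough $\y$'s to fill the $k$ blocks.

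\textbf{Step 1: averaging identity.} I will show that $U_{n,m}(X,Y;\phi)=\frac{1}{n!\,m!}\sum_{\sigma,\tau}W(\sigma,\tau)$. By symmetry, each ordered choice of (pair of $\x$-indices, pair of $\y$-indices) appears in a given block position for the same number of $(\sigma,\tau)$. Together with the symmetry $h(\x_1,\x_2;\y_1,\y_2)=h(\x_2,\x_1;\y_2,\y_1)$, this means every term $h(\x_i,\x_j;\y_k,\y_l)$ with $i<j$, $k<l$ receives equal total weight. Comparing total weights then gives exactly the normalisation $\binom{n}{2}^{-1}\binom{m}{2}^{-1}$.

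\textbf{Step 2: block independence.} For each fixed $(\sigma,\tau)$, the $k$ summands of $W(\sigma,\tau)$ use disjoint sets of indices. They are therefore i.i.d., each taking values in $[a,b]$ with mean $\theta$.

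\textbf{Step 3: Chernoff bound.} Fix $s>0$. Since $\exp$ is convex, Jensen's inequality applied to the average in Step 1 gives
\[
E\,e^{s(U_{n,m}-\theta)}\le\frac{1}{n!\,m!}\sum_{\sigma,\tau}E\,e^{s(W(\sigma,\tau)-\theta)}.
\]
By independence from Step 2 and Hoeffding's lemma applied to each block, every term on the right is at most $\exp\bigl(s^2(b-a)^2/(8k)\bigr)$. Markov's inequality then yields
\[
\Pr(U_{n,m}-\theta\ge t)\le\exp\bigl(-st+s^2(b-a)^2/(8k)\bigr).
\]
Choosing $s=4kt/(b-a)^2$ gives $\exp\bigl(-2kt^2/(b-a)^2\bigr)$. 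The same argument applied to $-h$ bounds the lower tail by the same quantity, and a union bound over the two tails produces the factor $2$.

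I expect the main obstacle to be the combinatorial bookkeeping in Step 1: showing that the permutation average reproduces exactly the weights of $U_{n,m}$ even when $n$ or $m$ is odd, so that some indices are left unused in each block decomposition. I would handle this by noting that the set of all $(\sigma,\tau)$ is invariant under relabeling of the indices. Hence every unordered pair from $X$ combined with every unordered pair from $Y$ occupies the block positions equally often, and uniform weighting follows directly.
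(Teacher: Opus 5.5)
Steps~2 and~3 are correct. They are exactly Hoeffding's classical argument, which is all the paper relies on here: it states the bound with a citation and gives no proof of its own. The blocks on disjoint indices are i.i.d., Jensen moves the exponential inside the permutation average, Hoeffding's lemma bounds each block, and $s=4kt/(b-a)^2$ gives the exponent.

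The genuine problem is in Step~1, and it is not the parity issue you anticipated. For any parity, each block position carries a uniformly distributed index pair, and unused indices play no role. Under uniform $(\sigma,\tau)$, the \emph{ordered} pairs $(\sigma(2i-1),\sigma(2i))$ and $(\tau(2i-1),\tau(2i))$ are uniform over ordered distinct pairs, so
\[
\frac{1}{n!\,m!}\sum_{\sigma,\tau}W(\sigma,\tau)=\frac{1}{n(n-1)m(m-1)}\sum_{p\neq q}\sum_{r\neq s}h(\x_p,\x_q;\y_r,\y_s).
\]
The symmetry in Definition~\ref{def:U_sta} is only the \emph{simultaneous} swap $h(\x_1,\x_2;\y_1,\y_2)=h(\x_2,\x_1;\y_2,\y_1)$. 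It pairs $h(\x_p,\x_q;\y_s,\y_r)$ (with $p<q$, $r<s$) with $h(\x_q,\x_p;\y_r,\y_s)$, and neither term occurs in $U_{n,m}$. Your average is therefore the $U$-statistic of $\tfrac12\bigl[h(\x_1,\x_2;\y_1,\y_2)+h(\x_1,\x_2;\y_2,\y_1)\bigr]$, not $U_{n,m}$. The identity fails in general: it requires $h$ to be symmetric in $(\x_1,\x_2)$ and in $(\y_1,\y_2)$ separately.

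This matters for the paper's own use. Take $n=m=2$ and the MMD-type kernel $h=g(\x_1,\x_2)+g(\y_1,\y_2)-g(\x_1,\y_2)-g(\x_2,\y_1)$ underlying $h_{\textnormal{PCD}}$. Then $U_{2,2}$ has cross part $-g(\x_1,\y_2)-g(\x_2,\y_1)$, while your average has $-\tfrac12\sum_{p,r=1}^{2}g(\x_p,\y_r)$. The fully symmetric $h_{\textnormal{VD}}$ is unaffected.

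The repair needs no symmetry at all: sort the indices within each block. Use $h(\x_{a_i},\x_{b_i};\y_{c_i},\y_{d_i})$ with $a_i=\min\{\sigma(2i-1),\sigma(2i)\}$, $b_i=\max\{\sigma(2i-1),\sigma(2i)\}$, and $c_i,d_i$ defined likewise from $\tau$. The unordered pairs are then uniform over the $\binom{n}{2}\binom{m}{2}$ choices, so the average is exactly $U_{n,m}$. The blocks still use disjoint indices and each is distributed as $h(\x_1,\x_2;\y_1,\y_2)$, so Steps~2 and~3 go through verbatim. Hoeffding's original formulation avoids the issue by summing over ordered tuples.
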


We now introduce the notation for permutation-based statistics.  
Let $\mPi_{n+m}$ denote the set of all possible permutations of $\{1,\dots,n,n+1,\dots,n+m\}$ over the pooled sample $Z=\{\x_1,\dots,\x_n,\y_1,\dots,\y_m\}=\{\z_1,\dots,\z_n,\z_{n+1},\dots,\z_{n+m}\}$.  
Given a permutation $\mpi=\left(\pi_1,\dots,\pi_n,\pi_{n+1},\dots,\pi_{n+m}\right) \in \mPi_{n+m}$, define $X_{\mpi}=\{\z_{\pi_i}\}_{i=1}^{n}$ and $Y_{\mpi}=\{\z_{\pi_i}\}_{i=n+1}^{n+m}$.

We next state a useful concentration inequality for permuted $U$-statistics, which will be repeatedly used in the proofs below.
\begin{theorem}\label{thm:6.1} \citep[Theorem 6.1]{Kim:Balakrishnan:Wasserman2022}
Consider the permuted two-sample $U$-statistic $U(X_{\mpi},Y_{\mpi};\phi)$ and assume $n \leq m$. Define
\[
\Sigma_{n,m}^2:=\frac{1}{n^2\left(n-1\right)^2} \sup _{\mpi \in \mPi_{n+m}}\left\{\sum_{i<j}^ng^2\left(\z_{\pi_i}, \z_{\pi_j}\right)\right\} .
\]
Then, for every $t>0$ and some constant $C>0$, we have
\[
\Pr\left(U(X_{\mpi},Y_{\mpi};\phi) \geq t \right) \leq \exp \left(-C \min \left(\frac{t^2}{\Sigma_{n,m}^2}, \frac{t}{\Sigma_{n,m}}\right)\right)\ .
\]
\end{theorem}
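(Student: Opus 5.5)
Since this is \citep[Theorem 6.1]{Kim:Balakrishnan:Wasserman2022}, we could simply cite it; here is how I would reprove it. Throughout I condition on the pooled sample $Z$, so the only randomness is the uniform permutation $\mpi$. I assume $h$ is the MMD-type kernel $h(\x_1,\x_2;\y_1,\y_2)=g(\x_1,\x_2)+g(\y_1,\y_2)-g(\x_1,\y_2)-g(\x_2,\y_1)$ used in the proof of Theorem~\ref{thm:PCD_test}.

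The strategy is to reduce $U(X_{\mpi},Y_{\mpi};\phi)$ to a Rademacher chaos of order two, apply a chaos tail bound, and then remove the conditioning.

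\textbf{Step 1 (Rademacher representation).} Using $n\le m$, I realize a uniform permutation in two stages.
\begin{enumerate}
\item First draw a uniform base permutation $\mpi'$.
\item Then draw independent Rademacher signs $\zeta_1,\dots,\zeta_n$. The sign $\zeta_i$ decides whether to swap $\z_{\pi'_i}$ with $\z_{\pi'_{n+i}}$.
\end{enumerate}
The composite permutation is again uniform on $\mPi_{n+m}$, so the permuted statistic has the same law as the statistic built from $(\mpi',\zeta)$.

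Conditional on $\mpi'$, expanding $h$ over the pairs $(i,j)$ gives an expression of the form $\sum_{i<j}a_{ij}\zeta_i\zeta_j+\sum_i b_i\zeta_i+c$. Each coefficient $a_{ij}$ is a signed combination of at most four values $g(\z_{\pi'_\cdot},\z_{\pi'_\cdot})$, normalized by $n(n-1)$. The key algebraic check is that the symmetry of $h$ (the $\x$-pair and $\y$-pair enter with opposite roles from the cross terms) makes $c$ and the linear terms $b_i$ vanish, or be absorbed into the quadratic part. The result is a centered, homogeneous degree-two chaos.

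\textbf{Step 2 (chaos tail bound).} For a decoupled or homogeneous Rademacher chaos I apply the Hanson--Wright / de la Pe\~na--Montgomery-Smith inequality:
\[
\Pr\Big(\sum_{i<j}a_{ij}\zeta_i\zeta_j\ge t\Big)\le \exp\!\Big(-C\min\big(t^2/\|A\|_F^2,\ t/\|A\|_{\mathrm{op}}\big)\Big).
\]
Since $\|A\|_{\mathrm{op}}\le\|A\|_F$, it suffices to show $\|A\|_F^2\lesssim \Sigma_{n,m}^2$. Squaring each $a_{ij}$ and applying $(\sum_{k=1}^4 u_k)^2\le 4\sum_k u_k^2$ gives $\|A\|_F^2\lesssim \frac{1}{n^2(n-1)^2}\sum_{i<j} g^2$. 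That sum is over pairs of pooled indices that form the first $n$ coordinates of some permutation, so it is bounded by the supremum in $\Sigma_{n,m}^2$ up to a constant.

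\textbf{Step 3 (removing the conditioning).} The bound in Step~2 depends on $\mpi'$ only through $\|A\|_F$, and $\|A\|_F$ is bounded uniformly by $C\Sigma_{n,m}$. Averaging the conditional tail over $\mpi'$ therefore yields the stated inequality with an adjusted constant $C$.

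I expect Step~1 to be the main obstacle. One must verify carefully that the swap construction reproduces the uniform law, and that no first-order (linear in $\zeta$) term survives. Such a term would only give a sub-Gaussian piece governed by a different variance proxy and would spoil the $\min(t^2/\Sigma^2,t/\Sigma)$ form. If an exact cancellation fails, I would bound the linear part separately by Hoeffding's inequality, with variance also controlled by $\Sigma_{n,m}^2$, and combine it with the chaos part by a union bound.
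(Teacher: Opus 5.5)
\textbf{There is a genuine gap in Steps 1 and 3.} The paper gives no proof of its own here; it imports Theorem~6.1 of \citet{Kim:Balakrishnan:Wasserman2022}. That proof is indeed a Rademacher-chaos argument, so your overall direction is right. However, the key claim of your Step~1 is false, and your fallback does not rescue Step~3.

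Conditional on the base permutation $\mpi'$, the swapped statistic has the form $c(\mpi')+\sum_i b_i(\mpi')\zeta_i+\sum_{i<j}a_{ij}\zeta_i\zeta_j$, and $c$ and $b_i$ are genuinely nonzero. For $n<m$, the $m-n$ unpaired query points are never swapped, and the within-sample weights $1/(n(n-1))$ and $1/(m(m-1))$ differ. Even for $n=m$, the full statistic contains cross terms $g$ between a point and its own swap partner, and the swap leaves these unchanged.

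Worse, $c(\mpi')$ and $b_i(\mpi')$ are not controlled by $\Sigma_{n,m}$ uniformly in $\mpi'$. Take $m=2n$ and a pooled sample with $n$ points of type A and $2n$ of type B. Let $g=1$ within a type and $g=0$ across types. Let $\mpi'$ put all A's in the reference slots and all B's in the partner and unpaired slots. After the random swaps, the reference batch is about half A and the query batch about a quarter A. A direct computation gives $E[U\mid\mpi']\to 1/8$, with conditional fluctuations $O(n^{-1/2})$, so $\Pr(U\ge 1/16\mid\mpi')\to 1$. Yet $\Sigma_{n,m}^2=1/(2n(n-1))$, so the target bound at $t=1/16\asymp n\Sigma_{n,m}$ is $e^{-C'n}$. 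In the same example $\sum_i b_i^2\asymp 1/n\gg\Sigma_{n,m}^2$, so Hoeffding on the linear part also fails, and the constant $c(\mpi')$ is not handled at all. Such $\mpi'$ have exponentially small probability, which is why the theorem still holds. But your Step~3 averages conditional tails using a bound that would have to hold for every $\mpi'$.

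\textbf{The missing idea} is to do the averaging at the level of the moment generating function, via convexity, before introducing the signs.
\begin{enumerate}
\item Draw $L=(l_1,\dots,l_n)$ without replacement from $\{1,\dots,m\}$ and set $\tilde U_L=\frac{1}{n(n-1)}\sum_{i\ne j}h(\x_i,\x_j;\y_{l_i},\y_{l_j})$. Averaging over $L$ reproduces exactly the weights $1/(m(m-1))$ and $-2/(nm)$ of the full statistic, partner cross terms included. So $U=E_L[\tilde U_L]$, and Jensen gives $E_{\mpi}\,e^{\lambda U(X_{\mpi},Y_{\mpi})}\le E_{\mpi,L}\,e^{\lambda \tilde U_L(X_{\mpi},Y_{\mpi})}$.
\item In $\tilde U_L$ every point has a partner, and $h(\x_i,\x_j;\y_{l_i},\y_{l_j})$ contains no term $g(\x_i,\y_{l_i})$. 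Hence swapping $\x_i\leftrightarrow\y_{l_i}$ multiplies every summand involving index $i$ by $-1$.
\item These swaps preserve the uniform law of $\mpi$. So $\tilde U_L$ has the law of $\frac{1}{n(n-1)}\sum_{i\ne j}\zeta_i\zeta_j h_{ij}$, an exactly centered homogeneous chaos for every $(\mpi,L)$.
\item Only now is a bound uniform in $(\mpi,L)$ legitimate. Use the chaos mgf bound $E_\zeta\exp(\lambda\sum_{i\ne j}a_{ij}\zeta_i\zeta_j)\le\exp(C\lambda^2\|A\|_F^2)$ for $|\lambda|\le c/\|A\|_{\mathrm{op}}$, together with your Frobenius estimate $\|A\|_F\lesssim\Sigma_{n,m}$, and finish with a Chernoff bound.
\end{enumerate}
This requires the mgf form of the chaos inequality rather than the tail form you quote, because Jensen transfers only exponential moments.
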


\subsection{Detailed Proofs of Theorem~\ref{thm:perm}}\label{app:perm_proof}
We begin with a useful definition as follows.
\begin{definition}\label{Def:invariant}\citep{Hemerik:Goeman2018} 
Let $Z$ be the sample taking values in the instance space $\cX$. Let $\cG$ be a finite set of transformations $g: \cX\rightarrow\cX$, such that $\cG$ is a group with respect to the operation of composition of transformations. Let $\cH_0$ be any null hypothesis which implies that the joint distribution of the test statistics $T(gZ)$, $g\in\cG$, is invariant under all transformations in $\cG$ of $Z$. Denote by $R$ the cardinality of the set $\cG$ and write $\cG=\{g_1,...,g_{R}\}$. We have, under $\cH_0$,
\[
\left(T(g_1Z),...,T(g_{R}Z)\right)\stackrel{d}{=}\left(T(g\cdot g_1Z),...,T(g\cdot g_{R}Z)\right) \ \ \ \text{for all}\ g\in\cG\ ,
\]
where $\stackrel{d}{=}$ denotes equality in distribution.
\end{definition}
We present the proofs of Theorem~\ref{thm:perm} as follows
\begin{proof}
For USAD, let $\mathcal{A}$ denote the auxiliary clean calibration set
$X_{\rm cal}$ and the randomness used to estimate
$\widehat{\Sigma}_{\mathrm{A}}$.
By construction, $\mathcal{A}$ is independent of $(X,Y)$.
We condition throughout on $\mathcal{A}$, under which
$\widehat{\Sigma}_{\mathrm{A}}$ is fixed and the same matrix is used for
the observed statistic and all permuted statistics.
For VD and PCD, $\mathcal{A}$ may be taken to be trivial.
For PCD, each sample and its perturbation-induced covariance matrix are
treated as one augmented observation and are permuted together.

It is easy to see that the permutation set $\mPi_{n+m}$ forms a group
under the operation of composition of transformations.
Moreover, when the test batch $Y$ consists of clean data, $X$ and $Y$
are drawn from the same distribution.
Hence, conditional on $\mathcal{A}$, $Z$ is exchangeable with respect
to $\mPi_{n+m}$, and the joint law of
\[
\left\{
\textnormal{STAT}(\mpi Z;\phi):
\mpi\in\mPi_{n+m}
\right\}
\]
is invariant under left composition by any
$\mpi\in\mPi_{n+m}$.
Therefore, Definition~\ref{Def:invariant} applies.

Let
\[
M=|\mPi_{n+m}|=(n+m)!
\]
and write
$\mPi_{n+m}
=
\{\mpi_{(1)},\dots,\mpi_{(M)}\}$.
The identity permutation is included, so
$\textnormal{STAT}(Z;\phi)
=
\textnormal{STAT}(X,Y;\phi)$
is one element of the multiset
\[
\left\{
\textnormal{STAT}(\mpi_{(1)}Z;\phi),\dots,
\textnormal{STAT}(\mpi_{(M)}Z;\phi)
\right\}.
\]
By Definition~\ref{Def:invariant}, for a uniformly distributed
$\mpi\sim\mathrm{Unif}(\mPi_{n+m})$ independent of $Z$,
\[
\begin{split}
&\big(
\textnormal{STAT}(\mpi_{(1)}Z;\phi),\dots,
\textnormal{STAT}(\mpi_{(M)}Z;\phi)
\big)
\\
&\qquad\stackrel{d}{=}
\big(
\textnormal{STAT}(\mpi\cdot\mpi_{(1)}Z;\phi),\dots,
\textnormal{STAT}(\mpi\cdot\mpi_{(M)}Z;\phi)
\big),
\end{split}
\]
conditional on $\mathcal{A}$.
Since left composition by $\mpi$ is a bijection of
$\mPi_{n+m}$, conditional on the permutation orbit
\[
\mathcal{O}(Z)
\triangleq
\{\mpi Z:\mpi\in\mPi_{n+m}\},
\]
the observed labeling is uniformly distributed over the orbit.
Consequently, the full permutation $p$-value
\[
p_{\rm full}
=
\frac{1}{M}
\sum_{i=1}^{M}
\bI\left[
\textnormal{STAT}(\mpi_{(i)}Z;\phi)
\geq
\textnormal{STAT}(Z;\phi)
\right]
\]
is super-uniform under $\H_0$.
Equivalently, the permutation rank is uniform after random tie-breaking;
counting ties as exceedances makes the resulting test conservative.

In practical implementation (see Appendix~\ref{app:perm}), we
approximate the full permutation distribution using
$R<M$ independently sampled permutations
\[
\mpi_{(1)},\dots,\mpi_{(R)}
\overset{\mathrm{i.i.d.}}{\sim}
\mathrm{Unif}(\mPi_{n+m}),
\]
independently of $Z$ and $\mathcal{A}$.
Define
\[
\textnormal{STAT}^{(0)}
=
\textnormal{STAT}(Z;\phi),
\qquad
\textnormal{STAT}^{(r)}
=
\textnormal{STAT}(\mpi_{(r)}Z;\phi),
\quad r=1,\dots,R.
\]
Conditional on $\mathcal{A}$ and $\mathcal{O}(Z)$, the observed
labeling and the $R$ randomly permuted labelings are exchangeable.
Hence,
\[
\big(
\textnormal{STAT}^{(0)},
\textnormal{STAT}^{(1)},\dots,
\textnormal{STAT}^{(R)}
\big)
\]
is exchangeable, and the Monte-Carlo permutation $p$-value
\[
\widehat p_R
=
\frac{
1+
\sum_{r=1}^{R}
\bI\left[
\textnormal{STAT}^{(r)}
\geq
\textnormal{STAT}^{(0)}
\right]
}{
R+1
}
\]
is super-uniform under $\H_0$.

To express the same test using a threshold, let
\[
\textnormal{STAT}_{[1]}
\leq\cdots\leq
\textnormal{STAT}_{[R]}
\]
be the ordered values of
$\{\textnormal{STAT}^{(r)}\}_{r=1}^{R}$ and define
\[
k_{\alpha,R}
=
\left\lceil
(1-\alpha)(R+1)
\right\rceil,
\]
and
\[
\tau_\alpha
\bigl(
Z,\{\mpi_{(r)}\}_{r=1}^{R}
\bigr)
=
\begin{cases}
\textnormal{STAT}_{[k_{\alpha,R}]},
& k_{\alpha,R}\leq R,\\[2pt]
+\infty,
& k_{\alpha,R}=R+1.
\end{cases}
\]
By construction,
\[
\widehat p_R\leq\alpha
\quad\Longleftrightarrow\quad
\textnormal{STAT}(Z;\phi)
>
\tau_\alpha
\bigl(
Z,\{\mpi_{(r)}\}_{r=1}^{R}
\bigr).
\]
Therefore,
\[
\begin{split}
\Pr(\delta_\alpha=1\mid\mathcal{A})
&\equiv
\Pr\left(
\textnormal{STAT}(Z;\phi)
>
\tau_\alpha
\bigl(
Z,\{\mpi_{(r)}\}_{r=1}^{R}
\bigr)
\;\middle|\;
\mathcal{A}
\right)
\\
&=
\Pr(\widehat p_R\leq\alpha\mid\mathcal{A})
\leq\alpha.
\end{split}
\]
Finally, averaging over the independent calibration data and its
calibration randomness yields
\[
\Pr_{\H_0}(\delta_\alpha=1)\leq\alpha.
\]
\end{proof}

\subsection{Detailed Proofs of Theorem~\ref{thm:VD_test}}\label{app:VD_test}
We present the proofs of Theorem~\ref{thm:VD_test} as follows
\begin{proof}
Recall the definition of VD as
\[
\textnormal{VD}(\bP,\bQ;\ell) = \left(V_{\bP}-V_{\bQ}\right)^2\ .
\]
The corresponding estimator is
\begin{eqnarray*}
\widehat{\textnormal{VD}}(X,Y;\ell) &=& \left(\widehat{V}(X;\ell)-\widehat{V}(Y;\ell)\right)^2\ ,
\end{eqnarray*}
where
\begin{eqnarray*}
\widehat{V}(X;\ell) = \sum_{i}^n\frac{\ell(f(\x_i),f(\x_i))}{n}-\sum_{i\neq j}^n\frac{\ell(f(\x_i),f(\x_j))}{n(n-1)}\ ,\\
\widehat{V}(Y;\ell) = \sum_{i}^m\frac{\ell(f(\y_i),f(\y_i))}{m}-\sum_{i\neq j}^m\frac{\ell(f(\y_i),f(\y_j))}{m(m-1)}\ .
\end{eqnarray*}
These two estimators can be equivalently written as second-order $U$-statistics:
\begin{eqnarray*}
\widehat{V}(X;\ell) = \binom{n}{2}^{-1}\sum_{i<j}^n\frac{\ell(f(\x_i),f(\x_i))+\ell(f(\x_j),f(\x_j))}{2}-\ell(f(\x_i),f(\x_j))\ ,\\
\widehat{V}(Y;\ell) = \binom{m}{2}^{-1}\sum_{i<j}^m\frac{\ell(f(\y_i),f(\y_i))+\ell(f(\y_j),f(\y_j))}{2}-\ell(f(\y_i),f(\y_j))\ .
\end{eqnarray*}
Assume $\ell$ is bounded on the range of $f$: $|\ell(f(\x),f(\y))|\leq K$ for all $\x,\y\in\cX$ with the constant $K>0$. Then
$|V_{\bP}|,|V_{\bQ}|,|\widehat V(X;\ell)|,|\widehat V(Y;\ell)|\le K$, and hence
\begin{eqnarray}\label{eq:split}
\left|\textnormal{VD}(\bP,\bQ;\ell)-\widehat{\textnormal{VD}}(X,Y;\ell)\right|&=&\left|\left(V_{\bP}-V_{\bQ}\right)^2-\left(\widehat{V}(X;\ell)-\widehat{V}(Y;\ell)\right)^2\right|\nonumber\\
&=&\left(\widehat{V}(X;\ell)-\widehat{V}(Y;\ell)-\left(V_{\bP}-V_{\bQ}\right)\right)\left(\widehat{V}(X;\ell)-\widehat{V}(Y;\ell)+\left(V_{\bP}-V_{\bQ}\right)\right)\nonumber\\
&\leq&\left|\widehat{V}(X;\ell)-\widehat{V}(Y;\ell)-\left(V_{\bP}-V_{\bQ}\right)\right| \cdot \left|\widehat{V}(X;\ell)-\widehat{V}(Y;\ell)+\left(V_{\bP}-V_{\bQ}\right)\right|\nonumber\\
&<&4K\cdot\left|\widehat{V}(X;\ell)-\widehat{V}(Y;\ell)-\left(V_{\bP}-V_{\bQ}\right)\right|\ .
\end{eqnarray}

Here, the estimator of $V_{\bP}-V_{\bQ}$ can be further present as a two-sample $U$-statistic as in Definition~\ref{def:U_sta}
\[
\widehat{V}(X;\ell)-\widehat{V}(Y;\ell)=\binom{n}{2}^{-1}\binom{m}{2}^{-1} \sum_{i<j}^n\sum_{k<l}^m h_{\textnormal{VD}}(\x_i,\x_j;\y_k,\y_l)\ ,
\]
where 
\begin{eqnarray*}
h_{\textnormal{VD}}(\x_i,\x_j;\y_k,\y_l) &=& \frac{\ell(f(\x_i),f(\x_i))+\ell(f(\x_j),f(\x_j))}{2}-\ell(f(\x_i),f(\x_j))\\
&&\qquad\qquad-\frac{\ell(f(\y_k),f(\y_k))+\ell(f(\y_l),f(\y_l))}{2}+\ell(f(\y_k),f(\y_l))\ .
\end{eqnarray*}
Under the alternative hypothesis $\H^{\textnormal V}_1:V_{\bP}\neq V_{\bQ}$, we have that
\[
\widehat{V}(X;\ell)-\widehat{V}(Y;\ell)\xrightarrow{P} V_{\bP}-V_{\bQ} \neq 0\ ,
\]
by Theorem~\ref{thm:larg_U}, where $\xrightarrow{P}$ denotes convergence in probability, and hence
\[
\widehat{\textnormal{VD}}(X,Y;\ell)\equiv(\widehat{V}(X;\ell)-\widehat{V}(Y;\ell))^2 \rightarrow (V_{\bP}-V_{\bQ})^2 > 0\ ,
\]
by Eq.~\eqref{eq:split}.

Under a uniformly random permutation $\mpi\in\mPi$, the two-sample $U$-statistic with permuted samples $X_{\mpi}$ and $Y_{\mpi}$ exhibit the convergence that 
\[
\widehat{V}(X_{\mpi};\ell)-\widehat{V}(Y_{\mpi};\ell)\xrightarrow{P}0\ ,
\]
based on Theorem~\ref{thm:6.1} as $n,m\rightarrow\infty$ with $n/m\rightarrow c\in(0,\infty)$. Consequently, the permutation distribution of $\widehat{V}(X_{\mpi};\ell)-\widehat{V}(Y_{\mpi};\ell)$ concentrates at zero and 
\[
\widehat{\textnormal{VD}}(X_{\mpi},Y_{\mpi};\ell)=(\widehat{V}(X_{\mpi};\ell)-\widehat{V}(Y_{\mpi};\ell))^2\xrightarrow{P}0\ .
\]
The empirical adjusted $(1-\alpha)$-quantile $\tau_\alpha(Z,\{\mpi_{(r)}\}_{r=1}^R)$ (See Appendix~\ref{app:perm} with $\textnormal{STAT}(X,Y;\phi) = \widehat{\textnormal{VD}}(X,Y;\ell)$) of VD statistic also converges to $0$.

which implies, by Slutsky’s theorem~\cite{Papoulis:Pillai2001},
\[ 
\Pr(\delta_\alpha=1)\equiv\Pr(\widehat{\textnormal{VD}}(X,Y;\ell)>\tau_\alpha(Z,\{\mpi_{(r)}\}_{r=1}^R))\ \longrightarrow\ 1\ , 
\]
where the test $\delta_\alpha$ is the permutation procedure (see Eq.~\eqref{eq:test_pro} in Appendix~\ref{app:perm} with $\textnormal{STAT}(X,Y;\phi) = \widehat{\textnormal{VD}}(X,Y;\ell)$).
\end{proof}

\subsection{Detailed Proofs of Theorem~\ref{thm:PCD_test}}\label{app:PCD_test}
We present the proofs of Theorem~\ref{thm:PCD_test} as follows
\begin{proof}
Recall the definition of PCD as
\[
\textnormal{PCD}(\bP,\bQ,\kappa)= \|\mmu_{\Sigma_\bP}-\mmu_{\Sigma_\bQ}\|_{\cH_\rho}^2=E[\rho(\Sigma_{\x},\Sigma_{\x'})+\rho(\Sigma_{\y},\Sigma_{\y'})-2\rho(\Sigma_{\x},\Sigma_{\y})]\ .
\]
The corresponding estimator is
\[
\widehat{\textnormal{PCD}}(X,Y;\rho)=\sum_{i\neq j}^n\frac{\rho(\Sigma_{\x_i},\Sigma_{\x_j})}{n(n-1)}-\sum_{i}^n\sum_{j}^m\frac{\rho(\Sigma_{\x_i},\Sigma_{\y_j})}{nm}+\sum_{i\neq j}^m\frac{\rho(\Sigma_{\y_i},\Sigma_{\y_j})}{m(m-1)}\ .
\]
which can be further present as a two-sample $U$-statistic as in Definition~\ref{def:U_sta}
\[
\widehat{\textnormal{PCD}}(X,Y;\rho)=\binom{n}{2}^{-1}\binom{m}{2}^{-1} \sum_{i<j}^n\sum_{k<l}^m h_{\textnormal{PCD}}(\x_i,\x_j;\y_k,\y_l)\ ,
\]
where 
\[
h_{\textnormal{PCD}}(\x_i,\x_j;\y_k,\y_l) = \rho(\Sigma_{\x_i},\Sigma_{\x_j})+\rho(\Sigma_{\y_i},\Sigma_{\y_j})-\rho(\Sigma_{\x_i},\Sigma_{\y_j})-\rho(\Sigma_{\x_j},\Sigma_{\y_i})\ .
\]
Under the alternative hypothesis $\H^{\textnormal C}_1: \mmu_{\Sigma_\bP}\neq\mmu_{\Sigma_\bQ}$, we have that
\[
\widehat{\textnormal{PCD}}(X,Y;\rho)\xrightarrow{P} \textnormal{PCD}(\bP,\bQ,\kappa) > 0\ ,
\]
by Theorem~\ref{thm:larg_U}.

Under a uniformly random permutation $\mpi\in\mPi$, the two-sample $U$-statistic with permuted samples $X_{\mpi}$ and $Y_{\mpi}$ exhibit the convergence that 
\[
\widehat{\textnormal{PCD}}(X_{\mpi},Y_{\mpi};\rho)\xrightarrow{P}0\ ,
\]
based on Theorem~\ref{thm:6.1} as $n,m\rightarrow\infty$ with $n/m\rightarrow c\in(0,\infty)$. Consequently, the empirical adjusted $(1-\alpha)$-quantile $\tau_\alpha(Z,\{\mpi_{(r)}\}_{r=1}^R)$ (See Appendix~\ref{app:perm} with $\textnormal{STAT}(X,Y;\phi) = \widehat{\textnormal{PCD}}(X,Y;\rho)$) of PCD statistic also converges to $0$. This implies, by Slutsky’s theorem~\cite{Papoulis:Pillai2001},
\[ 
\Pr(\delta_\alpha=1)\equiv\Pr(\widehat{\textnormal{PCD}}(X,Y;\rho)>\tau_\alpha(Z,\{\mpi_{(r)}\}_{r=1}^R))\ \longrightarrow\ 1\ , 
\]
where the test $\delta_\alpha$ is the permutation procedure (see Eq.~\eqref{eq:test_pro} in Appendix~\ref{app:perm} with $\textnormal{STAT}(X,Y;\phi) = \widehat{\textnormal{PCD}}(X,Y;\rho)$).
\end{proof}

\subsection{Detailed Proofs of Corollary~\ref{cor:USAD_test}}\label{app:USAD_test}
We present the proofs of Corollary~\ref{cor:USAD_test} as follows
\begin{proof}
As shown in proofs of Theorems~\ref{thm:VD_test} and~\ref{thm:PCD_test}, under a uniformly random permutation $\mpi\in\mPi$, we have 
\[
\widehat{\textnormal{VD}}(X_{\mpi},Y_{\mpi};\ell)=(\widehat{V}(X_{\mpi};\ell)-\widehat{V}(Y_{\mpi};\ell))^2\xrightarrow{P}0\ ,
\]
and
\[
\widehat{\textnormal{PCD}}(X_{\mpi},Y_{\mpi};\rho)\xrightarrow{P}0\ ,
\]
which imply that
\[
\cT(X_{\mpi},Y_{\mpi};\ell,\rho)\equiv(\widehat{\textnormal{VD}}(X_{\mpi},Y_{\mpi};\ell),\widehat{\textnormal{PCD}}(X_{\mpi},Y_{\mpi};\rho))^{\top}\xrightarrow{P}(0,0)^{\top}\
\]
by Slutsky’s theorem~\cite{Papoulis:Pillai2001}.

Then, we have 
\[
\cT^{\rm A}(X_{\mpi},Y_{\mpi};\ell,\rho) = \cT(X_{\mpi},Y_{\mpi};\ell,\rho)^{\top}\widehat{\Sigma}^{-1}_{\textnormal{A}}\cT(X_{\mpi},Y_{\mpi};\ell,\rho)\xrightarrow{P}0\ ,
\]
where $\Sigma_{\textnormal{A}}$ is fixed in the testing procedure.

Consequently, the permutation distribution of $\cT^{\rm A}(X_{\mpi},Y_{\mpi};\ell,\rho)$ concentrates at zero, and the empirical adjusted $(1-\alpha)$-quantile
$\tau_\alpha\big(Z,\{\mpi_{(r)}\}_{r=1}^R\big)$ (see Appendix~\ref{app:perm}, with $\textnormal{STAT}(X,Y;\phi)=\cT^{\rm A}(X,Y;\ell,\rho)$) converges to $0$. Similarly, if \textbf{either or both} of the alternatives $\H^{\textnormal V}_1$ and $\H^{\textnormal C}_1$ hold, then at least one of VD or PCD converges to a positive constant, and thus
$\cT(X,Y;\ell,\rho)$ converges to a nonzero vector. It follows that the observed USAD value
$\cT^{\rm A}(X,Y;\ell,\rho)$ converges to a positive constant.

Therefore, by Slutsky’s theorem~\cite{Papoulis:Pillai2001},
\[ 
\Pr(\delta_\alpha=1)\equiv\Pr(\cT^{\rm A}(X,Y;\ell,\rho)>\tau_\alpha(Z,\{\mpi_{(r)}\}_{r=1}^R))\ \longrightarrow\ 1\ , 
\]
where the test $\delta_\alpha$ is the permutation procedure (see Eq.~\eqref{eq:test_pro} in Appendix~\ref{app:perm} with $\textnormal{STAT}(X,Y;\phi) = \cT^{\rm A}(X,Y;\ell,\rho)$).
\end{proof}
\section{Experimental Details} \label{appendix:exp}
\subsection{Aggregation Strategy}
DUAL~\cite{zhou2025dual} can be extended to aggregate heterogeneous statistics because it operates on their joint correlation structure, without requiring the components to share the same U-statistic form, scaling, or null distribution. In contrast, MMDAgg~\cite{Schrab:Kim:Albert:Laurent:Guedj:Gretton2023} relies on all components having identical U-statistic structures and comparable bootstrap quantiles, making it unsuitable for mixing statistics with different functional forms or null behaviours. MMDFuse~\cite{Biggs:Schrab:Gretton2023} restricts aggregation by requiring all inputs to follow the same kernel-based MMD formulation, so that its log-sum-exp fusion remains theoretically calibrated; as a result, it cannot accommodate statistics that differ in scale or structural definition. 
By standardizing each statistic through its empirical covariance under the null and combining them via a correlation-aware quadratic form, DUAL enables principled fusion of heterogeneous statistics that capture complementary aspects of distributional discrepancy.

\subsection{Inference Time Complexity}
For clarity, we first introduce the notations used in the time–complexity analysis.
Let $n$ denote the batch size of clean examples, i.e., $X = \{\x_j\}_{j=1}^n$, and let $m$ denote the batch size of test examples, i.e., $Y = \{\y_j\}_{j=1}^m$.
We use $K$ to denote the number of random permutations used in PCD, and $p$ to denote the dimension of the semantic feature.
We denote by $C_\ell$ and $C_\rho$ the computational cost of evaluating the kernel functions $\ell$ and $\rho$ once, respectively, and by $C_f$ the cost of extracting semantic features from the target classifier $f$. We denote by $B$ the number of bootstrap resampling repeats used to estimate the correlation $\widehat{\Sigma}_{\mathrm{A}}$, and by $R$ the number of repeats used in the permutation test.

\textbf{Time Complexity of VD Estimator.} For the clean batch $X$, evaluating the kernel over all diagonal and off-diagonal pairs of semantic features requires $\mathcal{O}(n^{2})$ operations. The test batch $Y$ contributes an additional $\mathcal{O}(m^{2})$ operations. With a per-evaluation cost of $C_\ell$, the total kernel computation cost is $\mathcal{O}((n^{2}+m^{2})C_\ell)$. Extracting semantic features from the classifier $f$ adds $\mathcal{O}((n+m)C_f)$, which is typically negligible compared with the quadratic kernel terms.
Overall, the time complexity of VD is
\[
\mathcal{O}\big((n^{2}+m^{2})C_\ell + (n+m)C_f\big).
\]

\textbf{Time Complexity of PCD Estimator.} For two batches $X$ and $Y$, each with $K$ random perturbations per example, extracting semantic features through the classifier $f$ requires $\mathcal{O}((n+m)K C_f).$ Computing the covariance matrix for each example involves $K$ outer-product operations in a $p$-dimensional feature space, resulting in a total cost of $\mathcal{O}((n+m)K p^2).$ After obtaining the covariance matrix for every example, the empirical PCD estimator in Eqn.~\eqref{eq:PCD_hat} requires evaluating the kernel $\rho$ over all pairs of covariances within and across the two batches. This involves $n(n-1)$, $m(m-1)$, and $nm$ kernel computations, yielding a total complexity of $\mathcal{O}((n+m)^{2} C_\rho).$ Overall, the time complexity of PCD is
\[
\mathcal{O}\big((n+m)(KC_f+Kp^2)+(n+m)^{2} C_\rho\big).
\]

\textbf{Time Complexity of USAD Estimator.} For estimating the correlation $\widehat{\Sigma}_{\mathrm{A}}$ (i.e., Eqn.~\eqref{eq:sigma_aggregate}) from $B$ bootstrap replicates incurs an offline cost of 
\[
\mathcal{O}\left(B\left((n^2+m^2)C_\ell + (n+m)^2 C_\rho + (n+m)(K C_f + Kp^2)\right)\right)\ ,
\]
which is dominated by computing VD and PCD $B$ times. Notably, this is a one-off preprocessing cost: once $\widehat{\Sigma}_{\mathrm{A}}$ is estimated, it is fixed and reused in all subsequent permutation tests. 

Given the precomputed $\widehat{\Sigma}_{\mathrm{A}}$, evaluating the aggregated test statistic requires computing VD and PCD only once for the test batch $Y$. Therefore, the inference-time complexity of USAD is
\[
\mathcal{O}\left((n^2+m^2)C_\ell + (n+m)^2 C_\rho + (n+m)(K C_f + Kp^2)\right),
\]
which corresponds exactly to the cost of evaluating VD and PCD on the observed data.

\textbf{Time Complexity of Permutation Test.} During the permutation test, the kernel inner products required by VD and PCD are already computed once when evaluating the statistics on the original batches. 
For each permutation, we simply rearrange the precomputed kernel matrices according to the permuted indices, and then aggregate the corresponding entries to obtain the permuted VD and PCD values. 
No kernel evaluations or covariance computations are repeated. 
Since each permutation only involves indexing and summation over existing matrices, the per-permutation cost is linear in the number of matrix entries.

Therefore, for $R$ permutations, the time complexity of the permutation test is 
$\mathcal{O}\bigl(R(n^2 + m^2)\bigr)$ for VD, 
$\mathcal{O}\bigl(R(n + m)^2\bigr)$ for PCD, 
and $\mathcal{O}\bigl(R(n + m)^2\bigr)$ for USAD, which is dominated by the cost of repeatedly aggregating entries from the precomputed kernel and covariance matrices.

In addition to the complexity analysis, we also report the actual elapsed time in \cref{app:computation_efficiency}.

\begin{table}[t]
\centering
\caption{Test power of dataset \textit{CIFAR-10} under five different attacks with different perturbation budget $\epsilon$. The given adversarial examples all share the same number of examples $|Y|=50$. The results are averaged over $1,000$ repetitions. The target model is ResNet-18 trained on \textit{CIFAR-10} dataset.}
\label{tab:cifar}
\resizebox{\textwidth}{!}{%
\begin{tabular}{llcccccccc}
\toprule
\textbf{Attack} & \textbf{Method} & \multicolumn{8}{c}{\textbf{$\epsilon$}} \\
\cmidrule(lr){3-10}
& & 1 & 2 & 3 & 4 & 5 & 6 & 7 & 8 \\
\midrule
\multirow{4}{*}{AA} 
& SAMMD & 0.160$\pm$0.012 & 0.554$\pm$0.011 & 0.871$\pm$0.005 & 0.986$\pm$0.003 & 0.998$\pm$0.001 & 0.996$\pm$0.002 & \textbf{1.000$\pm$0.000} & \textbf{1.000$\pm$0.000} \\
& PCD (ours) & 0.222$\pm$0.016 & 0.752$\pm$0.012 & 0.997$\pm$0.001 & \textbf{1.000$\pm$0.000} & \textbf{1.000$\pm$0.000} & \textbf{1.000$\pm$0.000} & \textbf{1.000$\pm$0.000} & \textbf{1.000$\pm$0.000} \\
& VD (ours) & 0.261$\pm$0.010 & \textbf{1.000$\pm$0.000} & \textbf{1.000$\pm$0.000} & \textbf{1.000$\pm$0.000} & \textbf{1.000$\pm$0.000} & \textbf{1.000$\pm$0.000} & \textbf{1.000$\pm$0.000} & \textbf{1.000$\pm$0.000} \\
& USAD (ours) & \textbf{0.298}$\pm$0.016 & \textbf{1.000$\pm$0.000} & \textbf{1.000$\pm$0.000} & \textbf{1.000$\pm$0.000} & \textbf{1.000$\pm$0.000} & \textbf{1.000$\pm$0.000} & \textbf{1.000$\pm$0.000} & \textbf{1.000$\pm$0.000} \\
\midrule
\multirow{4}{*}{BIM} 
& SAMMD & 0.148$\pm$0.010 & 0.250$\pm$0.013 & 0.577$\pm$0.011 & 0.792$\pm$0.011 & 0.894$\pm$0.008 & 0.926$\pm$0.008 & 0.973$\pm$0.005 & 0.980$\pm$0.004 \\
& PCD (ours) & 0.125$\pm$0.008 & 0.274$\pm$0.011 & 0.523$\pm$0.019 & 0.971$\pm$0.005 & \textbf{1.000$\pm$0.000} & \textbf{1.000$\pm$0.000} & \textbf{1.000$\pm$0.000} & \textbf{1.000$\pm$0.000} \\
& VD (ours) & 0.211$\pm$0.015 & 0.984$\pm$0.003 & \textbf{1.000$\pm$0.000} & \textbf{1.000$\pm$0.000} & \textbf{1.000$\pm$0.000} & \textbf{1.000$\pm$0.000} & \textbf{1.000$\pm$0.000} & \textbf{1.000$\pm$0.000} \\
& USAD (ours) & \textbf{0.219}$\pm$0.010 & \textbf{0.985}$\pm$0.004 & \textbf{1.000$\pm$0.000} & \textbf{1.000$\pm$0.000} & \textbf{1.000$\pm$0.000} & \textbf{1.000$\pm$0.000} & \textbf{1.000$\pm$0.000} & \textbf{1.000$\pm$0.000} \\
\midrule
\multirow{4}{*}{CW} 
& SAMMD & 0.147$\pm$0.011 & 0.139$\pm$0.006 & 0.300$\pm$0.019 & 0.478$\pm$0.013 & 0.697$\pm$0.013 & 0.849$\pm$0.011 & 0.902$\pm$0.009 & 0.948$\pm$0.005 \\
& PCD (ours) & 0.234$\pm$0.012 & 0.192$\pm$0.013 & 0.862$\pm$0.011 & \textbf{1.000$\pm$0.000} & 0.994$\pm$0.003 & \textbf{1.000$\pm$0.000} & \textbf{1.000$\pm$0.000} & \textbf{1.000$\pm$0.000} \\
& VD (ours) & 0.507$\pm$0.010 & 0.756$\pm$0.010 & \textbf{1.000$\pm$0.000} & \textbf{1.000$\pm$0.000} & \textbf{1.000$\pm$0.000} & \textbf{1.000$\pm$0.000} & \textbf{1.000$\pm$0.000} & \textbf{1.000$\pm$0.000} \\
& USAD (ours) & \textbf{0.528}$\pm$0.015 & \textbf{0.850}$\pm$0.010 & \textbf{1.000$\pm$0.000} & \textbf{1.000$\pm$0.000} & \textbf{1.000$\pm$0.000} & \textbf{1.000$\pm$0.000} & \textbf{1.000$\pm$0.000} & \textbf{1.000$\pm$0.000} \\
\midrule
\multirow{4}{*}{FGSM} 
& SAMMD & 0.190$\pm$0.012 & 0.226$\pm$0.015 & 0.408$\pm$0.019 & 0.682$\pm$0.009 & 0.899$\pm$0.012 & 0.971$\pm$0.005 & 0.997$\pm$0.002 & \textbf{1.000$\pm$0.000} \\
& PCD (ours) & 0.311$\pm$0.014 & 0.303$\pm$0.009 & 0.800$\pm$0.008 & 0.950$\pm$0.003 & \textbf{1.000$\pm$0.000} & \textbf{1.000$\pm$0.000} & \textbf{1.000$\pm$0.000} & \textbf{1.000$\pm$0.000} \\
& VD (ours) & 0.794$\pm$0.016 & 0.835$\pm$0.008 & 0.851$\pm$0.011 & 0.945$\pm$0.006 & 0.992$\pm$0.004 & 0.998$\pm$0.001 & 0.999$\pm$0.001 & \textbf{1.000$\pm$0.000} \\
& USAD (ours) & \textbf{0.810}$\pm$0.013 & \textbf{0.842}$\pm$0.011 & \textbf{0.858}$\pm$0.011 & \textbf{0.987}$\pm$0.002 & \textbf{1.000$\pm$0.000} & \textbf{1.000$\pm$0.000} & \textbf{1.000$\pm$0.000} & \textbf{1.000$\pm$0.000} \\
\midrule
\multirow{4}{*}{PGD} 
& SAMMD & 0.192$\pm$0.012 & 0.200$\pm$0.009 & 0.381$\pm$0.011 & 0.832$\pm$0.009 & 0.829$\pm$0.006 & 0.933$\pm$0.007 & 0.971$\pm$0.006 & 0.986$\pm$0.003 \\
& PCD (ours) & 0.251$\pm$0.012 & 0.257$\pm$0.015 & 0.924$\pm$0.008 & \textbf{1.000$\pm$0.000} & \textbf{1.000$\pm$0.000} & \textbf{1.000$\pm$0.000} & \textbf{1.000$\pm$0.000} & \textbf{1.000$\pm$0.000} \\
& VD (ours) & 0.653$\pm$0.015 & 0.687$\pm$0.018 & \textbf{1.000$\pm$0.000} & \textbf{1.000$\pm$0.000} & \textbf{1.000$\pm$0.000} & \textbf{1.000$\pm$0.000} & \textbf{1.000$\pm$0.000} & \textbf{1.000$\pm$0.000} \\
& USAD (ours) & \textbf{0.663}$\pm$0.014 & \textbf{0.925}$\pm$0.004 & \textbf{1.000$\pm$0.000} & \textbf{1.000$\pm$0.000} & \textbf{1.000$\pm$0.000} & \textbf{1.000$\pm$0.000} & \textbf{1.000$\pm$0.000} & \textbf{1.000$\pm$0.000} \\
\bottomrule
\end{tabular}%
}
\end{table}

\subsection{Implementation Details.}
\label{A: implementation details}
\textbf{Computational Resources.} \label{resources}
We implement all methods using Python 3.9.18 with PyTorch 2.7 and CUDA 12.4 on two platforms. One platform is an NVIDIA RTX 4090 GPU PC with PyTorch framework. Another platform is a High-performance Computer cluster with several NVIDIA A100 GPUs with Pytorch framework. The memory of two platforms are both 64 GB. The storage of disk of two platforms are both over 4 TB. The \textit{CIFAR-10} dataset and \textit{ImageNet-1K} dataset can be downloaded via torchvision or from their official websites. Since our proposed methods do not require training, we directly utilize the full CIFAR-10 test set (10,000 images) and the ImageNet-1K validation set (50,000 images) for evaluation. We employ pre-trained ResNet-18, WideResNet-28, WideResNet-70, ResNet-50, and ViT-B-16 models with weights obtained from torchvision or downloaded checkpoints (will be provided on our GitHub repository after reviewing period) for feature extraction and classification. 

\textbf{Experimental Setup for Datasets.} We employ two primary datasets: \textit{CIFAR-10} \citep{cifar} and \textit{ImageNet-1K} \citep{deng2009imagenet}. For clean examples, we independently and randomly sample reference examples from each dataset. For adversarial examples in CIFAR-10, we apply each adversarial attack to the entire CIFAR-10 test set (10,000 images) to generate 10,000 adversarial examples. For ImageNet-1K, following \citet{Gao:Liu:Zhang:Han:Liu:Niu:Sugiyama2021}, we independently and randomly select 500 clean examples from the full ImageNet-1K validation set (50,000 images), then apply each adversarial attack to these selected images to generate 500 adversarial examples. For statistical power experiments, we independently and randomly select $|Y|=50$ adversarial examples for testing (except when explicitly varying the number of examples). To assess type-I error rates, we partition both the CIFAR-10 test set and ImageNet-1K validation set into two equal subsets. We designate one subset as the reference set and the other as the input set, then independently and randomly select equal numbers of examples from both subsets to evaluate whether detection methods incorrectly classify clean examples as adversarial.  

\textbf{Experimental Setup for Baselines.} For the single-statistic SAD method SAMMD, we adopt all default experimental settings from \citet{Gao:Liu:Zhang:Han:Liu:Niu:Sugiyama2021}. For the aggregated-statistic two-sample testing methods MMDAgg, MMD-FUSE, and MMD-DUAL, we follow the default experimental settings from \citet{zhou2025dual}, with one modification: we set the number of aggregations to 2 to ensure a fair comparison with our method. While our method can be extended to kernel aggregation, such extensions are beyond the scope of this work on adversarial detection.

\textbf{Experimental Setup for Our Methods.} For PCD, we first apply principal component analysis (PCA) for dimensionality reduction, where the target dimension is automatically determined based on the original feature size. We set the Gaussian perturbation mean to $0$ and standard deviation to $1/255$, with the number of perturbations fixed at $200$ across all experiments. For both PCD and VD, we set the bandwidth parameters as $b_{\text{PCD}} = b_{\text{VD}} = 4\sqrt{2}$. For USAD, we use bandwidth $b_{\text{USAD}} = 13\sqrt{2}$. In the permutation testing procedure, we set the significance level at $\alpha = 0.05$ and the number of permutations to $100$. All reported results are computed over $10$ independent experimental trials with different random seeds, where each trial conducts $100$ sampling iterations to estimate the mean test power. Thus, every result is averaged over $1{,}000$ repetitions in total.

\textbf{Illustrations of  Kernel Types.} 
For VD, we use the Gaussian kernel 
\[
\ell(x, y) = \exp\!\left(-\frac{\|x-y\|^{2}}{b^{2}}\right),
\]
where the bandwidth $b$ is selected using the median heuristic~\cite{Schrab:Kim:Guedj:Gretton2022, Biggs:Schrab:Gretton2023} computed from the clean feature set.

For PCD, we use the log-RBF kernel applied to covariance matrices,
\[
\rho_{\mathrm{log}}(\Sigma_x, \Sigma_y)
= \exp\!\left(-\frac{\|\log \Sigma_x - \log \Sigma_y\|_{F}^{2}}{b^{2}}\right),
\]
or the Gaussian kernel defined directly on covariance matrices,
\[
\rho_{\mathrm{gau}}(\Sigma_x, \Sigma_y)
= \exp\!\left(-\frac{\|\Sigma_x - \Sigma_y\|_{F}^{2}}{b^{2}}\right),
\]
where the bandwidth $b$ is selected using the median heuristic computed from the clean feature set.

\textbf{Implementation Details of Adversarial Attacks (Including Adaptive Attack).} In our experiments, we detect AEs that \emph{successfully} fool the threat model, and all implementation details are designed around this principle. Specifically, we use a diverse set of attack methods under both $\ell_\infty$-norm and $\ell_2$-norm to comprehensively evaluate the detection ability of USAD, including \emph{AutoAttack} (AA) \cite{croce2020reliable}, \emph{basic iterative method} (BIM) \cite{kurakin2017adversarial}, \emph{Carlini \& Wagner} (CW) attack \cite{carlini2017towards}, \emph{fast gradient sign method} (FGSM) \cite{goodfellow2015explaining} and \emph{projected gradient descent} (PGD) \cite{madry2018towards}.
For $\ell_\infty$-norm attacks, following \citet{Gao:Liu:Zhang:Han:Liu:Niu:Sugiyama2021}, we set the maximum allowed perturbation budget $\epsilon$ to range from $1/255$ to $8/255$.
For $\ell_2$-norm attacks, to ensure successful attacks against the threat model, we increase $\epsilon$ over a set of discrete values from $48/255$ to $104/255$, with successive values differing by $8/255$ (i.e., $48/255$, $56/255$, $64/255$, $72/255$, $80/255$, $88/255$, $96/255$ and $104/255$).
For both $\ell_\infty$-norm and $\ell_2$-norm attacks, we set the step size to be $\epsilon/5$.
For all iterative attacks (i.e., except for FGSM), we set the iteration number to 5.

\textbf{Adaptive Attack.} 
To evaluate robustness under adaptive attacks, we follow the strategy of \citet{Gao:Liu:Zhang:Han:Liu:Niu:Sugiyama2021} and generate adversarial examples on CIFAR-10 and ImageNet using a PGD-based white-box attack, where both the classifier and our detector are fully accessible to the attacker. Given the trained classifier $f: \mathcal{X} \to \mathcal{C}$ and a dataset 
$D = \{(\x_i, c_i)\}_{i=1}^{N}$ with $\x_i \in \mathcal{X} \subseteq \mathbb{R}^d$ and ground-truth label $c_i \in \mathcal{C}$, the adaptive adversarial examples
$\tilde{D} = \{\x_i+\mzeta_i^\star\}_{i=1}^{N}$ are produced by solving
\begin{equation}
\begin{aligned}
\{\mzeta_i^\star\}_{i=1}^{N}
= \operatorname*{arg\,max}_{\{\mzeta_i\}_{i=1}^{N}}
\Bigg[
\sum_{i=1}^{N}\mathcal{L}\bigl(f(\x_i+\mzeta_i),c_i\bigr)
-\mathrm{STAT}\bigl(D,\{\x_i+\mzeta_i\}_{i=1}^{N};\phi\bigr)
\Bigg],\\
\textnormal{s.t.}\quad
\|\mzeta_i\|_{\infty}\le \epsilon,\quad \forall i\in[N].
\end{aligned}
\nonumber
\label{eq:adaptive_attack}
\end{equation}
where $\mathcal{L}$ is the attack loss function (e.g., cross-entropy), and 
$\mathrm{STAT}(\cdot)$ denotes the detection statistic used by our method, which can be VD, PCD, or USAD with the corresponding kernel function $\phi$. The associated kernels $\phi \in \{\ell, \rho, \{\ell, \rho\}\}$ correspond respectively to VD, PCD, and USAD.

\begin{table}[t]
\centering
\caption{Test power of dataset \textit{ImageNet-1K} under five different attacks with different perturbation budget $\epsilon$. The given adversarial examples all share the same number of examples $|Y|=50$. The results are averaged over $1,000$ repetitions. The target model is ResNet-50 trained on \textit{ImageNet-1K} dataset.}
\label{tab:imagenet}
\resizebox{\textwidth}{!}{%
\begin{tabular}{llcccccccc}
\toprule
\textbf{Attack} & \textbf{Method} & \multicolumn{8}{c}{\textbf{$\epsilon$}} \\
\cmidrule(lr){3-10}
& & 1 & 2 & 3 & 4 & 5 & 6 & 7 & 8 \\
\midrule
\multirow{4}{*}{AA} 
& SAMMD & 0.107$\pm$0.010 & 0.310$\pm$0.017 & 0.765$\pm$0.028 & 0.953$\pm$0.007 & 0.998$\pm$0.001 & \textbf{1.000$\pm$0.000} & \textbf{1.000$\pm$0.000} & \textbf{1.000$\pm$0.000} \\
& PCD (ours) & 0.569$\pm$0.010 & 0.980$\pm$0.004 & \textbf{1.000$\pm$0.000} & \textbf{1.000$\pm$0.000} & \textbf{1.000$\pm$0.000} & \textbf{1.000$\pm$0.000} & \textbf{1.000$\pm$0.000} & \textbf{1.000$\pm$0.000} \\
& VD (ours) & 0.577$\pm$0.035 & 0.984$\pm$0.005 & \textbf{1.000$\pm$0.000} & \textbf{1.000$\pm$0.000} & \textbf{1.000$\pm$0.000} & \textbf{1.000$\pm$0.000} & \textbf{1.000$\pm$0.000} & \textbf{1.000$\pm$0.000} \\
& USAD (ours) & \textbf{0.748}$\pm$0.019 & \textbf{0.998}$\pm$0.002 & \textbf{1.000$\pm$0.000} & \textbf{1.000$\pm$0.000} & \textbf{1.000$\pm$0.000} & \textbf{1.000$\pm$0.000} & \textbf{1.000$\pm$0.000} & \textbf{1.000$\pm$0.000} \\
\midrule
\multirow{4}{*}{BIM} 
& SAMMD & 0.102$\pm$0.012 & 0.158$\pm$0.019 & 0.305$\pm$0.028 & 0.467$\pm$0.029 & 0.674$\pm$0.025 & 0.820$\pm$0.013 & 0.887$\pm$0.016 & 0.970$\pm$0.005 \\
& PCD (ours) & 0.189$\pm$0.016 & 0.505$\pm$0.014 & 0.851$\pm$0.008 & 0.953$\pm$0.006 & 0.987$\pm$0.003 & 0.984$\pm$0.003 & 0.998$\pm$0.001 & 0.993$\pm$0.002 \\
& VD (ours) & 0.194$\pm$0.014 & 0.467$\pm$0.040 & 0.854$\pm$0.016 & 0.940$\pm$0.011 & 0.969$\pm$0.011 & 0.986$\pm$0.005 & 0.995$\pm$0.003 & 0.999$\pm$0.001 \\
& USAD (ours) & \textbf{0.196}$\pm$0.010 & \textbf{0.634}$\pm$0.030 & \textbf{0.961}$\pm$0.008 & \textbf{0.989}$\pm$0.004 & \textbf{0.998}$\pm$0.001 & \textbf{1.000$\pm$0.000} & \textbf{1.000$\pm$0.000} & \textbf{1.000$\pm$0.000} \\
\midrule
\multirow{4}{*}{CW} 
& SAMMD & 0.055$\pm$0.007 & 0.094$\pm$0.007 & 0.262$\pm$0.018 & 0.491$\pm$0.028 & 0.586$\pm$0.025 & 0.775$\pm$0.024 & 0.825$\pm$0.015 & 0.952$\pm$0.011 \\
& PCD (ours) & 0.206$\pm$0.012 & 0.822$\pm$0.009 & 0.841$\pm$0.011 & 0.989$\pm$0.003 & 0.986$\pm$0.003 & 0.988$\pm$0.003 & 0.990$\pm$0.002 & 0.980$\pm$0.003 \\
& VD (ours) & 0.596$\pm$0.022 & 0.972$\pm$0.007 & \textbf{1.000$\pm$0.000} & \textbf{1.000$\pm$0.000} & \textbf{1.000$\pm$0.000} & \textbf{1.000$\pm$0.000} & \textbf{1.000$\pm$0.000} & \textbf{1.000$\pm$0.000} \\
& USAD (ours) & \textbf{0.640}$\pm$0.024 & \textbf{0.980}$\pm$0.003 & \textbf{1.000$\pm$0.000} & \textbf{1.000$\pm$0.000} & \textbf{1.000$\pm$0.000} & \textbf{1.000$\pm$0.000} & \textbf{1.000$\pm$0.000} & \textbf{1.000$\pm$0.000} \\
\midrule
\multirow{4}{*}{FGSM} 
& SAMMD & 0.051$\pm$0.012 & 0.075$\pm$0.014 & 0.187$\pm$0.025 & 0.402$\pm$0.033 & 0.727$\pm$0.025 & 0.940$\pm$0.011 & 0.997$\pm$0.002 & \textbf{1.000}$\pm$0.000 \\
& PCD (ours) & 0.092$\pm$0.008 & 0.210$\pm$0.011 & 0.525$\pm$0.035 & 0.931$\pm$0.011 & 0.999$\pm$0.001 & \textbf{1.000}$\pm$0.000 & \textbf{1.000}$\pm$0.000 & \textbf{1.000}$\pm$0.000 \\
& VD (ours) & 0.043$\pm$0.005 & 0.127$\pm$0.012 & 0.254$\pm$0.015 & 0.516$\pm$0.021 & 0.726$\pm$0.017 & 0.924$\pm$0.011 & 0.967$\pm$0.006 & 0.997$\pm$0.001 \\
& USAD (ours) & \textbf{0.135}$\pm$0.014 & \textbf{0.312}$\pm$0.014 & \textbf{0.620}$\pm$0.025 & \textbf{0.966}$\pm$0.008 & \textbf{1.000}$\pm$0.000 & \textbf{1.000}$\pm$0.000 & \textbf{1.000}$\pm$0.000 & \textbf{1.000}$\pm$0.000 \\
\midrule
\multirow{4}{*}{PGD} 
& SAMMD & 0.090$\pm$0.015 & 0.169$\pm$0.012 & 0.274$\pm$0.031 & 0.450$\pm$0.027 & 0.674$\pm$0.024 & 0.875$\pm$0.010 & 0.957$\pm$0.008 & 0.975$\pm$0.005 \\
& PCD (ours) & 0.152$\pm$0.010 & 0.755$\pm$0.017 & 0.826$\pm$0.017 & 0.939$\pm$0.006 & 0.986$\pm$0.002 & 0.961$\pm$0.004 & 0.992$\pm$0.003 & 0.995$\pm$0.002 \\
& VD (ours) & 0.074$\pm$0.012 & 0.335$\pm$0.020 & 0.799$\pm$0.013 & 0.919$\pm$0.013 & 0.972$\pm$0.007 & 0.987$\pm$0.003 & 0.997$\pm$0.001 & \textbf{1.000$\pm$0.000} \\
& USAD (ours) & \textbf{0.217$\pm$0.013} & \textbf{0.813$\pm$0.016} & \textbf{0.933$\pm$0.008} & \textbf{0.994$\pm$0.003} & \textbf{1.000$\pm$0.000} & \textbf{0.999$\pm$0.001} & \textbf{1.000$\pm$0.000} & \textbf{1.000$\pm$0.000} \\
\bottomrule
\end{tabular}%
}
\end{table}

\begin{table}[t]
\centering
\caption{Test power of dataset \textit{ImageNet-1K} under five different attacks with different perturbation budget $\epsilon$. The given adversarial examples all share the same number of examples $|Y|=50$. The results are averaged over $1,000$ repetitions. The target model is ViT-B-16 trained on \textit{ImageNet-1K} dataset.}
\label{tab:imagenet-vit}
\resizebox{\textwidth}{!}{%
\begin{tabular}{llcccccccc}
\toprule
\textbf{Attack} & \textbf{Method} & \multicolumn{8}{c}{\textbf{$\epsilon$}} \\
\cmidrule(lr){3-10}
& & 1 & 2 & 3 & 4 & 5 & 6 & 7 & 8 \\
\midrule
\multirow{4}{*}{AA} 
& SAMMD & 0.691$\pm$0.052 & \textbf{1.000$\pm$0.000} & \textbf{1.000$\pm$0.000} & \textbf{1.000$\pm$0.000} & \textbf{1.000$\pm$0.000} & \textbf{1.000$\pm$0.000} & \textbf{1.000$\pm$0.000} & \textbf{1.000$\pm$0.000} \\
& PCD (ours) & \textbf{1.000$\pm$0.000} & \textbf{1.000$\pm$0.000} & \textbf{1.000$\pm$0.000} & \textbf{1.000$\pm$0.000} & \textbf{1.000$\pm$0.000} & \textbf{1.000$\pm$0.000} & \textbf{1.000$\pm$0.000} & \textbf{1.000$\pm$0.000} \\
& VD (ours) & \textbf{1.000$\pm$0.000} & \textbf{1.000$\pm$0.000} & \textbf{1.000$\pm$0.000} & \textbf{1.000$\pm$0.000} & \textbf{1.000$\pm$0.000} & \textbf{1.000$\pm$0.000} & \textbf{1.000$\pm$0.000} & \textbf{1.000$\pm$0.000} \\
& USAD (ours) & \textbf{1.000$\pm$0.000} & \textbf{1.000$\pm$0.000} & \textbf{1.000$\pm$0.000} & \textbf{1.000$\pm$0.000} & \textbf{1.000$\pm$0.000} & \textbf{1.000$\pm$0.000} & \textbf{1.000$\pm$0.000} & \textbf{1.000$\pm$0.000} \\
\midrule
\multirow{4}{*}{BIM} 
& SAMMD & 0.758$\pm$0.110 & 0.846$\pm$0.025 & 0.973$\pm$0.012 & \textbf{1.000$\pm$0.000} & \textbf{1.000$\pm$0.000} & \textbf{1.000$\pm$0.000} & \textbf{1.000$\pm$0.000} & \textbf{1.000$\pm$0.000} \\
& PCD (ours) & \textbf{1.000$\pm$0.000} & \textbf{1.000$\pm$0.000} & \textbf{1.000$\pm$0.000} & \textbf{1.000$\pm$0.000} & \textbf{1.000$\pm$0.000} & \textbf{1.000$\pm$0.000} & \textbf{1.000$\pm$0.000} & \textbf{1.000$\pm$0.000} \\
& VD (ours) & 0.593$\pm$0.030 & \textbf{1.000$\pm$0.000} & \textbf{1.000$\pm$0.000} & \textbf{1.000$\pm$0.000} & \textbf{1.000$\pm$0.000} & \textbf{1.000$\pm$0.000} & \textbf{1.000$\pm$0.000} & \textbf{1.000$\pm$0.000} \\
& USAD (ours) & \textbf{1.000$\pm$0.000} & \textbf{1.000$\pm$0.000} & \textbf{1.000$\pm$0.000} & \textbf{1.000$\pm$0.000} & \textbf{1.000$\pm$0.000} & \textbf{1.000$\pm$0.000} & \textbf{1.000$\pm$0.000} & \textbf{1.000$\pm$0.000} \\
\midrule
\multirow{4}{*}{CW} 
& SAMMD & 0.472$\pm$0.066 & 0.997$\pm$0.003 & \textbf{1.000$\pm$0.000} & \textbf{1.000$\pm$0.000} & \textbf{1.000$\pm$0.000} & \textbf{1.000$\pm$0.000} & \textbf{1.000$\pm$0.000} & \textbf{1.000$\pm$0.000} \\
& PCD (ours) & \textbf{1.000$\pm$0.000} & \textbf{1.000$\pm$0.000} & \textbf{1.000$\pm$0.000} & \textbf{1.000$\pm$0.000} & \textbf{1.000$\pm$0.000} & \textbf{1.000$\pm$0.000} & \textbf{1.000$\pm$0.000} & \textbf{1.000$\pm$0.000} \\
& VD (ours) & 0.999$\pm$0.001 & \textbf{1.000$\pm$0.000} & \textbf{1.000$\pm$0.000} & \textbf{1.000$\pm$0.000} & \textbf{1.000$\pm$0.000} & \textbf{1.000$\pm$0.000} & \textbf{1.000$\pm$0.000} & \textbf{1.000$\pm$0.000} \\
& USAD (ours) & \textbf{1.000$\pm$0.000} & \textbf{1.000$\pm$0.000} & \textbf{1.000$\pm$0.000} & \textbf{1.000$\pm$0.000} & \textbf{1.000$\pm$0.000} & \textbf{1.000$\pm$0.000} & \textbf{1.000$\pm$0.000} & \textbf{1.000$\pm$0.000} \\
\midrule
\multirow{4}{*}{FGSM} 
& SAMMD & 0.732$\pm$0.065 & \textbf{1.000$\pm$0.000} & \textbf{1.000$\pm$0.000} & \textbf{1.000$\pm$0.000} & \textbf{1.000$\pm$0.000} & \textbf{1.000$\pm$0.000} & \textbf{1.000$\pm$0.000} & \textbf{1.000$\pm$0.000} \\
& PCD (ours) & \textbf{1.000$\pm$0.000} & \textbf{1.000$\pm$0.000} & \textbf{1.000$\pm$0.000} & \textbf{1.000$\pm$0.000} & \textbf{1.000$\pm$0.000} & \textbf{1.000$\pm$0.000} & \textbf{1.000$\pm$0.000} & \textbf{1.000$\pm$0.000} \\
& VD (ours) & 0.533$\pm$0.039 & 0.765$\pm$0.040 & 0.811$\pm$0.019 & 0.766$\pm$0.026 & 0.721$\pm$0.033 & 0.655$\pm$0.037 & 0.704$\pm$0.023 & 0.643$\pm$0.019 \\
& USAD (ours) & \textbf{1.000$\pm$0.000} & \textbf{1.000$\pm$0.000} & \textbf{1.000$\pm$0.000} & \textbf{1.000$\pm$0.000} & \textbf{1.000$\pm$0.000} & \textbf{1.000$\pm$0.000} & \textbf{1.000$\pm$0.000} & \textbf{1.000$\pm$0.000} \\
\midrule
\multirow{4}{*}{PGD} 
& SAMMD & 0.545$\pm$0.074 & 0.606$\pm$0.039 & 0.819$\pm$0.023 & 0.949$\pm$0.014 & 0.991$\pm$0.003 & \textbf{1.000$\pm$0.000} & \textbf{1.000$\pm$0.000} & \textbf{1.000$\pm$0.000} \\
& PCD (ours) & \textbf{1.000$\pm$0.000} & \textbf{1.000$\pm$0.000} & \textbf{1.000$\pm$0.000} & \textbf{1.000$\pm$0.000} & \textbf{1.000$\pm$0.000} & \textbf{1.000$\pm$0.000} & \textbf{1.000$\pm$0.000} & \textbf{1.000$\pm$0.000} \\
& VD (ours) & 0.495$\pm$0.042 & 0.988$\pm$0.004 & \textbf{1.000$\pm$0.000} & \textbf{1.000$\pm$0.000} & \textbf{1.000$\pm$0.000} & \textbf{1.000$\pm$0.000} & \textbf{1.000$\pm$0.000} & \textbf{1.000$\pm$0.000} \\
& USAD (ours) & \textbf{1.000$\pm$0.000} & \textbf{1.000$\pm$0.000} & \textbf{1.000$\pm$0.000} & \textbf{1.000$\pm$0.000} & \textbf{1.000$\pm$0.000} & \textbf{1.000$\pm$0.000} & \textbf{1.000$\pm$0.000} & \textbf{1.000$\pm$0.000} \\
\bottomrule
\end{tabular}%
}
\end{table}

\begin{table}[t]
\centering
\caption{Test power of dataset \textit{CIFAR-10} under different attacks with perturbation budget $\epsilon=4/255$ across different target models WideResNet-28 (WRN28) or WideResNet-70 (WRN70), when the feature extractor are the same as target model. Both of WRN28 and WRN70 are trained on \textit{CIFAR-10} dataset. The given adversarial examples all share the same number of examples $|Y|=50$. The results are averaged over $1,000$ repetitions.}
\label{tab:threat}
\resizebox{0.9\textwidth}{!}{%
\begin{tabular}{llccccc}
\toprule
\textbf{Target Model} & \textbf{Detection Method} & \textbf{AA} & \textbf{BIM} & \textbf{CW} & \textbf{FGSM} & \textbf{PGD} \\
\midrule
\multirow{7}{*}{WRN28} 
& SAMMD     & 0.748 $\pm$ 0.029 & 0.287 $\pm$ 0.023 & 0.356 $\pm$ 0.013 & 0.497 $\pm$ 0.026 & 0.367 $\pm$ 0.024 \\
&  \cellcolor{orL}{PCD (ours)}    &  \cellcolor{orL}{0.997 $\pm$ 0.002} &  \cellcolor{orL}{0.613 $\pm$ 0.016} &  \cellcolor{orL}{\textbf{0.910 $\pm$ 0.017}} &  \cellcolor{orL}{\textbf{0.999 $\pm$ 0.001}} &  \cellcolor{orL}{\textbf{0.937 $\pm$ 0.014}} \\
& \cellcolor{goL}{VD (ours)}     & \cellcolor{goL}{\textbf{1.000 $\pm$ 0.000}} & \cellcolor{goL}{\textbf{0.737 $\pm$ 0.055}} & \cellcolor{goL}{0.484 $\pm$ 0.105} & \cellcolor{goL}{\textbf{0.999 $\pm$ 0.001}} & \cellcolor{goL}{0.321 $\pm$ 0.094} \\
\cmidrule{2-7}
& MMDAgg    & 0.992 $\pm$ 0.002 & 0.628 $\pm$ 0.015 & 0.610 $\pm$ 0.010 & 0.735 $\pm$ 0.015 & 0.682 $\pm$ 0.012 \\
& MMD-FUSE  & \textbf{1.000 $\pm$ 0.000} & 0.789 $\pm$ 0.013 & 0.742 $\pm$ 0.013 & 0.967 $\pm$ 0.006 & 0.748 $\pm$ 0.012 \\
& MMD-DUAL  & \textbf{1.000 $\pm$ 0.000} & 0.523 $\pm$ 0.018 & 0.938 $\pm$ 0.007 & 0.978 $\pm$ 0.005 & 0.882 $\pm$ 0.010 \\
& \cellcolor{grL}{USAD (ours)}   & \cellcolor{grL}{\textbf{1.000 $\pm$ 0.000}} & \cellcolor{grL}{\textbf{0.860 $\pm$ 0.028}} & \cellcolor{grL}{\textbf{0.953 $\pm$ 0.011}} & \cellcolor{grL}{\textbf{1.000 $\pm$ 0.000}} & \cellcolor{grL}{\textbf{0.959 $\pm$ 0.005}} \\
\midrule
\multirow{7}{*}{WRN70} 
& SAMMD     & 0.943 $\pm$ 0.011 & 0.416 $\pm$ 0.012 & 0.486 $\pm$ 0.010 & 0.500 $\pm$ 0.011 & 0.485 $\pm$ 0.020 \\
&  \cellcolor{orL}{PCD (ours)}    &  \cellcolor{orL}{\textbf{1.000 $\pm$ 0.000}} &  \cellcolor{orL}{0.773 $\pm$ 0.016} &  \cellcolor{orL}{0.930 $\pm$ 0.006} &  \cellcolor{orL}{\textbf{1.000 $\pm$ 0.000}} &  \cellcolor{orL}{\textbf{0.953 $\pm$ 0.007}} \\
& \cellcolor{goL}{VD (ours)}     & \cellcolor{goL}{\textbf{1.000 $\pm$ 0.000}} & \cellcolor{goL}{\textbf{0.938 $\pm$ 0.010}} & \cellcolor{goL}{\textbf{0.983 $\pm$ 0.002}} & \cellcolor{goL}{0.994 $\pm$ 0.002} & \cellcolor{goL}{0.844 $\pm$ 0.011} \\
\cmidrule{2-7}
& MMDAgg    & 0.976 $\pm$ 0.003 & 0.577 $\pm$ 0.010 & 0.623 $\pm$ 0.012 & 0.686 $\pm$ 0.010 & 0.634 $\pm$ 0.012 \\
& MMD-FUSE  & \textbf{1.000 $\pm$ 0.000} & 0.700 $\pm$ 0.019 & 0.778 $\pm$ 0.013 & 0.915 $\pm$ 0.011 & 0.698 $\pm$ 0.014 \\
& MMD-DUAL  & \textbf{1.000 $\pm$ 0.000} & 0.749 $\pm$ 0.016 & 0.837 $\pm$ 0.011 & 0.857 $\pm$ 0.009 & 0.735 $\pm$ 0.016 \\
& \cellcolor{grL}{USAD (ours)}   & \cellcolor{grL}{\textbf{1.000 $\pm$ 0.000}} & \cellcolor{grL}{\textbf{0.956 $\pm$ 0.006}} & \cellcolor{grL}{\textbf{0.992 $\pm$ 0.002}} & \cellcolor{grL}{\textbf{1.000 $\pm$ 0.000}} & \cellcolor{grL}{\textbf{0.982 $\pm$ 0.005}} \\
\bottomrule
\end{tabular}%
}
\end{table}

\begin{table}[t]
\centering
\caption{Test power of dataset \textit{CIFAR-10} under different transfer attacks with perturbation budget $\epsilon=4/255$ across different target models, while the feature extractor ResNet-18 keeps the same, which is trained on \textit{CIFAR-10} dataset. The given adversarial examples all share the same number of examples $|Y|=50$. The results are averaged over $1,000$ repetitions.}
\resizebox{0.9\textwidth}{!}{%
\begin{tabular}{llccccc}
\toprule
\textbf{Target Model} & \textbf{Detection Method} & \textbf{AA} & \textbf{BIM} & \textbf{CW} & \textbf{FGSM} & \textbf{PGD} \\
\midrule
\multirow{7}{*}{Res18} 
& SAMMD     & $0.995 \pm 0.002$ & $0.689 \pm 0.017$ & $0.434 \pm 0.012$ & $0.610 \pm 0.011$ & $0.562 \pm 0.019$ \\
& \cellcolor{orL}{PCD (ours)}   & \cellcolor{orL}{$\mathbf{1.000 \pm 0.000}$} & \cellcolor{orL}{$\mathbf{1.000 \pm 0.000}$} & \cellcolor{orL}{$\mathbf{1.000 \pm 0.000}$} & \cellcolor{orL}{$\mathbf{1.000 \pm 0.000}$} & \cellcolor{orL}{$\mathbf{1.000 \pm 0.000}$} \\
& \cellcolor{goL}{VD (ours)}    & \cellcolor{goL}{$\mathbf{1.000 \pm 0.000}$} & \cellcolor{goL}{$\mathbf{1.000 \pm 0.000}$} & \cellcolor{goL}{$\mathbf{1.000 \pm 0.000}$} & \cellcolor{goL}{$0.975 \pm 0.003$} & \cellcolor{goL}{$\mathbf{1.000 \pm 0.000}$} \\
\cmidrule(lr){2-7}
& MMDAgg    & $\mathbf{1.000 \pm 0.000}$ & $0.978 \pm 0.003$ & $0.788 \pm 0.012$ & $0.582 \pm 0.021$ & $0.871 \pm 0.010$ \\
& MMD-FUSE  & $\mathbf{1.000 \pm 0.000}$ & $\mathbf{1.000 \pm 0.000}$ & $\mathbf{1.000 \pm 0.000}$ & $0.894 \pm 0.011$ & $\mathbf{1.000 \pm 0.000}$ \\
& MMD-DUAL  & $\mathbf{1.000 \pm 0.000}$ & $\mathbf{1.000 \pm 0.000}$ & $\mathbf{1.000 \pm 0.000}$ & $0.952 \pm 0.005$ & $\mathbf{1.000 \pm 0.000}$ \\
& \cellcolor{grL}{USAD (ours)}  & \cellcolor{grL}{$\mathbf{1.000 \pm 0.000}$} & \cellcolor{grL}{$\mathbf{1.000 \pm 0.000}$} & \cellcolor{grL}{$\mathbf{1.000 \pm 0.000}$} & \cellcolor{grL}{$\mathbf{1.000 \pm 0.000}$} & \cellcolor{grL}{$\mathbf{1.000 \pm 0.000}$} \\
\midrule
\multirow{7}{*}{WRN28} 
& SAMMD     & $0.387 \pm 0.018$ & $0.193 \pm 0.010$ & $0.227 \pm 0.015$ & $0.769 \pm 0.010$ & $0.319 \pm 0.015$ \\
& \cellcolor{orL}{PCD (ours)}   & \cellcolor{orL}{$\mathbf{1.000 \pm 0.000}$} & \cellcolor{orL}{$\mathbf{1.000 \pm 0.000}$} & \cellcolor{orL}{$\mathbf{1.000 \pm 0.000}$} & \cellcolor{orL}{$\mathbf{1.000 \pm 0.000}$} & \cellcolor{orL}{$\mathbf{1.000 \pm 0.000}$} \\
& \cellcolor{goL}{VD (ours)}    & \cellcolor{goL}{$\mathbf{1.000 \pm 0.000}$} & \cellcolor{goL}{$0.993 \pm 0.001$} & \cellcolor{goL}{$0.999 \pm 0.001$} & \cellcolor{goL}{$\mathbf{1.000 \pm 0.000}$} & \cellcolor{goL}{$\mathbf{1.000 \pm 0.000}$} \\
\cmidrule(lr){2-7}
& MMDAgg    & $0.459 \pm 0.022$ & $0.245 \pm 0.011$ & $0.295 \pm 0.017$ & $0.812 \pm 0.020$ & $0.522 \pm 0.014$ \\
& MMD-FUSE  & $0.844 \pm 0.011$ & $0.625 \pm 0.007$ & $0.794 \pm 0.012$ & $0.992 \pm 0.004$ & $0.993 \pm 0.002$ \\
& MMD-DUAL  & $0.937 \pm 0.006$ & $0.764 \pm 0.007$ & $0.900 \pm 0.009$ & $0.998 \pm 0.002$ & $0.998 \pm 0.002$ \\
& \cellcolor{grL}{USAD (ours)} & \cellcolor{grL}{$\mathbf{1.000 \pm 0.000}$} & \cellcolor{grL}{$\mathbf{1.000 \pm 0.000}$} & \cellcolor{grL}{$\mathbf{1.000 \pm 0.000}$} & \cellcolor{grL}{$\mathbf{1.000 \pm 0.000}$} & \cellcolor{grL}{$\mathbf{1.000 \pm 0.000}$} \\
\midrule
\multirow{7}{*}{WRN70} 
& SAMMD     & $0.398 \pm 0.009$ & $0.207 \pm 0.015$ & $0.227 \pm 0.010$ & $0.792 \pm 0.015$ & $0.199 \pm 0.013$ \\
& \cellcolor{orL}{PCD (ours)}   & \cellcolor{orL}{$\mathbf{1.000 \pm 0.000}$} & \cellcolor{orL}{$\mathbf{1.000 \pm 0.000}$} & \cellcolor{orL}{$\mathbf{1.000 \pm 0.000}$} & \cellcolor{orL}{$\mathbf{1.000 \pm 0.000}$} & \cellcolor{orL}{$\mathbf{1.000 \pm 0.000}$} \\
& \cellcolor{goL}{VD (ours)}    & \cellcolor{goL}{$\mathbf{1.000 \pm 0.000}$} & \cellcolor{goL}{$\mathbf{1.000 \pm 0.000}$} & \cellcolor{goL}{$0.999 \pm 0.001$} & \cellcolor{goL}{$\mathbf{1.000 \pm 0.000}$} & \cellcolor{goL}{$0.999 \pm 0.001$} \\
\cmidrule(lr){2-7}
& MMDAgg    & $0.524 \pm 0.016$ & $0.289 \pm 0.010$ & $0.352 \pm 0.014$ & $0.764 \pm 0.015$ & $0.324 \pm 0.015$ \\
& MMD-FUSE  & $0.970 \pm 0.004$ & $0.810 \pm 0.011$ & $0.847 \pm 0.008$ & $0.982 \pm 0.005$ & $0.795 \pm 0.014$ \\
& MMD-DUAL  & $0.988 \pm 0.003$ & $0.912 \pm 0.007$ & $0.911 \pm 0.009$ & $0.996 \pm 0.002$ & $0.882 \pm 0.008$ \\
& \cellcolor{grL}{USAD (ours)}  & \cellcolor{grL}{$\mathbf{1.000 \pm 0.000}$} & \cellcolor{grL}{$\mathbf{1.000 \pm 0.000}$} & \cellcolor{grL}{$\mathbf{1.000 \pm 0.000}$} & \cellcolor{grL}{$\mathbf{1.000 \pm 0.000}$} & \cellcolor{grL}{$\mathbf{1.000 \pm 0.000}$} \\
\midrule
\multirow{7}{*}{SWIN} 
& SAMMD     & $0.365 \pm 0.015$ & $0.306 \pm 0.017$ & $0.279 \pm 0.016$ & $0.600 \pm 0.015$ & $0.403 \pm 0.015$ \\
& \cellcolor{orL}{PCD (ours)}   & \cellcolor{orL}{$\mathbf{1.000 \pm 0.000}$} & \cellcolor{orL}{$\mathbf{1.000 \pm 0.000}$} & \cellcolor{orL}{$\mathbf{1.000 \pm 0.000}$} & \cellcolor{orL}{$\mathbf{1.000 \pm 0.000}$} & \cellcolor{orL}{$\mathbf{1.000 \pm 0.000}$} \\
& \cellcolor{goL}{VD (ours)}    & \cellcolor{goL}{$\mathbf{1.000 \pm 0.000}$} & \cellcolor{goL}{$\mathbf{1.000 \pm 0.000}$} & \cellcolor{goL}{$\mathbf{1.000 \pm 0.000}$} & \cellcolor{goL}{$\mathbf{1.000 \pm 0.000}$} & \cellcolor{goL}{$\mathbf{1.000 \pm 0.000}$} \\
\cmidrule(lr){2-7}
& MMDAgg    & $0.630 \pm 0.010$ & $0.535 \pm 0.021$ & $0.515 \pm 0.007$ & $0.730 \pm 0.015$ & $0.420 \pm 0.010$ \\
& MMD-FUSE  & $0.999 \pm 0.001$ & $0.992 \pm 0.003$ & $0.987 \pm 0.002$ & $\mathbf{1.000 \pm 0.000}$ & $0.894 \pm 0.010$ \\
& MMD-DUAL  & $\mathbf{1.000 \pm 0.000}$ & $0.995 \pm 0.002$ & $0.994 \pm 0.003$ & $\mathbf{1.000 \pm 0.000}$ & $0.946 \pm 0.007$ \\
& \cellcolor{grL}{USAD (ours)}  & \cellcolor{grL}{$\mathbf{1.000 \pm 0.000}$} & \cellcolor{grL}{$\mathbf{1.000 \pm 0.000}$} & \cellcolor{grL}{$\mathbf{1.000 \pm 0.000}$} & \cellcolor{grL}{$\mathbf{1.000 \pm 0.000}$} & \cellcolor{grL}{$\mathbf{1.000 \pm 0.000}$} \\
\bottomrule
\end{tabular}%
}
\label{tab:transfer}
\end{table}

\begin{table}[t]
\centering
\caption{Running Time (seconds) per test trial. We record the total inference time for four methods across 100 trials, then we divide each total time by 100. For PCD and USAD, it is also possible to pre-compute the distance matrix between each covariance matrix generated by CEs offline, which can make the computation inference time at least twice faster than current implementation.}
\label{tab:running_times}
\resizebox{0.8\textwidth}{!}{%
\begin{tabular}{l|ccccccc}
\toprule
Dataset & MMDAgg & MMD-FUSE & MMD-DUAL & SAMMD & PCD & VD & USAD \\
\midrule
\textit{CIFAR-10}    & 0.072 & 0.047 & 0.047 & 0.024 & 0.117 & 0.016 & 0.132 \\
\textit{ImageNet-1K} & 0.051 & 0.044 & 0.037 & 0.097 & 0.759 & 0.015 & 0.772 \\
\bottomrule
\end{tabular}%
}
\end{table}

\begin{table}[t]
\centering
\caption{Detection power against $\ell_1$-APGD (varying norm $N$) and Sparse-PGD (varying sparsity $k$) on CIFAR-10. All results are averaged over multiple trials and reported as mean $\pm$ standard deviation. Type-I error is controlled in all settings.}
\label{tab:sparse_attacks}
\small
\resizebox{\textwidth}{!}{%
\begin{tabular}{l|ccc|cccc}
\toprule
Method & $\ell_1$-N10 & $\ell_1$-N20 & $\ell_1$-N30 & SPGD-$k$10 & SPGD-$k$20 & SPGD-$k$50 & SPGD-$k$100 \\
\midrule
PCD  & $0.231 \pm 0.009$ & $0.476 \pm 0.009$ & $0.719 \pm 0.011$ & $0.991 \pm 0.003$ & $1.000 \pm 0.000$ & $1.000 \pm 0.000$ & $1.000 \pm 0.000$ \\
VD   & $0.977 \pm 0.005$ & $1.000 \pm 0.000$ & $1.000 \pm 0.000$ & $1.000 \pm 0.000$ & $1.000 \pm 0.000$ & $1.000 \pm 0.000$ & $1.000 \pm 0.000$ \\
USAD & $0.981 \pm 0.004$ & $1.000 \pm 0.000$ & $1.000 \pm 0.000$ & $1.000 \pm 0.000$ & $1.000 \pm 0.000$ & $1.000 \pm 0.000$ & $1.000 \pm 0.000$ \\
\bottomrule
\end{tabular}%
}
\end{table}

\begin{table}[t]
\centering
\caption{Adversarial video detection on UCF101 against the sparse video attack of~\cite{wei2019sparse}. We evaluate on the first two classes (alphabetical), comparing 40-frame clean and adversarial clips from the same class via a permutation test. Type-I error is controlled.}
\label{tab:video_detection}
\small
\begin{tabular}{lccc}
\toprule
Class & Total $N$ & Avg.\ Detection & Type-I Error \\
\midrule
\textit{ApplyEyeMakeup} + \textit{Archery} & 65 & $81.5\%$ & $0.0\%$ \\
\bottomrule
\end{tabular}
\end{table}

\section{Additional Experiments}\label{app:more_exp}

This section complements the results reported in~\cref{sec: experiment} with six additional groups of experiments.
In summary, we (i) evaluate the proposed detector under $\ell_2$ norm attacks and compare it against baselines (\cref{app:l2_attack}); (ii) report detailed numerical results across perturbation budgets and additional architectures (\cref{app:detailed_exp}); (iii) evaluate robustness to transfer attacks from surrogate target models (\cref{app:transfer_attack}); and (iv) empirically verify Type-I error control (\cref{app:type_1_control}); (v) measure inference-time computational efficiency against competing detectors (Section~\ref{app:computation_efficiency}); (vi) evaluate the proposed detector against sparse $\ell_0$ (Sparse-PGD) and $\ell_1$ adversarial attacks (Section~\ref{app:sparse_attacks}); and (vii) extend our evaluation to adversarial video detection on UCF101 (Section~\ref{app:video_detection}). Unless stated otherwise, we follow the hypothesis-testing protocol with the main text: significance level $\alpha = 0.05$ and adversarial batch size $|Y| = 50$.

\subsection{Detection under $L_2$ Norm Attacks}\label{app:l2_attack}
The main text (\cref{sec: experiment}) primarily reports the detection performance on $\ell_\infty$-bounded adversarial perturbations, following the previous studies~\cite{Gao:Liu:Zhang:Han:Liu:Niu:Sugiyama2021}.
To check that our detectors are not tied to this particular norm, we also evaluate our method under $\ell_2$-bounded attacks.

Figure~\ref{exp:l2} reports test power for PCD, VD, and USAD, together with the SAMMD baseline, against AA, CW, FGSM, and PGD attacks targeting a ResNet-18 trained on \emph{CIFAR-10}, constrained in $\ell_2$ norm. 
We fix the adversarial batch size to $|Y|=50$ and vary the perturbation budget $\epsilon$, averaging over $1{,}000$ repetitions. 

Across all four attacks, the uncertainty-aware statistics maintain high detection power. 
USAD and PCD achieve near-perfect detection once $\epsilon \ge 64$, while SAMMD remains noticeably weaker at the same budgets. 
For AA and FGSM (panels (a) and (c)), USAD reaches test power essentially equal to one already around $\epsilon \approx 48$, substantially outperforming SAMMD at small perturbation levels. 
For PGD (panel (d)), both USAD and VD exceed $0.95$ test power at $\epsilon \approx 64$, whereas SAMMD requires considerably larger perturbations to approach this regime. 
These results indicate that the statistics underlying USAD capture distributional shifts that manifest under both $\ell_\infty$- and $\ell_2$-bounded attacks.

\subsection{Detailed Numerical Results} \label{app:detailed_exp} Tables \ref{tab:cifar} and \ref{tab:imagenet} present the detailed numerical results corresponding to Figure \ref{exp:main} in the main paper, providing exact test power values with standard deviations for all evaluated methods across different perturbation budgets $\epsilon \in \{1, 2, \ldots, 8\}$. Table \ref{tab:cifar} corresponds to Figure \ref{exp:main} $(a\!-\!d)$ and reports the detection performance of our proposed methods (PCD, VD, and USAD) against the baseline SAMMD on adversarial examples generated against a ResNet-18 model trained on the \textit{CIFAR-10} dataset. 
Table \ref{tab:imagenet} corresponds to Figure \ref{exp:main} $(e\!-\!h)$ and presents results on adversarial examples generated against a ResNet-50 model trained on the \textit{ImageNet-1K} dataset. The superior performance of our methods is even more pronounced on this larger-scale, more complex dataset.
While the main paper visualizes a subset of attacks for clarity—specifically AA, CW, FGSM, and PGD for CIFAR-10 (Figure \ref{exp:main} $a\!-\!d$), and AA, BIM, CW, and PGD for ImageNet-1K (Figure \ref{exp:main} $e\!-\!h$)—these tables include all five attacks (AA, BIM, CW, FGSM, and PGD) for comprehensive evaluation with exact values. These results indicate that our methods maintain their superiority over the baseline, consistent with the observed trends across all other attacks and datasets.
The standard deviations (computed over 1,000 repetitions with $|Y| = 50$ adversarial examples per trial) are consistently small, particularly for larger perturbation budgets where test power approaches 1.0, indicating the stability and reliability of our detection methods. The boldface values in the tables highlight the best-performing method(s) for each attack and perturbation budget, clearly demonstrating that our proposed methods achieve state-of-the-art detection performance across the vast majority of experimental configurations.

\subsection{Robustness to Transfer Attacks and Additional Architectures} \label{app:transfer_attack}

We next investigate robustness to transfer attacks, where adversarial examples are generated on a surrogate target model but detected using features from a distinct threat classifier, e.g., the AEs are generated with respect to ViT-B/16, while the threat model is a ResNet-18.
This setting reflects a realistic threat model in which the defender is unaware of the attacker’s architecture.

\textbf{Transfer Attacks.} Table \ref{tab:transfer} evaluates the detection performance under transfer attack scenarios, where adversarial examples are generated by attacking different target models (ResNet-18 (Res18), WideResNet-28 (WRN28), WideResNet-70 (WRN-70), and Shifted window (Swin) Transformer) but detected using features extracted from Res18. This setting tests the transferability and generalizability of our detection methods across different model architectures, which is crucial for practical deployment where the exact attack generation process is unknown. Our proposed methods demonstrate exceptional robustness to transfer attacks across all target models and attack types. Notably, both PCD and USAD achieve perfect detection ($1.000 \pm 0.000$) across all five attacks (AA, BIM, CW, FGSM, PGD) and all four target models (Res18, WRN28, WRN70, Swin), indicating near-universal detection capability regardless of the architectural characteristics of the model used to generate adversarial examples. VD similarly achieves perfect or near-perfect detection in the vast majority of scenarios, with only minor degradation on CW attacks against WRN28 and WRN70 ($0.999 \pm 0.001$) and BIM attacks against WRN28 ($0.993 \pm 0.001$), which remains substantially superior to baseline methods. In contrast, the baseline SAMMD exhibits significant vulnerability to transfer attacks, with detection rates often dropping below $0.4$ (AA on WRN28, WRN70 and SWIN) when it performs $0.995$ on Res18 originally. This performance gap suggests that single-statistic method based on maximum mean discrepancy alone may be overly sensitive to the specific characteristics of the attack generation process and fail to capture the fundamental distributional shifts induced by adversarial perturbations across different model architectures.
The aggregated-statistic baselines (MMDAgg, MMD-FUSE, MMD-DUAL) show improved robustness compared to SAMMD, which aligns with the theoretical motivation for aggregation—combining multiple test statistics can provide more comprehensive distributional characterization and reduce sensitivity to specific attack characteristics. However, these methods still fall short of our proposed approaches. For example, on WRN28, MMDAgg achieves only $0.459$ for AA attacks and $0.245$ for BIM attacks, while MMD-FUSE and MMD-DUAL perform better but still exhibit substantial degradation on certain attack-model combinations (e.g., MMD-FUSE: $0.794$ for CW on WRN28, MMD-DUAL: $0.764$ for BIM on WRN28). Our USAD method, which also employs aggregation but with our novel permutation-based and variational distance-based test statistics, achieves perfect detection across all scenarios, demonstrating that the choice of base statistics is as critical as the aggregation strategy itself.

\textbf{Additional Architectures.} To demonstrate the broader applicability of our detection framework beyond ResNet architectures, we evaluate our methods on vision transformers and wide residual networks. Table \ref{tab:imagenet-vit} presents results on the ViT-B-16 model trained on ImageNet-1K, where our methods achieve perfect or near-perfect detection across all attacks and perturbation budgets. Notably, all three of our proposed methods (PCD, VD, USAD) attain $1.000 \pm 0.000$ test power starting from $\epsilon = 1$ for AA, BIM, CW, and PGD attacks, while SAMMD exhibits substantially lower detection rates at small perturbation budgets (e.g., 0.691 for AA at $\epsilon = 1$, 0.758 for BIM at $\epsilon = 1$). The only exception is VD on FGSM attacks, where detection rates remain moderate across different $\epsilon$ values, suggesting potential architectural-specific sensitivities of certain base statistics to particular attack types on transformer models. Table \ref{tab:threat} further validates our framework on WRN28 and WRN70 architectures trained on CIFAR-10, where adversarial examples are generated by attacking the same model used for feature extraction. Our methods maintain strong performance with USAD achieving detection rates exceeding 0.860 across all attacks on both architectures, while SAMMD struggles with rates often below 0.500. Interestingly, the performance patterns differ across architectures: VD achieves perfect detection on WRN28 for AA attacks but exhibits more variable performance on other attacks, whereas on WRN70, both PCD and VD achieve perfect detection for AA while showing complementary strengths on other attacks. This architectural diversity in the results underscores the importance of aggregation in USAD, which consistently delivers robust detection by combining multiple complementary statistics that capture different aspects of adversarial perturbations across varied model architectures, from convolutional networks to transformers.

\subsection{Type-I Error Control Check} \label{app:type_1_control}
Controlling type-I error (false alarm rate) is crucial for practical deployment. Figure~\ref{exp:typeI} shows that all SAD and MMD testing methods maintain Type-I error rates close to the theoretical significance level of $\alpha = 0.05$ across different number of examples in the test set. Our methods (PCD, VD, and USAD) also exhibit stable type-I error rates, confirming the significance of all the statistical adversarial detection methods.

\subsection{Computational Efficiency.} \label{app:computation_efficiency}
Table \ref{tab:running_times} reports the inference time per test trial averaged over 100 runs, with all experiments conducted using reference example size $|X|$ equal to input example size $|Y| = 50$ on the platform we mentioned in \ref{resources}. Our VD method demonstrates superior computational efficiency across both datasets, achieving $0.016\text{s}$ on CIFAR-10 and $0.015\text{s}$ on ImageNet-1K—representing 33\% and 85\% speedups over the baseline SAMMD, respectively. Notably, VD's inference time remains nearly constant across datasets, indicating excellent scalability to higher-dimensional data. While SAMMD shows moderate computational cost on CIFAR-10 $(0.024\text{s})$, it increases substantially to $0.097\text{s}$ on ImageNet-1K, suggesting quadratic or higher complexity with respect to feature dimensionality. In contrast, PCD and USAD exhibit significantly longer inference times ($0.117$s and $0.132$s on CIFAR-10; $0.759$s and $0.772$s on ImageNet-1K), due to their reliance on computationally intensive distance matrix operations or optimization procedures. The minimal overhead of VD, combined with its strong detection performance, makes it particularly attractive for real-time or large-scale deployment scenarios where computational budget is constrained.

\subsection{$\ell_0$ (Sparse-PGD) and $\ell_1$ Attacks.}
\label{app:sparse_attacks}

While the main paper focuses on dense $\ell_\infty$ and $\ell_2$ adversarial perturbations, sparse attacks induce qualitatively different geometric distortions and thus warrant separate empirical verification. To assess the robustness of our detection framework against this regime, we additionally evaluate against $\ell_0$-bounded perturbations generated by Sparse-PGD (SPGD)~\citep{zhong2025sparse} and $\ell_1$-bounded perturbations generated by $\ell_1$-APGD. For SPGD, we vary the sparsity budget $k \in \{10, 20, 50, 100\}$ (i.e., the number of perturbed pixels), and for $\ell_1$-APGD we sweep the perturbation norm $N \in \{10, 20, 30\}$. All experiments follow the same evaluation protocol as the main paper and are verified to maintain a valid type-I error control.

Table~\ref{tab:sparse_attacks} reports the detection power of PCD, VD, and USAD across these settings. Our methods, particularly VD and USAD, remain highly effective against sparse adversarial perturbations: both attain detection power of $1.000$ across nearly all configurations, with only marginal degradation under the most restrictive $\ell_1$ budget ($\ell_1$-N10). PCD shows a sharper sensitivity to the perturbation budget under $\ell_1$ attacks, but recovers to near-perfect detection as the budget grows. These results confirm that the proposed framework generalizes beyond dense $\ell_\infty$/$\ell_2$ threat models and is robust to the geometrically distinct distortions induced by sparse adversarial attacks.

\subsection{Extension to Adversarial Video Detection on UCF101.}
\label{app:video_detection}

We also include a preliminary study that explores whether our sample-wise distributional detection framework can be applied beyond static images. Video data provides a natural setting for such an extension, since a short clip contains multiple temporally related frames that can be treated as a sample set. However, we emphasize that our current method does not explicitly model temporal correlations across frames. Therefore, this experiment should be viewed as a proof-of-concept rather than a dedicated adversarial video detection method. Specifically, adversarial video attacks often introduce perturbations across multiple frames, which may induce a distributional discrepancy between clean and adversarial frame sets. This makes video detection a potentially interesting future direction for multi-sample detectors such as ours, although fully exploiting temporal structure would require additional modeling beyond the present framework.

\textbf{Setup.} We attack 65 sample videos drawn from the first two classes in alphabetical order (\textit{ApplyEyeMakeup} and \textit{Archery}) of the UCF101 action recognition benchmark, using the sparse video attack proposed by~\citet{wei2019sparse}. For each successfully attacked video, we extract a 40-frame clip from both its clean and adversarial versions, and treat the frames within each clip as a sample set. Detection is performed by running a permutation test between the clean and adversarial frame sets drawn from the same class, yielding a binary detection decision per video. The reported detection rate (test power) is averaged over all successfully attacked videos, and type-I error is computed under the null using clean-vs-clean clip pairs.

\textbf{Results.} Table~\ref{tab:video_detection} reports the result of this preliminary evaluation. Our method achieves a detection rate of $81.5\%$ while maintaining $0.0\%$ type-I error in this small-scale setting. These results suggest that sample-wise distributional testing may transfer to video-domain adversarial detection when frame-level perturbations accumulate into a detectable distributional shift. At the same time, this experiment does not establish a complete adversarial video detection framework, as it does not compare against specialized video detection baselines or explicitly account for temporal dependencies. Developing detectors that combine distributional testing with temporal modeling is an interesting direction for future work.


\end{document}